\def\ps@pprintTitle{%
 \let\@oddhead\@empty
 \let\@evenhead\@empty
 \def\@oddfoot{\hfill \textit{Published in Artificial Intelligence,
Volume 299,
2021,
103523,
ISSN 0004-3702 }}%
 \let\@evenfoot\@oddfoot}
\journal{Artificial Intelligence}
\newenvironment{LIST} 
   {\begin{itemize}[parsep=1ex,itemsep=0mm,topsep=1ex,leftmargin=1.5em]}
   {\end{itemize}}
\newcommand{\la}{\langle}
\newcommand{\ra}{\rangle}
\newcommand{\RAE}{\textsf{RAE}\xspace}
\newcommand{\srpe}{\textsf{SeRPE}\xspace}
\newcommand{\APE}{\textsf{RAE}\xspace}
\newcommand{\PLAN}{\textsf{UPOM}\xspace}
\newcommand{\UPOM}{\textsf{UPOM}\xspace}
\newcommand{\Progress}{\textsf{Progress}\xspace}
\newcommand{\Retry}{\textsf{Retry}\xspace}
\newcommand{\tried}{\textit{tried}\xspace}
\newcommand{\step}{\textit{i}\xspace}
\newcommand{\failed}{\textsf{failed}\xspace}
\newcommand{\R}{\mathbb{R}}
\newcommand{\sv}[1]{{\normalfont\textsf{#1}}} 
\newcommand{\set}[1]{\textit{#1}}
\newcommand{\range}{\textrm{Range}}
\newcommand{\argmax}{\textrm{argmax}}
\newcommand{\ot}{\leftarrow}
\newtheorem{theorem}{Theorem}
\theoremstyle{definition}   % for examples, use roman typeface rather than italic --Dana
\newtheorem{example}{Example}
\newcommand{\theoremlike}[2]{
	\newaliascnt{#1}{theorem}
	\newtheorem{#1}[#1]{#2}
	\aliascntresetthe{#1}
}
\theoremstyle{definition}   % for the following environments, use Roman typeface
\else\begin{definitionbody}[\text{#1}]\pushQED{\qed}\fi}%
\popQED\end{definitionbody}}
\newcommand{\nullset}{\varnothing}
\newcommand{\M}{\mathcal{M}\xspace}
\newcommand{\A}{\mathcal{A}\xspace}
\newcommand{\Tasks}{\mathcal{T}\xspace} % can't call this \T, since we use \T for tabbing.
\newlength{\tabsize}
\newcommand{\T}{\hspace*{\tabsize}}
\newenvironment{pcode}{%
	\setcounter{pline}{0}%
	\begin{tabular}[b]{@{\kern .8em}r@{~}l@{}l@{\,}}
		% Workaround to ensure \fbox{\begin{pcode} ...} uses the correct width.
		&&\\[-5.5ex] 
	}{%
	\end{tabular}%
}
\newcommand{\phead}[1]{
	\\ \multicolumn{3}{@{}l}{#1}}
\newcounter{pline}
\newcommand{\1}{\\&}
\newcommand{\pkey}[1]{\\#1:&}
\newcommand{\failure}{\textit{\textsf{retrial-failure}}\xspace}
\def\oldfootnotesize{\@setsize\oldfootnotesize{9pt}\viiipt\@viiipt} % 8 point on 9
\newcommand{\stack}{\ensuremath{\sigma}\xspace}
\newcommand{\Next}{\textsf{Next}\xspace}
\newcommand{\CR}{{\small \sv{Fetch}}\xspace}
\newcommand{\SD}{{\small \sv{Nav}}\xspace}
\newcommand{\SR}{{\small \sv{S\&R}}\xspace}
\newcommand{\EE}{{\small \sv{Explore}}\xspace}
\newcommand{\OF}{{\small \sv{Deliver}}\xspace}
\definecolor{codegreen}{rgb}{0,0.6,0}
\definecolor{codegray}{rgb}{0.5,0.5,0.5}
\definecolor{codepurple}{rgb}{0.58,0,0.82}
\definecolor{backcolour}{rgb}{1,1,1}
\lstdefinestyle{mystyle}{
    backgroundcolor=\color{backcolour},   
    commentstyle=\color{codegreen},
    keywordstyle=\color{magenta},
    numberstyle=\tiny\color{codegray},
    stringstyle=\color{codepurple},
    basicstyle=\ttfamily\footnotesize,
    breakatwhitespace=false,         
    breaklines=true,                 
    captionpos=b,                    
    keepspaces=true,                 
    numbers=left,                    
    numbersep=5pt,                  
    showspaces=false,                
    showstringspaces=false,
    showtabs=false,                  
    tabsize=2
}
\begin{document}

\begin{frontmatter}

\title{
Deliberative Acting, Online Planning and Learning with Hierarchical Operational Models
}

\author[umd]{Sunandita Patra\corref{mycorrespondingauthor}}
\ead{patras@umd.edu}
\cortext[mycorrespondingauthor]{Corresponding author}

\author[umd]{James Mason}
\ead{james.mason@jpl.nasa.gov}

\author[laas]{Malik Ghallab}
\ead{malik@laas.fr}

\author[umd]{Dana Nau}
\ead{nau@umd.edu}

\author[fbk]{Paolo Traverso}
\ead{traverso@fbk.eu}

% The following line is rather long, but both CS and ISR want credit for publications --Dana
\address[umd]{Dept.\ of Computer Science and Inst.\ for Systems Research, University of Maryland, College Park, Maryland, USA}
\address[laas]{LAAS-CNRS, Toulouse, France}
\address[fbk]{FBK-ICT, Povo-Trento, Italy}

% For versions before January 29, 2021, go to  SVN v1148

\begin{abstract}
{\sloppy
In AI research, synthesizing a plan of action has typically used \textit{descriptive models} of the actions that abstractly specify \textit{what} might happen as a result of an action, and are tailored for efficiently computing state transitions. However, executing the planned actions has needed \textit{operational} models,
in which rich computational control structures and closed-loop online decision-making are used
to specify \textit{how} to perform an action in a nondeterministic  execution context, react to events and adapt to an unfolding situation. 
\emph{Deliberative actors}, which integrate acting and planning, have typically needed to use both of these models together---which causes problems when attempting to develop the different models, verify their consistency, and smoothly interleave acting and planning.

As an alternative, we define and implement an integrated acting and planning system in which both planning and acting use the same operational models. These rely on hierarchical task-oriented \textit{refinement methods} offering rich control structures.
The acting component, called Reactive Acting Engine (\RAE), is inspired by the well-known PRS system. At each decision step, \RAE can get advice from a planner for a near-optimal choice with respect to an utility function. The anytime planner uses a UCT-like  Monte Carlo Tree Search procedure, called \UPOM, 
whose rollouts are simulations of the actor's operational models. We also present learning strategies for use with \RAE and \UPOM that acquire, from online acting experiences and/or simulated planning results, a mapping from decision contexts to method instances as well as a heuristic function to guide \PLAN.
We demonstrate the asymptotic convergence of \PLAN towards  optimal methods  in static domains, and show experimentally that \UPOM and the learning strategies significantly improve the acting efficiency and robustness.}

\end{abstract}

\begin{keyword}
Acting and Planning \sep Operational Models \sep Hierarchical Actor \sep Real Time Planning \sep Supervised Learning \sep Planning and Learning
%\MSC[2010] 00-01\sep  99-00
\end{keyword}

\end{frontmatter}

%\linenumbers

\begin{footnotesize}
%\tableofcontents
\end{footnotesize}

\sloppy  % for line justification w/o overflows

\section{Introduction}
\label{sec:intro}

Consider a system required to autonomously perform a diversity of tasks in varying dynamic environments. Such a system, referred to as an ``\textit{actor}'' (following \cite{ghallab2014actors}), needs to be reactive and to act in a purposeful deliberative way. This requirements are usually addressed by combining a reactive approach and a plan-based approach using, respectively, \textit{operational models} and \textit{descriptive models} of actions.

Descriptive models specify \textit{what} might happen as a result of an action.
They are tailored to efficiently search a huge state space by representing state transitions with abstract preconditions and effects. This representation, inherited from the early STRIPS system \cite{fikes1971strips}, refined into SAS$^+$ \cite{Backstrom:1995tc,jonsson1998state}, the PDDL  languages \cite{mcdermott20001998, fox2003pddl2.1, fox2006modeling, haslum2019PDDL} and their nondeterministic and probabilistic variants, e.g.,  PPDDL \cite{Younes:04} and RDDL \cite{Sanner:10}, is used by most planning algorithms. 

Operational models  specify \textit{how} to perform an action. They are designed to take into account an elaborate context about ongoing activities, react to events and adapt to an unfolding situation. Using several computational paradigms with rich control structures (e.g., procedures, rules, automata, Petri nets), they allow for  closed-loop online decision-making. They have been designed into a diversity of languages, such as PRS, RAPS, TDL, Plexil, Golex, or RMPL  (see the survey \cite[Section 4]{Ingrand:2014ue}).

The combination of descriptive and operational models raises several problems.
First, it is difficult to take into account with two separate models the highly interconnected reasoning required between planning and deliberative acting. Second, the mapping between descriptive and operational models is very complex. A guarantee of the  consistency of this mapping is required in safety-critical applications, such as self-driving cars \cite{henaff2019model}, collaborative robots working directly with humans \cite{veloso2015cobots}, or
virtual coaching systems to help patients with chronic diseases \cite{ramchandani2019virtual}. However, to verify the consistency between the two different models is difficult (e.g., see the work on formal verification of operational models such as PRS-like procedures, 
using model checking and theorem proving \cite{deSilva:2018vd,cimatti03}). Finally, modeling is always a costly bottleneck; reducing the corresponding effort is beneficial in most applications.

Therefore, it is desirable  to have a single representation for both acting and planning. If such a representation were solely descriptive, it wouldn't provide sufficient functionality. Instead, the planner needs to be able to reason directly with the actor's operational models.

This paper describes an integrated planning and acting system in which both planning and acting use the actor's operational models.\footnote{Prior results about this approach have been presented in \cite{ghallab2016automated, patra2018ape,patra2019acting,patra2020integrating}. The last paragraph of 
\autoref{sec:rw} describes what the current paper adds to that work.
}
The acting component, called \textit{Refinement Acting Engine} (\RAE), is inspired by the well-known PRS system \cite{Ingrand:1996uj}.
\RAE uses a hierarchical task-oriented operational representation
in which an expressive, general-purpose language offers rich programming control structures for online decision-making. A collection of {\em hierarchical refinement methods} describes alternative ways to handle \emph{tasks} and react to \emph{events}. A method can be any complex algorithm, including 
\emph{subtasks}, which need to be refined recursively, and primitive \textit{actions}, which query and change the world \textit{nondeterministically}.
We assume that methods are manually specified (approaches for learning method bodies are discussed in \autoref{sec:discussion}).

Rather than behaving purely reactively, \RAE interacts with a
planner. To choose how best to refine tasks, the planner uses a Monte Carlo Tree Search procedure, called \PLAN, which assesses the utility of possible alternatives and finds an approximately optimal one.
Two utility functions are proposed reflecting the acting  \textit{efficiency} (reciprocal of the cost) and \textit{robustness} (success ratio).
Planning is performed using the constructs and steps as of the operational model, except that methods and actions are executed in a simulated world rather than the real one. When a refinement method contains an action, \PLAN takes samples of its possible outcomes, using either a domain-dependent generative simulator, when available, or a probability distribution of its effects.

\PLAN is used by \RAE as a progressive deepening, receding-horizon anytime planner. Its scalability requires heuristics. However, operational models lead to quite complex  search spaces not easily amenable to the usual techniques for domain-independent heuristics. 
Fortunately, this issue can be addressed with a learning approach to acquire a mapping from decision contexts 
to method instances; this mapping provides the base case of the anytime algorithm. Learning can also be used to acquire a heuristic function to prune deep Monte Carlo rollouts. We use  an off-the-shelf  learning library with appropriate adaptation for our experiments. Our contribution is not on the learning techniques {\em per se}, but on the integration of learning, planning, and acting. 
The learning algorithms do not provide the operational models needed by the planner, but they do several other useful things. They speed up the online planning search allowing for an anytime procedure. Both the planner and the actor can find better solutions, thereby improving the actor's performance.
The human domain author can write refinement methods without needing to specify a preference ordering in which the planner or actor should try instances of those methods.

Following a discussion of the state of the art in \autoref{sec:rw}, \autoref{sec:operational} describes the actor's architecture and the hierarchical operational model representation.
In Sections \ref{sec:rae}, \ref{sec:plan}, and \ref{sec:integration}, respectively, we present the acting component \RAE, the planning component \PLAN, and, the learning procedures for \RAE and \PLAN.
We provide an experimental evaluation of the approach in \autoref{sec:implementation}, followed by a discussion and a conclusion. The planner's asymptotic convergence to optimal choices is detailed in \ref{app:mapping}. The operational model for a search and rescue domain is described in \ref{app:rescue}. A table of notation is presented in \ref{app:notation}. The code for the algorithms and test domains is available online.

\section{Related Work}
\label{sec:rw}

To our knowledge, no previous approach has proposed the integration of planning, acting and learning directly with operational models.
In this section, we first discuss the relations with systems for acting, including those approaches that provide some (limited) deliberation mechanism. 
We then discuss the main differences of our approach with systems for on-line acting and planning, such as RMPL and Behaviour Trees. 
We continue the section by comparing our work with HTNs  
and with different approaches based on MDP and Monte Carlo search. 
We discuss the relation with work on integrating planning and execution in robotics,
the work on approaches based on temporal logics (including situation calculus and BDIs), 
and the approach to planning by reinforcement learning. 
We conclude the section with a relation with our prior work on the topic of  this paper.

\paragraph{(Deliberative) Acting} 
\label{sec:rw-acting}

Our acting algorithm and operational models are based on the \textit{Refinement Acting Engine}, \RAE algorithm \cite[Chapter 3]{ghallab2016automated}, which in turn 
is inspired from PRS \cite{Ingrand:1996uj}. 
If \RAE needs to choose among several eligible refinement method instances for a given task or event, it makes the choice without trying to plan ahead.
This approach
has been extended with some planning capabilities in
PropicePlan \cite{Despouys:1999va} and  \srpe \cite{ghallab2016automated}. 
Unlike our approach,
those systems model actions as classical planning operators; they both require the action models and the refinement methods to satisfy classical planning assumptions of deterministic, fully observable  and static environments, which are not acceptable assumptions for most acting systems.
Moreover, these works do not perform any kind of learning.

Various acting approaches similar to PRS and \APE have been proposed, e.g., \cite{Firby:1987tq,Simmons:1992fy,Simmons:1998wf,Beetz:1994tc,Muscettola:1998wka,myers1999cpef}. Some of these have refinement capabilities and hierarchical models, e.g., \cite{Verma:2005tl,Wang:1991ie,Bohren:2011bh}. While such systems offer expressive acting environments, e.g., with real time handling primitives, none of them provides the ability to plan with the operational models used for acting, and thus cannot integrate  acting and planning as we do. Most of these systems do not reason about alternative refinements, and do not perform any kind of learning.

\paragraph{Online Acting and Planning} 
\label{sec:rw-online}

Online planning and acting is addressed in many approaches, e.g., \cite{musliner2008evolution,goldman2016hybrid,goldman2009semantics}, but their notion of ``online'' is different from ours. For example, in \cite{musliner2008evolution}, the old plan is executed repeatedly in a loop while the planner synthesizes a new plan, which isn't installed until planning has been finished. In \PLAN, hierarchical task refinement is simulated to do the planning, and can be interrupted anytime when \RAE needs to act.

The Reactive Model-based Programming Language (RMPL) \cite{Ingham:2001uga} is a comprehensive CSP-based approach for temporal planning and acting, which combines a system model with a control model. The system model specifies nominal as well as failure state transitions with  hierarchical constraints. The control model uses standard reactive programming constructs. RMPL programs are transformed into an extension of Simple Temporal Networks with symbolic constraints and decision nodes \cite{Williams:2001wt,Conrad:2009tu}. Planning consists in finding a path in the network that meets the constraints. RMPL has been extended with error recovery, temporal flexibility, and conditional execution based on the state of the world \cite{Effinger:2010um}. Probabilistic RMPL is introduced in \cite{Santana:2014vw,Levine:2014wo} with the notions of weak and strong consistency, as well as uncertainty for contingent decisions taken by the environment or another agent. The acting system adapts the execution to observations and predictions based on the plan. RMPL and subsequent developments have been illustrated with a service robot which observes and assists a human.  Our approach does not handle time; it focuses instead on hierarchical decomposition with Monte Carlo rollout and sampling.

Behavior trees (BT) \cite{colledanchise17behaviour,colledanchise17how,
Colledanchise:2015um} can also  respond reactively to contingent events that were not predicted. 
In \cite{colledanchise17behaviour,colledanchise17how}, BT are synthesized by planning. In \cite{Colledanchise:2015um} BT are generated by genetic programming.
Building the tree refines the acting process by mapping the descriptive action model onto an operational model. We integrate acting, planning, and learning directly in an operational model with the control constructs of a programming language.  
Moreover, we learn how to select refinement methods and method instances in a natural and practical way to specify different ways of accomplishing a task.

\paragraph{Hierarchical Task Networks} 
\label{sec:rw-HTN}

Our methods are significantly different from those used in HTNs \cite{nau1999shop}: 
to allow for the operational models needed for acting,
we use rich control constructs rather than simple sequences of primitives.
Learning HTN methods has also been investigated. HTN-MAKER \cite{hogg2008HTN} learns methods given a set of actions, a set of solutions to classical planning problems, and a collection of annotated tasks. This is extended for nondeterministic domains in \cite{hogg2009learning}.  \cite{hogg2010learning} integrates HTN with Reinforcement Learning (RL), and estimates the expected values of the learned methods by performing Monte Carlo updates.   
At this stage, we do not learn the methods but only how to chose the appropriate one.

\paragraph{MDP and Monte Carlo Tree Search} 
\label{sec:rw-MDP}

A wide literature on MDP planning and Monte Carlo Tree Search refers to simulated execution, e.g.,  \cite{feldman2013monte-carlo,feldman2014monte,james2017analysis} and sampling outcomes of action models e.g.,  RFF  \cite{Teichteil:2008vq}, FF-replan \cite{yoon2007ff}, or hindsight optimization \cite{yoon2008probabilistic}. In particular, our \PLAN procedure is an adaptation of the popular UCT algorithm \cite{kocsis2006bandit}, which has been used for various games and MDP planners, e.g., in PROST for RDDL domains \cite{Keller:2012vu}.
The main conceptual and practical difference with our work is that these approaches use descriptive models, i.e., abstract actions on finite MDPs. 
Although most of the papers refer to online  planning, they plan using descriptive models rather than operational models. There is no integration of acting and planning, hence no concerns about the planner's descriptive models versus the actor's operational models.  
Some works deal with hierarchically structured MDPs (see, e.g., \cite{parr98hierarchical,barry2013hierarchical,hauskrech2013hierarchical}), 
or hierarchical extensions of Hidden Markov Models for plan recognition (see, e.g., \cite{duong2009efficient}).
However, most of the approaches based on MDPs do not deal with hierarchical models and none of them is based on refinement methods.

\paragraph{Planning and Execution in Robotics} 
\label{sec:rw-robotics}

There has been a lot of work in robotics to integrate planning and execution. They propose various techniques and strategies to handle the inconsistency issues that arise when execution and planning are done with different models. \cite{lallement2014hatp} shows how HTN planning can be used in robotics. \cite{garrett2018ffrob} and \cite{garrett2018stripstream} 
integrates task and motion planning for robotics. 
The approach of \cite{Morisset:2008gy} addresses a problem similar to ours but specific to robot navigation. Several methods for performing a navigation task and its subtasks are available, each with strong and weak points depending on the context. The problem of choosing a best method instance for starting or pursuing a task in a given context is formulated as receding-horizon planning in an MDP for which a model-explicit RL technique is proposed. Our approach is not limited to navigation tasks; it allows for richer  hierarchical refinement models and is combined with a powerful Monte-Carlo tree search technique. 

The Hierarchical Planning in the Now (HPN) of \cite{kaelbling_hierarchical_2011} is designed for integrating task and motion planning and acting in robotics. Task planning in HPN relies on a goal regression hierarchized according to the level of fluents in an operator preconditions. The regression is pursued until the preconditions of the considered action (at some hierarchical level) are met by current world state, at which point acting starts. Geometric reasoning is performed at the planning level (i) to test ground fluents through procedural attachement (for truth, entailment, contradiction), and (ii) to focus the search on a few suggested branches corresponding to geometric bindings of relevant operators using heuristics called geometric suggesters. It is also performed at the acting level to plan feasible motions for the primitives to be executed. HPN is correct but not complete; however when primitive actions are reversible, interleaved planning and acting is complete. HPN has been extended in a comprehensive system for handling geometric uncertainty  \cite{kaelbling_integrated_2013}. 

The integration of task and motion planning  problem is also addressed in \cite{wolfe_combined_2010-1}, which uses an HTN approach. Motion primitives are assessed with a specific solver through sampling for cost and feasibility. An algorithm called SAHTN extends the usual HTN search with a bookkeeping mechanism to cache previously computed motions. In comparison to this work as well as to HPN, our approach does not integrate specific constructs for motion planning. However, it is more generic regarding the integration of planning and acting.

\paragraph{Logic Based Approaches} 
\label{sec:rw-logic}

Approaches based on temporal logics 
(see e.g., \cite{Doherty:2009jba}) 
specify acting and planning knowledge through high-level descriptive models and not through operational models like  in \APE.
Moreover, these approaches integrate acting and planning without exploiting the hierarchical refinement approach described here. 
Models based on GOLOG (see, e.g., \cite{Hahnel:1998tl,Classen:2012tn,Ferrein:2008gl})
share some similarities with \APE operational models, since they extend situation calculus with control constructs, procedure invocation and nondeterministic choice.
In principle, GOLOG can use forward search to make a decision at each nondeterministic choice. This resembles our approach where \PLAN is used to choose a method to be executed among the available ones. 
However, no work based on GOLOG provides an effective and practical planning method such as \PLAN, which is based on UCT-like Monte Carlo Tree Search and simulations over different scenarios. 
It is not clear to us how GOLOG could be extended in such a direction.

There are many commonalities between \RAE and architectures based on BDI (Belief-Desire-Intention) models \cite{meneguzzi2015planning, de2020bdi, sardina2006hierarchical, de2018operational}. Both approaches rely on a reactive system, but with differences regarding their primitives as well as their methods or plan-rules. 
Several BDI systems rely on a descriptive model, e.g.,  specified in PDDL, 
whereas \RAE can handle any type of skill (e.g., physics-based action simulators) with nondeterministic effects. Because of the latter, there are also differences about how they can do planning. 
Nondeterministic approaches are required in \RAE for the selection of methods, and since we are relying on a reactive approach, we do not synthesize a policy per se, but only a receding-horizon best choice.

\paragraph{Reinforcement Learning and Learning Domain Models} 
\label{sec:rw-RL}

Our approach shares some similarities with the work on planning by reinforcement learning (RL)
\cite{kaelbling1996reinforcement,sutton1998reinforcement,Geffner:2013to,leonetti2016synthesis,garmelo2016towards}, since we learn by acting in a (simulated) environment. However, most of the works on RL learn policies that map states to actions to be executed, and  learning is performed in a descriptive model. 
We learn how to select refinement method instances in an operational model that allows for programming control constructs.
This main difference holds also with works on hierarchical reinforcement learning, see, e.g., \cite{yang2018peorl}, \cite{parr1997reinforcement}, \cite{ryan2002using}. 
Works on user-guided learning, see e.g., 
\cite{martinez2017relational}, \cite{martinez2017relational2}, use model based RL to learn relational models, and the learner is integrated in a robot for planning with exogenous events.
Even if relational models are then mapped to execution platforms, the main difference with our work still holds: Learning is performed in a descriptive model.
\cite{jevtic2018robot} uses RL for user-guided learning directly  in the specific case  of robot motion primitives.

Several approaches have been investigated for learning planning-domain models.  In probabilistic planning, for example \cite{ross2011bayesian}, or \cite{katt2017learning}, learn a POMDP domain model through interactions with the environment, in order to plan by reinforcement learning or by sampling methods. 
In these cases, no integration with operational models and hierarchical refinements is provided.

\paragraph{Relation with our prior work} 
\label{sec:rw-ours}

Here is how the current paper relates to our prior work on this topic. A pseudocode version of \RAE first appeared in \cite{ghallab2016automated}. An implementation of \RAE, and three successively better planners for use with it, were described in \cite{patra2018ape,patra2019acting,patra2020integrating}. The current paper is based on \cite{patra2020integrating}, with the following additional contributions. We provide complete formal specifications and explanations of the actor \RAE and planner \UPOM. We present a learning strategy to learn values of uninstantiated method parameters, with experimental evaluation. We have an additional experimental domain, called \OF. We propose a new performance metric, called Retry Ratio, and evaluate it on our five experimental domains. We perform experiments with success-ratio (or probability of success) as the utility function optimized by \UPOM, We compare success ratio with efficiency. We perform experiments with varying the parameters, number of rollouts and maximum rollout length, of \UPOM.
We provide a proof of convergence of \UPOM to a plan with optimal expected utility.

\section{Architecture and Representation}
\label{sec:operational}

Our approach integrates reactive and deliberative capabilities on the basis of hierarchical operational models. It focuses on a reactive perspective, extended with planning capabilities. This section presents the global architecture of the actor and details the ingredient of the representation.

\subsection{Architecture}

The popular three-layer architectures for autonomous deliberative actors usually combine \textit{(i)} a platform layer with sensory-motor modules, \textit{(ii)} a reactive control layer, and \textit{(iii)} a deliberative planning layer \cite{kortenkamp08}. As motivated earlier, our approach merges the last two layers within a reactive-centered perspective.

\begin{figure}[!htb]
  \centering 
\subfigure[]{
    \includegraphics[width=.45\textwidth]{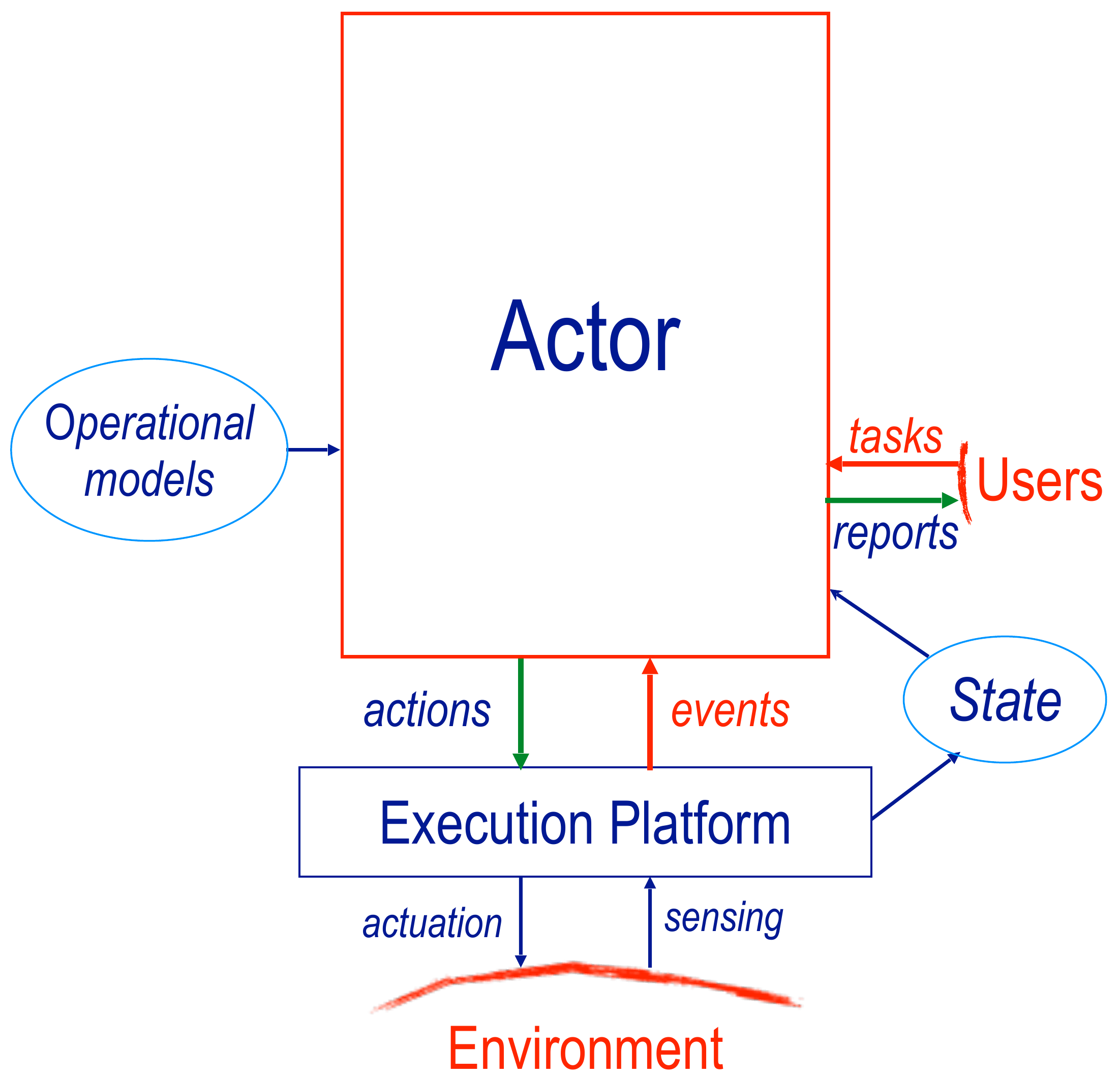}
    \label{fig:actor}} 
\subfigure[]{
    \includegraphics[width=.49\textwidth]{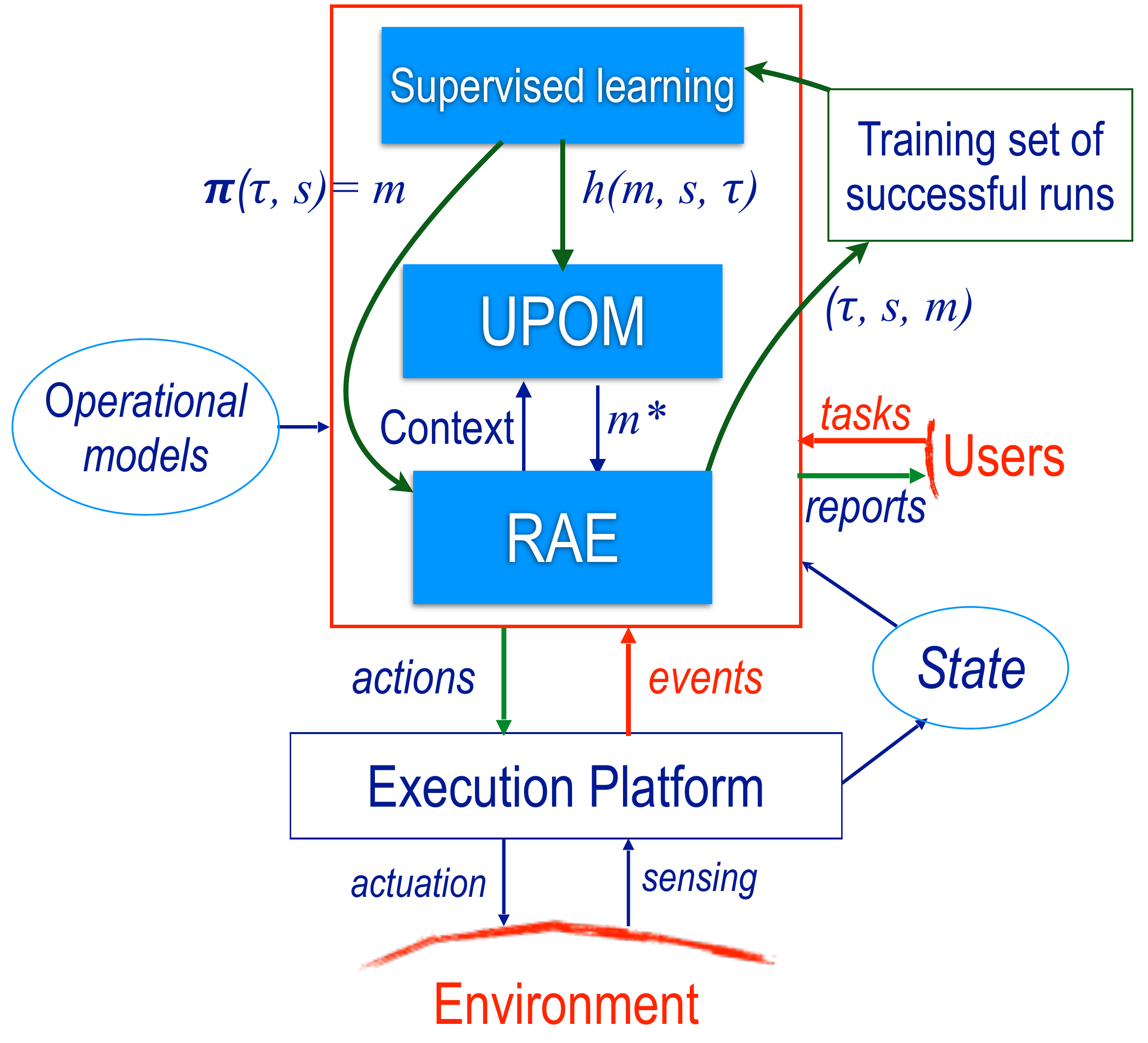}
    \label{fig:archi}}
  \caption{(a) Architecture of an actor reacting to events and tasks through an execution platform; (b) Integration of refinement acting, planning and learning.}
  \label{fig:}
\end{figure}

The central component of the architecture (labelled ``Actor'' in Figure \ref{fig:actor}) interacts with the environment for sensing and actuation through an execution platform, from which it receives events and world state updates. It also interacts with users  getting tasks to be performed and reporting on their achievement.  

\RAE
is the driving system within the actor. It reacts to tasks and events through hierarchical refinements specified by a library of operational models. At each decision step, \RAE uses the planner \UPOM to make the appropriate choice. \UPOM performs a look-ahead by simulating available options in the current context. Supervised learning is used to speed-up \UPOM with a heuristic, avoiding very deep and costly Monte Carlo rollouts; it also provides a base policy for an anytime strategy when the actor has no time for planning (see Figure \ref{fig:archi}, the integration of planning, learning and refinement acting is detailed in subsequent sections).

\subsection{Hierarchical Operational Models}

We rely  on a formalism described in \cite[Chapter 3]{ghallab2016automated}, which has been designed for acting and reacting in a dynamic environment. It provides a hierarchical representation of tasks through alternative refinement methods and primitive actions. 
Let us detail its main ingredients (see table of notation in \ref{app:notation}).

\paragraph{State variables}

We rely on a representation with parameterized state variables. These are a finite collection of mappings from typed sets of objects of the domain into some range. 
For example,  to describe  the kinematic configuration of a two-arm robot $r$, its location with respect to a floor reference frame, and  the status of a door $d$, one might use state variables \sv{configuration}$(r)\in\R^{17}$, \sv{loc}$(r)\in\R^2$, \sv{door-status}$(d)\in$ \{\sv{closed, open, cracked}\}.  Let $X$ be a finite set of  such state variables; variable $x \in X$\label{def:X} takes values from the set $\range(x)$. 
For each $x\in X$, to provide a convenient notation for handling partial knowledge, $\range(x)$ is extended to include the special symbol \sv{unknown}, which is the default value if $x$
has not been set or updated to another value.

\phantomsection \label{def:xi}A state is a total assignment  of values to state variables. The world state $\xi$ is updated through observation by the execution platform, reflecting the dynamics of the external world. This update is continual for some state variables, e.g., \sv{configuration}$(r)$, with the usual propriosensing in robotics. Other variables may be updated to \sv{unknown}, unless they are in the range of a sensor or known to keep a value set by the actor. In the following algorithms, an update step, denoted as ``observe current state $\xi$'', precedes every reactive decision of the actor.

\paragraph{State abstraction}
For the purpose of the planning lookahead, $\xi$ is simplified into an abstract state $s\in S$ which evolves by reasoning. The state $s$, which gets updated from the observed state $\xi$ each time the actor calls the planner (see \autoref{sec:plan}), is a domain-dependent abstraction of $\xi$ meeting the following conditions:
\begin{LIST}
\item The set of state variables of the abstract state $s$ is a subset of $X$, i.e., some state variables in $\xi$ may be ignored in $s$.
\item For a variable $x$ in $s$, Range($x$) can be a discretization of its range in $\Xi$. Continuing with the preceding robotics example, we might have $\range(\sv{configuration}(r)) = \{\sv{packed},\sv{holding},\sv{carrying},\sv{manipulating}\}$, and $\range(\sv{loc}(r)) =$ a finite set of locations.
Note that the formal proof of the asymptotic convergence of \UPOM  assumes $S$ to be finite. However, this assumption is not a practical requirement of the planning algorithm.\footnote{We report (\autoref{sec:eval}) on a test domain with  continuous state variables in $S$.}
\end{LIST}
Hence, a world state $\xi$ is deterministically mapped to a single $s$=Abstract($\xi$). An abstract state $s$ may correspond to a subset of world states; but there is never a need to map back an abstract state. Indeed, the reasoning by simulation on $s$=Abstract($\xi$) informs the actor about an appropriate choice of a method at some point. But the abstract states $s', s'', \ldots$, to which this reasoning led are not needed by the actor whose actions always rely on the current observed state.

\paragraph{Other variables and relations}
The state variables in $X$ are managed in the acting and planning algorithms as global variables. However, since methods embody programs, it is  convenient to define \textit{local variables}, which are generally derived from other variables.
 For example, one might use $\sv{stable}(o,\textit{pose}) \in \{ \top, \bot \}$, to mean that object $o$ in some particular \textit{pose} is stable; this property results from some geometric and dynamic computation. Local variables are updated by assignment statements inside methods. An assignment statement is of the form $x\ot \set{expr}$, where \set{expr} may be either a ground value in $\range(x)$, or a computational expression that returns a ground value in $\range(x)$. Such an expression may include, for example, calls to specialized software packages.

It is convenient to define the unvarying properties of a domain through a set of  \textit{rigid relations} (as opposed to fluents, in our case parametrized state variables). For example, the adjacency of two locations or the color objects (if relevant) could be described with rigid relations.

\paragraph{Tasks}
A task is a label naming an activity to be
performed. It has the form \sv{task-name}(\textit{args}), where
\sv{task-name} designates the task considered, arguments
\textit{args} is an ordered list of objects and values.  Tasks specified by  a user  
are called {\em root} tasks, to distinguish them from the subtasks in which they are refined.
	
\paragraph{Events}
 An event designates an occurrence of some type detected
by the execution platform; it corresponds to an \textit{exogenous} change in the environment to which the actor may have to react, e.g., the
activation of an emergency signal. It has the form
\sv{event-name}(\textit{args}).

\paragraph{Actions}
 An action is a primitive function with instantiated parameters that can be executed by the execution platform through sensory motor commands. It has \textit{nondeterministic} effects. For the purpose of planning, we do not represent actions with formal templates, as usually done with descriptive models. Instead, we assume there is a generative nondeterministic sampling simulator, denoted \sv{Sample}. A call to \sv{Sample}$(a,s)$ returns a state $s'$ randomly drawn among the possible states resulting from the execution of $a$ in $s$. \sv{Sample} can be implemented simply through a probability distribution of the effects of $a$ (see \autoref{sec:plan}).

When the actor triggers an action $a$ for some task or event, it waits until $a$ terminates or fails before pursuing that task or event. To follow its execution progress, when action $a$ is triggered, there	 is a state variable, denoted $\sv{execution-status}(a)
\in \{\sv{running, done, failed}\}$, which expresses the fact that the execution of $a$ is going on, has terminated or failed. A terminated action returns a value of some type, which can be used to branch over various followup of the activity.

\paragraph{Refinement Methods}
A refinement method is a triple of the form $(\set{task}, \set{precondition},
\set{body})$ or $(\set{event}, \set{precondition},
\set{body})$.  The first field, either a task or an event, is its \set{role}; it tells
what the method is about.  When the \set{precondition} holds in the
current state, the method is \textit{applicable} for addressing the task or event in
its role by running a program given in the method's \set{body}. This
program refines the task or event into a sequence of subtasks,
actions, and assignments. It may use recursions and iteration loops, but its sequence of steps is assumed to be finite.\footnote{
One way to enforce such a restriction would be as follows. For each iteration loop, one could require it to have a loop counter that will terminate it after a finite number of iterations. For recursions, one could use a {\em level mapping} 
(e.g., see \cite{erol1995complexity,hitzler2005uniform})
that assigns to each task $t$ a positive integer $\ell(t)$, and require that for every method $m$ whose task is $t$ and every task $t'$ that appears in the body of $m$, $\ell(t') < \ell(t)$. 
However, in most problem domains it is straightforward to write a set of methods that don't necessarily satisfy this property but still don't produce infinite recursion.
}

Refinement methods are specified as parameterized templates with a name and list of arguments $\textit{method-name}(\textit{arg}_1,\ldots,\textit{arg}_k)$.
An instance of a method is given by the substitution of its arguments
by constants that are the values of state variables.  

A method instance
is applicable for a task if its role
matches a current task or event, and its preconditions are satisfied by
the current values of the state variables.  
A method may have several applicable instances for
a current state, task, and event. An applicable instance of a
method, if executed, addresses a task or an event by refining it, in a context dependent manner, into subtasks, actions, and possibly state updates, as specified in its body.  

The body of a method is a sequence of lines with the usual programming control structure (if-then-else, while loops, etc.), and  tests on the values of state variables.  A \emph{simple} test has the form $(x \circ v)$, where $\circ \in \{=,
\neq, <, >\}$. A \emph{compound} test is a negation, conjunction, or
disjunction of simple or compound tests.  Tests are evaluated with
respect to the current state $\xi$.  In tests, the symbol
\sv{unknown} is not treated in any special way; it is just one of the
state variable's possible values.

The following example of a simplified
 search-and-rescue domain illustrates the representation.

\begin{example}
	\label{ex:ee1}
Consider a set $R$ of robots performing search and rescue operations in a  partially mapped area. The robots have to find people needing help in some area and leave them a package of supplies (medication, food, water, etc.). This domain is specified with state variables such as 
$\sv{robotType}(r) \in $ \{UAV, UGV\}, $r \in R$, a finite set of robot names; 
$\sv{hasSupply}(r) \in  \{\top, \bot\}$;
\sv{loc}$(r) \in L$, a finite set of locations. A rigid relation $\sv{adjacent} \subseteq L^2$ gives the topology of the domain.

\sloppy 
These robots can use actions such as $\textsc{Detect}(r, \text{camera}, \text{class})$ (which detects if an object of some \textit{class} appears in images acquired by \textit{camera} of $r$), $\textsc{TriggerAlarm}(r,l)$, $\textsc{DropSupply}(r,l)$, $\textsc{LoadSupply}(r,l)$, $\textsc{Takeoff}(r, l)$, $\textsc{Land}(r, l)$, $\textsc{MoveTo}(r,l)$, and $\textsc{FlyTo}(r,l)$. They can address tasks such as: $\sv{search}(r,\textit{area})$ (which makes a UAV $r$ survey in sequence the locations in area), $\sv{survey}(l)$, $\sv{navigate}(r, l)$, $\sv{rescue}(r, l)$, and $\sv{getSupplies}(r)$. 
 
Here is a refinement method for the $\sv{survey}$ task:

\begin{quote}
	{\rm
		\begin{pcode}
			\phead{\sv{m1-survey}$(l, r)$}
			\pkey{task}{\sv{survey}$(l)$}
			\pkey{pre}{\sv{robotType}$(r) =$ \textit{UAV} and \sv{loc}$(r) = l$ and \sv{status}$(r)$ = \textit{free}}
			\pkey{body} 
			 for all $l'$ in neighbouring areas of $l$ do:
			\1 \T \sv{moveTo}$(r, l')$
			\1 \T for \textit{cam} in \sv{cameras}($r$):
			\1 \T \T if \textsc{DetectPerson}($r$, \textit{cam}) $ = \top$ then:
			\1 \T \T \T  if \sv{hasSupply}($r$) then \sv{rescue}($r,l'$)
			\1 \T \T \T else \textsc{TriggerAlarm}$(r,l')$
		\end{pcode}
	}

\end{quote}

This method specifies that in the location $l$ the UAV $r$ detects if a person appears in the images from its camera. In that case, it proceeds to a rescue task if it has supplies; if it does not it triggers an alarm event. This event is processed (by some other methods) by finding the closest robot not involved in a current rescue and assigning to it a rescue task for that location.

\begin{quote}
\begin{pcode}
	\phead{\sv{m1-GetSupplies}$(r)$}
	\pkey{task}{\sv{GetSupplies}$(r)$}
	\pkey{pre}{\sv{robotType}$(r) =$ \textit{UGV}}
	\pkey{body} \sv{moveTo}$(r, $\sv{loc}$(BASE))$
	\1 \T \textsc{ReplenishSupplies}$(r)$
	\bigskip
	\phead{\sv{m2-GetSupplies}$(r)$}
	\pkey{task}{\sv{GetSupplies}$(r)$}
	\pkey{pre}{\sv{robotType}$(r) =$ \textit{UGV}}
	\pkey{body}
	$r_2 \gets \textrm{argmin}_{r'} \{\textrm{Distance}(r, r') \mid $ \sv{hasMedicine}$(r') = \textsc{True}\}$
	
	\1 if $r_2$ = None then \textsc{Fail}
	\1 else:
	\1 \T \sv{moveTo}$(r, loc(r_2))$
	\1 \T \textsc{Transfer}$(r_2, r)$
\popQED{\qed}
\end{pcode}
\end{quote}
\end{example}

\paragraph{Specification of an acting domain} \label{def:Sigma} We model an acting domain $\Sigma$ as a tuple $\Sigma=(\Xi, \mathcal{T, M, A})$ where:
\begin{LIST}
\item $\Xi$ is the set of world states the actor may be in.
\item \phantomsection \label{def:tau}$\mathcal{T}$ is the set of tasks and events the actor may have to deal with.
\item \phantomsection \label{def:m}$\mathcal{M}$ is the set of methods for handling tasks or events in $\mathcal{T}$, 
 \label{def:applicable} $\sv{Applicable}(\xi,\tau)$ is the set of method instances applicable to $\tau$ in  state $\xi$.
\item  \phantomsection \label{def:a}$\mathcal{A}$ is the set of actions the actor may perform. 
 \label{def:gamma}We let $\gamma(\xi,a)$ be the set of states that may be reached after performing action $a$ in state $\xi$.
\end{LIST}
We assume that $\Xi$, $\Tasks$, $\M$, and $\A$ are finite.

The deliberative acting problem can be stated informally as follows: given $\Sigma$ and a task or event $\tau \in \mathcal{T}$, what is the ``best'' 
method instance $m \in \mathcal{M}$ to perform $\tau$ in a current state $\xi$. In Example~\ref{ex:ee1}, for a task \sv{getSupplies}$(r)$, the choice is between \sv{m1-GetSupplies}$(r)$ and \sv{m2-GetSupplies}$(r)$. 
Strictly speaking, the actor does not require a plan, i.e., an organized set of actions or a policy. It requires a selection procedure which designates for each task or subtask at hand the ``best'' method instance for pursuing the activity in the current context.

The next section describes a reactive actor which relies on a predefined preference order of methods in $\sv{Applicable}(\xi,\tau)$. Such an order is often natural when specifying the set of possible methods for a task. In \autoref{sec:plan} we detail a more informed receding-horizon look-ahead mechanism using an approximately optimal 
planning algorithm which provides the needed selection procedure.

\newcommand{\current}{i}
\newcommand{\ldot}{.}
\setlength{\algomargin}{1.5em} 
\newcommand{\actingStack}{\textit{Stack}\xspace}
\newcommand{\planningStack}{R_p}
\newcommand{\uctRollouts}{n_{ro}}

\section{Acting with \textsf{RAE}} 
\label{sec:rae}

\RAE
 is adapted from \cite[Chapter 3]{ghallab2016automated}. 
It maintains an \textit{Agenda} consisting of a set of \textit{refinement stacks}, one for each root task or event that needs to be addressed.
A refinement stack  \stack is a  LIFO list of tuples of the form $(\tau,m,i,\tried)$ where $\tau$ is an identifier for the task or event; $m$ is a method instance to refine $\tau$ (set to \textit{nil} if no method instance has been chosen yet); $i$ is a pointer to a line in the body of $m$, initialized to 1 (first line in the body); and \tried is a set of refinement method instances already tried for
$\tau$ that failed to accomplish it. A stack \stack is handled with the usual \sv{push}, \sv{pop} and \sv{top} functions.

\RestyleAlgo{boxed}

\begin{algorithm}[!h]  \label{alg:rae} \DontPrintSemicolon
\uline{$\RAE$:}\;
		$\set{Agenda} \gets$ empty list\;
		\While{True}
		{\nl\For{each new task or event $\tau$ to be addressed }
		{\label{forloop1} 
\nl			observe current state $\xi$      \label{obs1}\;
\nl			$m \gets \sv{Select}( \xi, \tau, \la (\tau, nil, 1, \emptyset)\ra,d_{max}, \uctRollouts)$ \label{plan1}\;
\nl				\lIf{$m = \emptyset$}
				{output($\tau$, ``failed'') } \label{failure0}
				\lElse
				{
					$\set{Agenda} \gets \set{Agenda}\cup \{\langle (\tau,m,1,\emptyset) \rangle\}$   }
			}
\nl 			\For {each $\stack \in \textit{Agenda}$}{\label{forloop2}
			observe current state $\xi$     \;
				$\stack \gets \Progress(\stack, \xi)$ \;
\nl				\If{$\stack=\emptyset$ }
				{$\set{Agenda}\gets \set{Agenda} \setminus \stack$ \;
				output($\tau$, ``succeeded'')}
\nl				\ElseIf{\stack=\failure}
				{$\set{Agenda}\gets \set{Agenda} \setminus \stack$ \;
				output($\tau$, ``failed'')} \label{failure1}
			}}
\caption{Refinement Acting Engine \RAE}
\end{algorithm}

When \APE addresses a task $\tau$, it must choose a method instance $m$ for $\tau$. 
This is performed by function \sv{Select} (lines \ref{plan1} of \APE, \ref{plan2} of \Progress, and \ref{plan3} of \Retry). \textsf{Select} takes five arguments: the current state $\xi$, task $\tau$, and stack \stack, and two control parameters $d_{max}, \uctRollouts$ which are needed only for planning.  In purely reactive mode (without planning), \sv{Select} returns the first applicable method instance, according to a pre-defined ordering, which has not already been tried (\tried is given in \stack).
Note that this choice is with respect to the current world state $\xi$. Lines \ref{obs1},\ref{obs2},\ref{obs3} in  \APE, \Progress and \Retry respectively,  specify to get an update of the world state from the execution platform.
If  \sv{Applicable}$(\xi,\tau) \subseteq \tried$, then \sv{Select} returns $\emptyset$, i.e., there is no applicable method instances for $\tau$ in $\xi$ that has not already been tried, meaning a failure to address $\tau$.

The first inner loop of \APE (line \ref{forloop1}) reads each new root task or event $\tau$ to be addressed and adds to the \textit{Agenda} its refinement stack, initialized to $\langle(\tau,m,1,\emptyset)\rangle$, $m$ being the method instance returned by \sv{Select},  if there is one. The root task $\tau$  for this stack will remain at the bottom of  \stack until solved; the subtasks in which $\tau$ refines will be pushed onto  \stack along with the refinement. The second loop of \RAE progresses by one step in the topmost method instance of each stack in the $\textit{Agenda}$.

\begin{algorithm}[!ht] \label{alg:progress} 	\DontPrintSemicolon
	\uline{$\Progress(\stack,\xi)$:}\;
	$(\tau,m,i,\tried)\gets$ \sv{top}($\stack$) \;
	
\nl \label{action}	
	\If 
	{$m[i]$ is an already triggered action}
		{\textbf{case} \sv{execution-status}($m[i]$): \;
			\T\textsf{running: } \text{return} \stack\;
\nl			\T\textsf{failed: }\T  \text{return} $\Retry(\stack)$ \label{alg:fail1}\;
			\T\textsf{done: }\T    \text{return} ${\Next}(\stack,\xi)$
			} 
			
\nl \label{next}	
	\ElseIf {$m[i]$ is an assignement step}
		{update $\xi$ according to $m[i]$ \;
		\text{return} $\Next(\stack,\xi)$}
	\ElseIf {$m[i]$ is an action $a$}
		{trigger the execution of action $a$ \;
		\text{return} \stack}
		\ElseIf {$m[i]$ is a task $\tau'$}
		{\nl		observe current state $\xi$ \label{obs2}\;
\nl 		$m' \gets \sv{Select}( \xi, \tau',\stack,d_{max}, \uctRollouts)$    \label{plan2}\;
		\lIf {$m' = \emptyset$} {return $\Retry(\stack)$}
		\lElse {return \sv{push}$((\tau',m',1,\emptyset),\stack)$}
		}
	
\caption{\Progress returns an updated stack taking into account  the execution status of the ongoing action, or the type of the next step in method instance $m$.}
\end{algorithm}

To progress a refinement stack \stack, \sv{Progress} (Algorithm \ref{alg:progress}) focuses on the tuple $(\tau,m,i, \tried)$ at the top of \stack.  \phantomsection \label{def:mi}If the current line $m[i]$ is an action already triggered, then the execution status of this action is checked. If the action $m[i]$ is still running, this stack has to wait, but \RAE goes on for other pending stacks in the \set{Agenda}.
If $m[i]$ failed, \sv{Retry} examines alternative method instances. Otherwise the action $m[i]$ is done: \RAE will proceed in the following iteration with the next step in method instance $m$, as defined by the function \Next (Algorithm \ref{alg:next}).

\begin{algorithm}[!ht] \label{alg:next} 	\DontPrintSemicolon
	\uline{\Next($\stack, \xi$):}\;
	
	\Repeat {$i$ is not the last step of $m$}
		{$(\tau,m,i,\tried)\gets$ \sv{top}($\stack$) \;
			\textsf{pop}(\stack)\;
			\lIf {\stack $=\la\ra$}
			{
				return $\la\ra$
			}
	}
	$j \gets \text{step following~} i \text{ in } m $ depending on $\xi$ \;
		\text{return} $\sv{push}((\tau, m, j, \tried), \stack)$
	\caption{Returns the next step in a method instance $m$ for a  stack \stack and updates \stack.}
\end{algorithm}

$\Next(\stack,\xi)$ advances within the body of the topmost method instance $m$ in \stack as well as with respect to \stack. If $i$ is the last step in the body of  $m$, the current tuple is removed from  \stack: method instance $m$ has successfully addressed $\tau$. If $\tau$ is a root task; \Next and \textsf{Progress} return $\emptyset$, meaning that $\tau$ succeeded; its stack \stack is removed from the \textit{Agenda}. If $i$ is not the last step of $m$,  \RAE proceeds to the next step $j$. Normally $j$ is the next line after $i$, but if that line is a control instruction (e.g., an \textit{if} or \textit{while}) then $j$ is the step to which the control instruction directs us (which of course may depend on the current state $\xi$).

Starting from line \ref{next} in \sv{Progress}, $i$ points to the next line of $m$ to be processed.
If $m[i]$ is an assignment, the corresponding update of
$\xi$ if performed; \RAE proceeds with the next step. If $m[i]$ is an action $a$, its execution is triggered; \RAE will wait until $a$ finishes to examine the \Next step of $m$. If $m[i]$ is  a
task $\tau'$, a refinement with a method instance $m'$, returned by \sv{Select}, is performed. The corresponding tuple is pushed on top of  \stack. If there is no applicable method instance to $\tau'$, then the current
method instance $m$ failed to accomplish $\tau$, a \textsf{Retry} with other method instances is performed.

\begin{algorithm}[!h]  \label{alg:retry} \DontPrintSemicolon
     \uline{$\Retry(\stack)$:} \;
	$(\tau,m, \step, \tried)\gets$ \sv{pop}($\stack$)\;
	$\tried \gets \tried \cup \{m\}$ 	\tcp*[F]{$m$ failed}
\nl		observe current state $\xi$  \label{obs3}\;
\nl 		$m' \gets \sv{Select}(\xi, \tau,\stack,d_{max}, \uctRollouts)$    \label{plan3}\;
\nl	\lIf{$m' \neq \emptyset$}
	{return \sv{push}$((\tau,m',1, \tried),\stack)$} \label{obs4}
	{\lElseIf {$\stack\neq \emptyset$}
		{return $\Retry(\stack)$ }
\nl		\lElse  {return \failure} \label{alg:fail2}
	}
\caption{\Retry examines untried alternative method instances, if any, and returns an updated stack.}
\end{algorithm}

\sv{Retry} (Algorithm \ref{alg:retry}) adds the failed method instance $m$ to the set of method instances that have been tried for $\tau$ and failed. It removes the corresponding tuple from  \stack.  It retries refining $\tau$ with another method instance $m'$ returned by \sv{Select} which has not been already tried (line \ref{obs4}). If there is no such $m'$ and if \stack is not empty, \sv{Retry} calls itself recursively on the
topmost stack element, which is the one that generated $\tau$ as a
subtask: retrial is performed one level up in the refinement tree. If  stack \stack is empty, then $\tau$ is the root task or event: \RAE failed to accomplish $\tau$. 

\RAE fails either (i) when there is no method instance applicable to the root task in the current state (line \ref{failure0} of \RAE), or (ii) when all applicable method instances have been tried and failed (line \ref{failure1}). A method instance fails either (i) when one of its actions fails (line \ref{alg:fail1} in \sv{Progress})
or (ii) when all applicable method instances for one of its subtasks have been tried and failed (line \ref{alg:fail2} in \sv{Retry}).

Note that \textsf{Retry} is not a backtracking procedure: it does not go back to a previous \textit{computational node} to pick up another option among the candidates that \textit{were} applicable when that node was
first reached. It finds another method instance among those that are \textit{now} applicable for the \textit{current} state of the world $\xi$.  \RAE interacts with a dynamic world: it cannot rely on the set \sv{Applicable}$(\xi,\tau)$ computed earlier, because $\xi$ has changed, new method instances may be applicable. However, the same method instance that failed at some point may succeed later on and may merit retrials. We discuss this issue in  \autoref{sec:discussion}.

\newcommand{\eval}{\sv{eMEU}\xspace} % provisional name
\newcommand{\selectMethod}{\sv{UPOM}\xspace} % provisional name
\newcommand{\evalUCT}{\sv{eMEU$_{UCT}$}\xspace} % provisional name
\newcommand{\evalUCTHelper}{\sv{eMEU$_{UCT1}$}\xspace} % provisional name

\newcommand{\Uf}{U_{\sv{Failure}}}
\newcommand{\Us}{U_{\sv{Success}}}

\newcommand{\PLANH}{\sv{U1}\xspace}
\newcommand{\RPLAN}{\small \sv{RAEplan}\xspace}
\newcommand{\one}{\mathbb{I}} % choose either symbol

\section{Planning for \RAE}
\label{sec:plan}

In \autoref{sec:operational}, we informally defined the deliberative acting problem as the problem of selecting the ``best'' method instance $m \in \mathcal{M}$ to perform $\tau$ in a current state $\xi$ for a domain \mbox{$\Sigma=(\Xi, \mathcal{T, M, A})$}.
A {\em refinement planning domain} is a tuple $\Phi = (S,\mathcal{T},\M,\A)$, where
$S$ is the set of states that are abstractions of states in $\Xi$, 
and $\Tasks$,  $\M$, and $\A$ are the same as in $\Sigma$.

Recall that if \RAE is run purely reactively, \sv{Select} chooses a refinement method instance from a predefined order of refinement methods, without comparing alternative options in the current context.
 In this section, we define a utility function to assess and compare method instances in \sv{Applicable}$(\xi,\tau)$
to select the best one. This utility function might, in principle, be used by an exact optimization procedure for finding the optimal method instance for a task. We propose a more efficient Monte Carlo Tree Search approach for finding an approximately optimal method instance. The planner relies on a procedure, called \PLAN,  inspired from the Upper Confidence bounds search applied to Trees (UCT). \PLAN (\textit{UCT Procedure for Operational Models}) is parameterized for rollout depth $d$ and number of rollouts, $\uctRollouts$. It relies on a heuristic function $h$ for estimating the criterion at the end of the rollouts when $d < \infty$. 

The proposed approach runs multiple simulations using the method instances and  a \textit{generative sampling model} of actions. This model is defined as a function \sv{Sample}: $S \times \mathcal{A} \to S$. \sv{Sample}$(s,a)$ returns a state $s'$ randomly drawn from $\gamma(s,a)$, with  $\gamma: S \times A \to 2^{S} \cup \{ \textsf{failed} \}$. The transition function $\gamma$ is augmented with the token \textsf{failed}
to account for possible failures of $a$. We assume, as usual, that the sampling reflects the probability distribution of the action's real-world outcomes.

A simulation of a method instance $m$ for a task $\tau$ during planning goes successively through the steps of $m$, as required by the control flow for the current context, and generates a sequence of \textit{simulated states} $\langle s_0,\ldots, s_i,\ldots\rangle$, where initially $s_0$ corresponds to an abstraction of the current real world state $\xi$. 
For instance in Example~\ref{ex:ee1} this involves simulating for the method \sv{m1-survey}$(l, r)$ several \sv{moveTo}$(r, l')$ tasks, followed by \textsc{DetectPerson}$(r, \textit{cam})$ action and \sv{rescue}$(r, l')$ tasks depending on the current context.
The utility function is computed along such a sequence, taking into account the \textit{deterministic} refinements of method instances and the \textit{nondeterministic} outcomes of actions (see \autoref{fig:tree}). 
Simulation during planning does not \textsf{Retry}, as in \RAE, but it takes into account possible failures. Further, we do not observe in \UPOM nor consider possible changes in $\xi$ during a simulation. These changes, when leading to events, are dealt with at the acting level through the main loop of \RAE  which remains concurrently active during planning. 

\subsection{Utility criteria and optimal approach}
\label{sec:utilities}

The  appropriate utility function can be application dependent. One may consider a function combining rewards for desirable or undesirable states,  and costs for the time and resources of actions. To keep the formal presentation simple, we assume that there are no rewards in states. We studied two utility functions measuring respectively the actor's efficiency and robustness.  Regarding the former, instead of minimizing costs, the efficiency utility function maximizes values to easily account for failures. For the latter, the actor seeks a method instance that has a good chance to succeed. 

We first define two value functions for actions, $v_e$ and $v_s$, which lead to the two proposed utility functions for method instances.

\paragraph{Efficiency}
 Let $\sv{Cost}: S\times \mathcal{A}\times (S \cup \{\textsf{failed}\}) \to \R^+$ be a cost function. 
 \sv{Cost}$(s,a,s')$ is the cost of performing action $a$ in state $s$ when the outcome is $s'$. Note that the cost of an action $a$ is finite even when $a$ fails. This is the case since in general an actor is able to figure out that an  attempted action failed to limit its cost. However, a failed action $a$ in a method instance $m$ leads to the failure of $m$; its eficiency is simply 0. Hence we define the efficiency value of an action as follows:
\begin{equation}\label{eq:ve}
v_e(s,a,s')= 
\begin{cases}
0
& \text{if }  s'=\text{ ``\textsf{failed}''},\\

1/\sv{Cost}(s,a,s')
&\text{otherwise}.
\end{cases} 
\end{equation}
If we let $v_{e1} \oplus v_{e2}$ denote
the  cumulative efficiency value of two successive actions whose efficiency values are  $v_{e1} = 1/c_1$ and $v_{e2} = 1/c_2$, then
\begin{equation}
\label{eq:eplus}
\textstyle
v_{e1} \oplus v_{e2} = 1/(c_1+c_2)
= 1/\left(\frac{1}{v_{e1}}+\frac{1}{v_{e2}}\right)
 = v_{e1} \times v_{e2}/ (v_{e1} + v_{e2}).
\end{equation}

\paragraph{Success Ratio}
Here, we measure the utility of a method instance as its probability of success over all possible outcomes of its actions. Hence we simply take a value 0 for an action that fails, and 1 if the action succeeds.
\begin{equation}\label{eq:vs}
v_s(s,a,s')= 
\begin{cases}
0
& \text{if }  s'=\text{ ``\textsf{failed}''},\\

1
&\text{otherwise}.
\end{cases} 
\end{equation}
If we let $v_{s1} \oplus v_{s2}$ denote the  cumulative success ratio for two successive actions in a method instance whose success ratios are  $v_{s1}$ and $v_{s2}$,
then
\begin{equation}
\label{eq:splus}
v_{s1} \oplus v_{s2} = v_{s1} \times v_{s2}.
\end{equation}
\phantomsection \label{def:one}For both value functions $v_e$ and $v_s$, the operator $\oplus$ is associative, which is needed for combining successive steps. For both value functions, we let $\one$ denote the identity element for  operation $\oplus$, i.e., the element such that $x\oplus\one=\one\oplus x=x$:
\begin{LIST}
\item
For $v_e$ in \autoref{eq:ve}, we have $\one=\infty$, corresponding to a cost of $1/\one = 0$. If $v_{e1} = \one$, then $v_{e1} \oplus v_{e2} = 1/(0 + \frac{1}{v_{e2}}) = v_{e2}$
for every $v_{e2}$.
\item
For $v_s$ in  \autoref{eq:vs}, we have $\one=1$, corresponding to success (task is already accomplished).  
\end{LIST}
Note that if either of two actions  in a method instance $m$ fails, their combined value is 0,  since $m$ also fails.

\medskip

Let us now define a utility function for method instances using  either $v_e$ or $v_s$. In order to compute the expected utility of 
a method instance $m$ we need to consider possible \textit{traces} of the execution of $m$ for a task $\tau$. In \RAE, an execution trace was conveniently represented though the evolution of  \stack for the task $\tau$. In planning, we similarly use  \stack as a LIFO list of tuples $(\tau,m,i,\tried)$, as defined in \RAE.\footnote{We do not need for the moment to keep track of already \tried method instances, but we'll see in a moment the usefulness of this term} 
For a given simulation of $m$ for $\tau$, \stack is initialized as a copy of the current stack in \RAE.
We progress in the simulation of $m$ step by step using the function \Next (Algorithm \ref{alg:next}), pushing in $\stack$ a new tuple when a step requires a refinement into a subtask.

Let $\textsf{top}(\stack)$ be the stack tuple $(\tau,m,i,\tried)$. The utility of a particular simulation of $i^{th}$ step of $m$ for $\tau$ is given by the following recursive equation:

\begin{equation}\label{eq:um}
U(m,s,\stack)= 
\begin{cases}
U(m,s',\Next(\stack,s))
& \text{if }  m[i] \text{ is an assignment,} \\

v(s,a,s')\oplus U(m,s',\Next(\stack,s))
&\text{if }  m[i] \text{ is an action } a,\\

U(m',s,\textsf{push}((\tau',m',1,\nullset),\Next(\stack,s))
&\text{if }  m[i] \text{ is a subtask } \tau',\\

\one
&$if $ \stack=\nullset,
\end{cases} 
\end{equation}
where $v$ is either $v_e$ or $v_s$. An assignment step changes the state from $s$ to $s'$ but does not change the utility $U$. An action $a$ changes the state nondeterministically to $s'$; the utility is the combined value of $a$ and the utility of the remaining step. A refinement step does not change the state; it is addressed in this particular simulation by refining 
$\tau$ into $\tau'$ with $m'$.
The function \Next moves to the following step, and to the empty stack at the end of every simulated execution. 

From \autoref{eq:um} we derive the \textit{maximal expected utility} of $m$ for $\tau$ by maximizing recursively over all possible refinements in $m$ and averaging over all possible outcomes of actions, including failures:

\begin{equation}\label{eq:um*}
U^*(m,s,\stack)= 
\left\{
\begin{array}{l}

\makebox[5cm][l]{$U^*(m,s',\Next(\stack,s))$} 
\text{if }  m[i] \text{ is an assignment,}\\[1ex]

\sum_{s' \in \gamma(s,a)}\Pr(s'|s,a) [v(s,a,s')\oplus U^*(m,s',\Next(\stack,s))]\\
\makebox[5cm][l]{}
\text{if }  m[i] \text{ is an action } a,\\[1ex]

$max$_{m'\in \sv{Applicable}(s,\tau')}U^*(m',s,\textsf{push}((\tau',m',1),\Next(\stack,s))\\
\makebox[5cm][l]{} 
\text{if }  m[i] \text{ is a  subtask } \tau',\\[1ex]

\makebox[5cm][l]{$\one$}
$if $ \stack=\nullset.
\end{array} 
\right.
\end{equation}

In the above equation, $\gamma(s,a)$ includes the token ``\failed''. We assume as usual that if  $\sv{Applicable}(s,\tau)=\nullset$ then
max$_{m\in \sv{Applicable}(s,\tau)}U^*(m,s,\stack)=0$, 
meaning a refinement failure. Instantiating $v$ as either $v_e$ or $v_s$ gives the two utility functions, the efficiency and the success ratio of method instances, respectively.

The optimal method instance for a task $\tau$ in a state $s$ for the utility $U^*$ is:

\begin{equation}\label{eq:m2*}
m^*_{\tau,s}= \text{argmax}_{m\in \sv{Applicable}(s,\tau)}U^*(m,s,\langle(\tau,m,1,\nullset)\rangle)
\end{equation}

\medskip

It is possible to implement \autoref{eq:um*} directly as a recursive backtracking optimization algorithm and to make the planning algorithm return $m^*_{\tau,s}$, as defined above. However, 
this would be too computationally demanding and not practical for an online planner. We propose instead to seek an approximately optimal method instance with an anytime controllable procedure using a Monte Carlo Tree Search algorithm in the space of operational models. 

\subsection{A planning algorithm based on UCT}

\phantomsection \label{def:depth}To find an approximation $\tilde{m}$ of $m^*$, we propose a progressive deepening Monte Carlo Tree Search procedure with $n_{ro}$ rollouts, down to a depth $d_{max}$ in the refinement tree of a task $\tau$. 
A rollout in MCTS is an exploration of a path along a random branch from each nondeterministic node (i.e., an outcome of an action) down to depth $d_{max}$.
The basic ideas of UPOM are the following:
\begin{LIST}
\item at  an action node of the search tree, we average over the values of the corresponding $n_{ro}$  rollouts;
\item at  a task node, we choose the refinement method instance with the highest expected utility;
\item starting from $d=d_{\max}$,we decrease $d$ for a refinement step and an action step, but not in an assignment step;

\item we take a heuristic estimate of the utility of the remaining refinements  at the tip of a rollout, i.e., at $d=0$;
\item we stop a rollout at a failure of an action or a refinement, and return a value $\Uf=0$; we also stop when the stack is empty and return $\Us= \one$. 

\end{LIST}

\newcommand{\mchosen}{m_c\xspace}

\begin{algorithm}[!ht]   \DontPrintSemicolon
	\uline{$\sv{Select}(\xi,\tau, \stack, d_{max}, \uctRollouts)$:} \;
		$(\tau, m,i,\tried) \gets \sv{top}(\stack)$\;$M\gets \sv{Applicable}(\xi,\tau)\setminus\tried $ \;
	\lIf{$M=\nullset$}{return $\nullset$}
	\lIf{$|M=\{m\}|=1$}{return $m$}
	$s \gets \sv{Abstract}(\xi)$ ; $\sigma \gets$ copy of \stack  ; $d\gets 0$\;
	\nl \label{slct:1} $\tilde{m} \gets \text{argmax}_{m \in M}h(\tau,m, s)$ \tcp*[F]{initialize $\tilde{m}$}
\nl \label{slct:3}
	\Repeat{$d=d_{max}$ or \sv{search time is over}}
	{$d \gets d+1$ \tcp*[F]{progressive deepening}
\nl\label{slct:nro}	\For{ $\uctRollouts$ times}
	{
		\PLAN($s$, $\sv{push}((\tau, nil, 1, \nullset), \stack$), $d$)
	}
	 $\tilde{m} \gets \argmax_{m \in M} {Q_{\stack,s}(m)}$
	}
	return $\tilde{m}$
\caption{A progressive deepening procedure using \PLAN  for finding an approximately optimal method instance.}\label{alg:oracle}
\end{algorithm}
	
\begin{algorithm}[!ht]  \DontPrintSemicolon
	\uline{$\PLAN(s, \stack, d)$:} \;
	\lIf{$\stack=\la \ra$}{return $\Us$} 

	$(\tau, m,i,\tried) \gets \sv{top}(\stack)$\;
	\nl \lIf{$d=0$}{return $h(\tau,m, s)$}  \label{uct:heuristic}

	\If { $m = nil$ or $m[i]$ is a task $\tau'$}
	{
		\lIf {$m = nil$} {$\tau' \leftarrow \tau$ } 
		\If { $N_{\stack,s}(\tau')$ is not initialized yet} 
		{\nl \label{uct:mp}$M' \leftarrow \sv{Applicable}(s, \tau')\setminus\tried$\;
			\lIf {$M' = 0$} {return $\Uf$} 
			$N_{\stack, s}(\tau') \leftarrow 0$ \;
			\For { $m' \in M'$}
			{
				$N_{\stack, s}(m') \leftarrow 0$ ; 
				$Q_{\stack, s}(m') \leftarrow 0$\;
			}
		}
		\textit{Untried} $\leftarrow \{ m' \in M' | N_{\stack, s}(m') = 0 \}$\;
		
		\If {\textit{Untried} $ \neq \nullset$}
		{\nl\label{uct:mc1}
			$\mchosen \leftarrow $ random selection from \textit{Untried}
		}	\nl \label{uct:ucb} 
		\lElse 
		{
			$\mchosen \leftarrow \argmax_{m\in M'} \{Q_{\stack, s}(m) +  C\times [\log{N_{\stack, s}(\tau)}/N_{\stack, s}(m)]^{1/2}\}$ 
		}
		
		\nl \label{uct:efftask} $\lambda \gets  \PLAN(s, \sv{push}((\tau', \mchosen,1,\nullset),\Next(\stack,s)), d-1)$\;
		
		\nl \label{uct:qupdate}
		$Q_{\stack, s}(\mchosen) \leftarrow [N_{\stack, s}(\mchosen) \times Q_{\stack, s}(\mchosen) + \lambda]/[1 + N_{\stack, s}({\mchosen})]$\;
		$N_{\stack, s}(\mchosen) \leftarrow N_{\stack, s}(\mchosen) + 1$\;
		return $\lambda$ \;
	}
	\If {$m[i]$ is an assignment}
	{$s' \gets$ state $s$ updated according to $m[i]$ \;
		return  $\PLAN(s', \Next(\stack,s'), d)$ } 
	
	\If {$m[i]$ is an action $a$}
	{\nl $s'\gets \sv{Sample}(s,a)$ \label{uct:sample}\;
		\lIf{$s'=\failed$}{return $\Uf$}	
		\nl
		\lElse{ 
			{ return $v(s,a,s')\oplus\PLAN(s', \Next(\stack, s'), d-1)$ }
		\label{uct:eff}}
	}
	\caption{Monte Carlo  tree search procedure \PLAN;  performs one rollout recursively down the refinement tree of  a method instance to compute an estimate of its optimal utility.
} 
	\label{alg:upom}
\end{algorithm}

This is detailed in  algorithms \ref{alg:oracle} and \ref{alg:upom}. \sv{Select} 
is called by \RAE with five parameters: $\xi$,  $\tau$, and $\stack$, and the  control parameters $d_{max}$, the maximum rollout depth, and  $n_{ro}$, the number of UCT rollouts. 
Recall that on a new root task $\tau$, \RAE  calls \sv{Select} with $\sigma=\la(\tau, nil, 1, \nullset)\ra$. 
\sv{Select} returns $\tilde{m}$, an approximately optimal method instance for $\tau$, or $\nullset$ if no method instance is found, i.e.,  if there is no applicable method instances for $\tau$ in $\xi$, but of those already tried by \RAE for this task. 
\textsf{Select} uses a copy of  \RAE's current stack $\stack$, and a simulation state $s$, which is an abstraction of the current execution state $\xi$ (e.g., in Example \ref{ex:ee1}, $l$ can be a precise metric location for acting and topological reference for planning).
It initializes $\tilde{m}$ with a heuristic estimates (line \ref{slct:1}).
It performs a succession of simulations at progressively deeper refinement levels using the function \PLAN  to evaluate the utility of a candidate method instance. The progressive deepening loop (line \ref{slct:3}) is pursued until reaching the maximum rollout depth, or until the actor interrupts the search because of time limit or any other reason, at which point the current $\tilde{m}$ is returned and will be tried by \RAE. \sv{Select} is an \textit{anytime} procedure: it returns a solution whenever interrupted. $Q_{\sigma,s}(m)$ is a global data structure that approximates the utility $U^*(m,s,\sigma)$.

\PLAN (\autoref{alg:upom}) takes as arguments a simulation state $s$, a stack $\stack$, and the rollout depth $d$. It performs one rollout over recursive calls for a method instance $m$ and its refinements. On the first call of a rollout, $m=nil$, meaning that no method instance has yet been chosen. A method instance $m_c$ is chosen among untried method instances (line \ref{uct:mc1}). If all method instances have been tried, $m_c$ is chosen (line \ref{uct:ucb}) according to a tradeoff between exploration and exploitation.
\phantomsection \label{def:C}The constant $C>0$ fixes this tradeoff for the exploration
less sampled method instances (high $C$) versus the exploitation or
more promising ones (low $C$).

\phantomsection \label{def:Q} $Q_{\sigma,s}(m)$ is calculated as follows. $Q_{\sigma,s}(m)$ combines the value of a sampled action with the utility of the remaining part of a rollout (line \ref{uct:eff}), and it updates $Q$ by averaging over previous rollouts (line \ref{uct:qupdate}). The value function $v$ (line \ref{uct:eff}) is either $v_e$ or $v_s$ depending on the chosen utility function, efficiency or success ratio. \phantomsection \label{def:usuf}For both function,  $\Us= \one$ and $\Uf= 0$.

\begin{figure}[!htbp]
\begin{center}
\includegraphics[width=.95\textwidth]{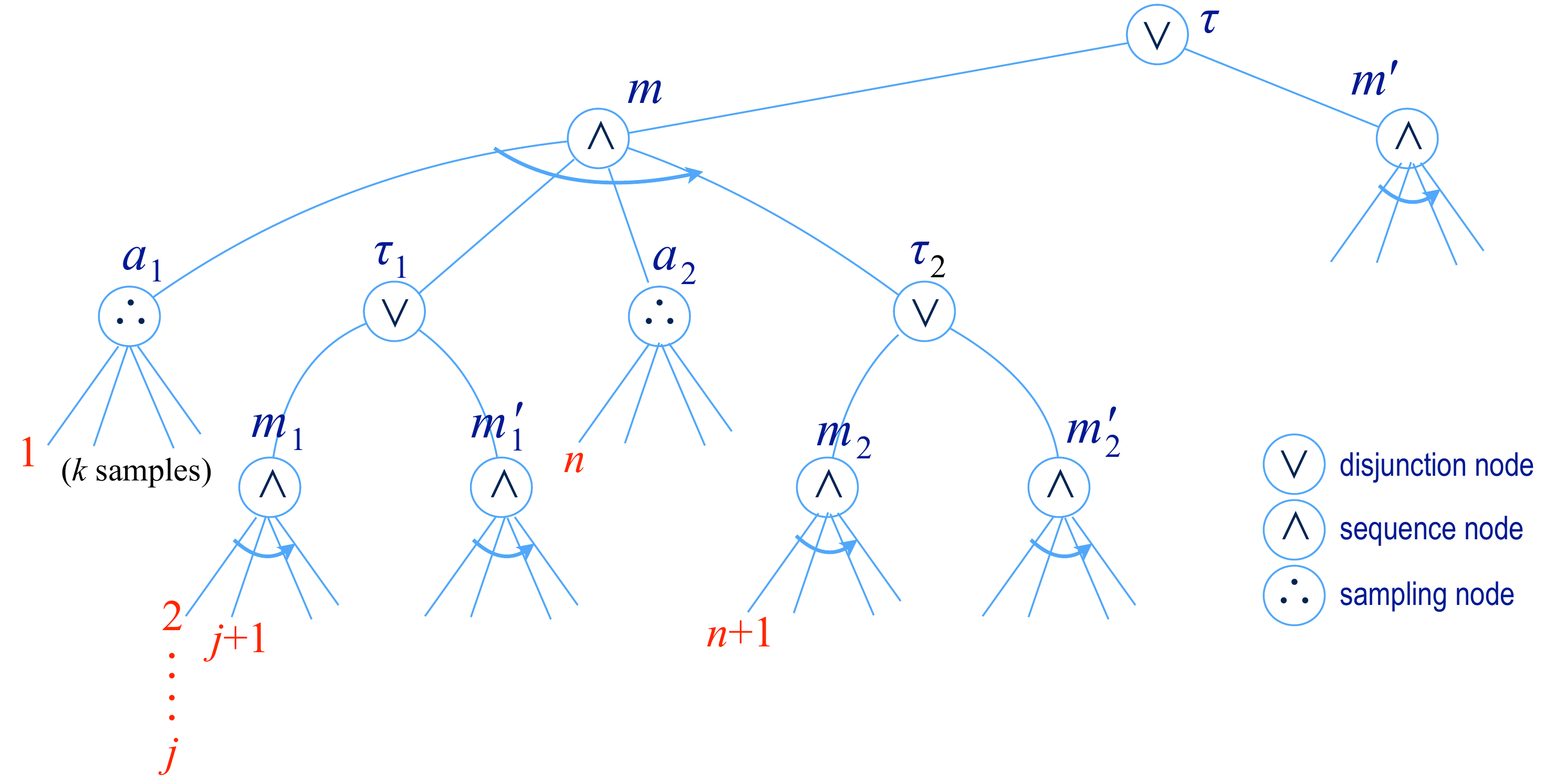}
\caption{A refinement tree, with three types of nodes: \textit{disjunction} for a task over possible method instances, \textit{sequence} for a method instances over all its steps, and \textit{sampling} for an action over its possible outcomes. A rollout can be, for example, the sequence of nodes marked 1 (a sample of $a_1$), 2 (first step of $m_1$), $\ldots, j$ (subsequent refinements), $j+1$ (next step of $m_1$), $\ldots, n$ (a sample of $a_2$), $n+1$ (first step of $m_2$), etc. }
\label{fig:tree}
\end{center}
\end{figure}

\paragraph{Differences from Equations \ref{eq:um*} and \ref{eq:m2*}}
A significant difference between the pseudocode in \autoref{alg:upom} and \autoref{eq:um*} is the restriction of \sv{Applicable} to method instances that have not been tried before by \RAE for the same task. 
This is a conservative strategy, because at this point the actor has no means for distinguishing  failures of tried method instances that require retrials from those that don't. We'll come back to a retrial strategy in \autoref{sec:conclusions}.

Another difference shows up in  the initialization of $\stack$ in \sv{Select}. This is explained by going back to how \sv{Select} is used by \RAE. At a root task $\tau$, when \sv{Select} is called the first time (line \ref{plan1} of \RAE), $\stack=\la(\tau,nil,1,\nullset)\ra$. If \RAE proceeds  for $\tau$ with a method instance $m$ returned by \sv{Select}, at the next refinement call of \RAE, e.g., for $\tau_1$ (see \autoref{fig:tree}), \sv{Select} needs to consider the utility of the method instances for $\tau_1$, but also their impact on the remaining steps in $m$, here on $a_2$ and  $\tau_2$. In other words, the actor requires the best method instance for $\tau_1$ in the context of its current execution state, taking into account the remaining steps of the method instance $m$ it is  executing. This best method instance for $\tau_1$ may be different from that given by \autoref{eq:m2*}. The need to keep track of previously tried method instances and pending tasks explains why $\stack$ is taken as a copy of the current \stack in \RAE for the root task at hand. However, this does not lead to reconsider previously made choices of method instances the actor is currently executing, e.g., in \autoref{fig:tree}, $m'$ is not reassessed.
Note that \PLAN does not pursue a rollout at an internal refinement node with the method instance maximizing the current utility evaluation $Q$, but with the best method instance according to the UCT exploration/exploitation tradeoff (line \ref{uct:ucb}). 

\paragraph{Asymptotic convergence}
In  \ref{app:mapping} we prove the asymptotic convergence of \PLAN towards an optimal method, i.e., as $n_{ro} \to \infty$ (\autoref{th:mapping}). The proof assumes no depth cut-off ($d_{max} = \infty$) and static domains, i.e., domains without exogenous events.\footnote{It should be possible to extend the proof to dynamic domains if there are known probability distributions over the occurrence of exogenous events.} It proceeds by mapping \PLAN's search strategy into UCT, which has been demonstrated to converge on a finite horizon MDP with a probability of not finding the optimal action at the root node that goes to zero at a polynomial rate as the number of rollouts grows to infinity (Theorem 6 of \cite{kocsis2006bandit}). To simplify the mapping, we first consider \PLAN with an additive utility function,
and show how to map \PLAN's search space into an MDP. We then discuss how this can be extended to  the efficiency and success ratio utility functions defined in \ref{sec:plan}, using the fact that the UCT algorithm is not restricted to the additive case; it still converges as long as the utility function is monotonic.

\paragraph{Control parameters}
\phantomsection \label{def:mu}The effects of the two control parameters $d_{max}$ and $n_{ro}$ are not independent.
This is because UCT  exploration examines an untried method instance before pursuing a rollout on an already tried one. 
Exploration would be complete if $n_{ro}>\mu$, where 
\[
\textstyle
\mu=\sum_{\tau_i \text{ is a subtask}}\text{max}_{s}|\sv{Applicable}(s,\tau_i)|,
\]
over all subtasks $\tau_i$ in the refinement tree (see Figure~\ref{fig:tree}), 
down to the refinement depth of the root task. But $\mu$ increases with $d_{max}$. In our experiments, we keep a large constant $n_{ro}$  and increase $d$ in the progressive deepening loop until the max depth $d_{max}$. An alternative  control of \sv{Select} can be the following:
\begin{LIST}
\item for a given $d$, pursue the rollouts (line \ref{slct:nro}) until there are $K$ successive exploitation rollouts, i.e.,  for which $Untried=\nullset$, for some constant $K$;\footnote{The probabilistic roadmap motion planning algorithm uses a similar idea to stop after $K$ configuration samples unsuccessful for augmenting the roadmap.}
\item pursue the progressive deepening loop (line \ref{slct:3}) until no subtask is left unrefined for the $K$ exploitation rollouts or until the search time is over. 
\end{LIST}
This is an adaptive control strategy that requires only two constants $C$ and $K$.

\paragraph{Search depth}
Finally, let us discuss the important issue of the depth cutoff strategy. Two options may be considered: \textit{(i)} $d$ is the number of steps of a rollout (as in MDP algorithms), or \textit{(ii)} $d$ is the refinement depth of a rollout.  The pseudocode in \autoref{alg:upom} takes the former option: $d$ decreases at every recursive call, for an action step as well as for a task refinement step. The advantage is that the cutoff at $d=0$ stops the current evaluation. The difficulty is that the root method instance, and possibly its refinements, are only partially evaluated. For example in  \autoref{fig:tree}, if $j>d_{max}$, steps $a_2$ and $\tau_2$ of $m$ will never be considered; similarly for the remaining steps in $m_1$:  rollouts will go in deep refinements and never assess all the steps of evaluated method instances. The value returned by \PLAN can be arbitrarily far from $U^*$. The other issue of this strategy is that the heuristic estimate  has to take into account remaining refinements lower down the cutoff point as well as remaining steps higher up in the refinement tree, i.e., what remains to be evaluated in $\stack$. 

In the alternative option where $d$ is the refinement depth of a rollout, $d$ decreases at  a task refinement step only, not at an action step. The advantage is to allow each rollout to go through all the steps of every developed method instance. Furthermore, the heuristic estimate at a cutoff is focused in this case on a subtask and its applicable method instances, whose simulation will not be started (nondeveloped method instances). The disadvantage is that one needs an estimate of the state following the achievement of a task with a nondeveloped method instance in order to pursue the sibling steps. In  \autoref{fig:tree} with $d=1$ for example, $\tau_1$ will not be refined; $a_2$ and remaining steps of $m$ will be based on an estimated state following the achievement of $\tau_1$. The definition of a default state change following a task is domain dependent and might not be easily specified in general.

The modifications needed in \PLAN to implement option \textit{(ii)} are as follows:
\begin{LIST}
\item In order to be able to go back to higher levels of $d$ when the simulation is pursued in parent method instances after a cutoff, it is convenient to maintain $d$ as part of the simulation stack: a fifth term $d$ is added in every tuple of $\stack$. 
\item The arguments of \PLAN are modified according to the previous point. 

\item \phantomsection \label{def:g}Line \ref{uct:heuristic} in \PLAN has to pursue the evaluation higher up in $\stack$:
\begin{quote}
\textbf{if} $d=0$ \textbf{then} return $h(\tau,m,s)\oplus\PLAN(g(s,\tau,m),pop(\stack),b,k)$,
\end{quote}
where $g(s,\tau,m)$ is a default state after the achievement of $\tau$ with $m$ in $s$.
\end{LIST}

For our experimental results (see \autoref{sec:implementation}), we have implemented a mixture of the two options: we take $d$ as the refinement steps of a rollout (decreasing $d$ at a task refinement step only), but we stop the evaluation when reaching $d=0$, taking heuristic estimates for the remaining steps of pending method instances. This has the disadvantage of a partial evaluation, but its advantages are to allow easily defined heuristic and not require a following state estimate.

\newcommand{\lm}{\sv{Learn$\pi$}\xspace} 
\newcommand{\lmi}{\sv{Learn$\pi_i$}\xspace} 
\newcommand{\lh}{\sv{LearnH}\xspace} 

\section{Learning for \RAE and \PLAN}
\label{sec:integration}

Purely reactive \RAE chooses a method instance for a task using an {\em a priori} ordering or a heuristic. \RAE with anytime receding-horizon planning uses \PLAN to find an approximately optimal method instance to refine a task or a subtask. At maximum rollout depth, \PLAN needs also heuristic estimates 

The classical techniques for domain independent heuristics in planning do not work for operational refinement models. Specifying by hand efficient domain-specific heuristics is not an acceptable solution. However, it is possible to learn such heuristics automatically by running \PLAN offline in simulation over numerous cases. For this work we relied on a neural network approach, using both linear and rectified linear unit (ReLU) layers.

We developed three learning procedures to guide \RAE and \PLAN:
\begin{LIST}
\item \lm learns a policy which maps a context defined by a task $\tau$, a state $s$, and a stack $\sigma$ to a refinement method $m$ in this context, to be chosen by \RAE when no planning can be performed. In Example~\ref{ex:ee1}, for the task \sv{getSupplies}, \lm is used to choose between \sv{m1-GetSupplies} and \sv{m2-GetSupplies} in the current context. 
\item \lmi learns the values of uninstantiated parameters of refinement method $m$ chosen by \lm.  In Example~\ref{ex:ee1}, for the method \sv{m1-survey}$(l, r)$, \lmi is used to choose the value of $r$. The value of $l$ comes from the argument $l$ in the task \sv{survey}$(l)$.
\item  \lh learns a heuristic evaluation function to be used by \PLAN. 
\end{LIST}

\subsection{Learning to choose methods (\lm)}
In a first approach, \lm learns a mapping from contexts to partially instantiated methods. A parameter of a method instance can inherit its value from the task at hand. However, different instances of a method may be applicable in a given state to the same task. This is illustrated in Example \ref{ex:ee1} by method  \sv{m1-survey}$(l,r)$  where $l$ is inherited from the task, but $r$ can be instantiated as any robot such that \sv{status}$(r)$ = \textit{free}.
\lm simplifies the learning by abstracting all these applicable method instances to a single class. To use the learned policy, \RAE chooses randomly among all applicable instances of the learned method for the context at hand. \lm learning procedure consists of the following four steps, which are schematically depicted in \autoref{fig:lm}. 

\begin{figure}[!h]
	\centering
	\includegraphics[width=0.7\columnwidth]{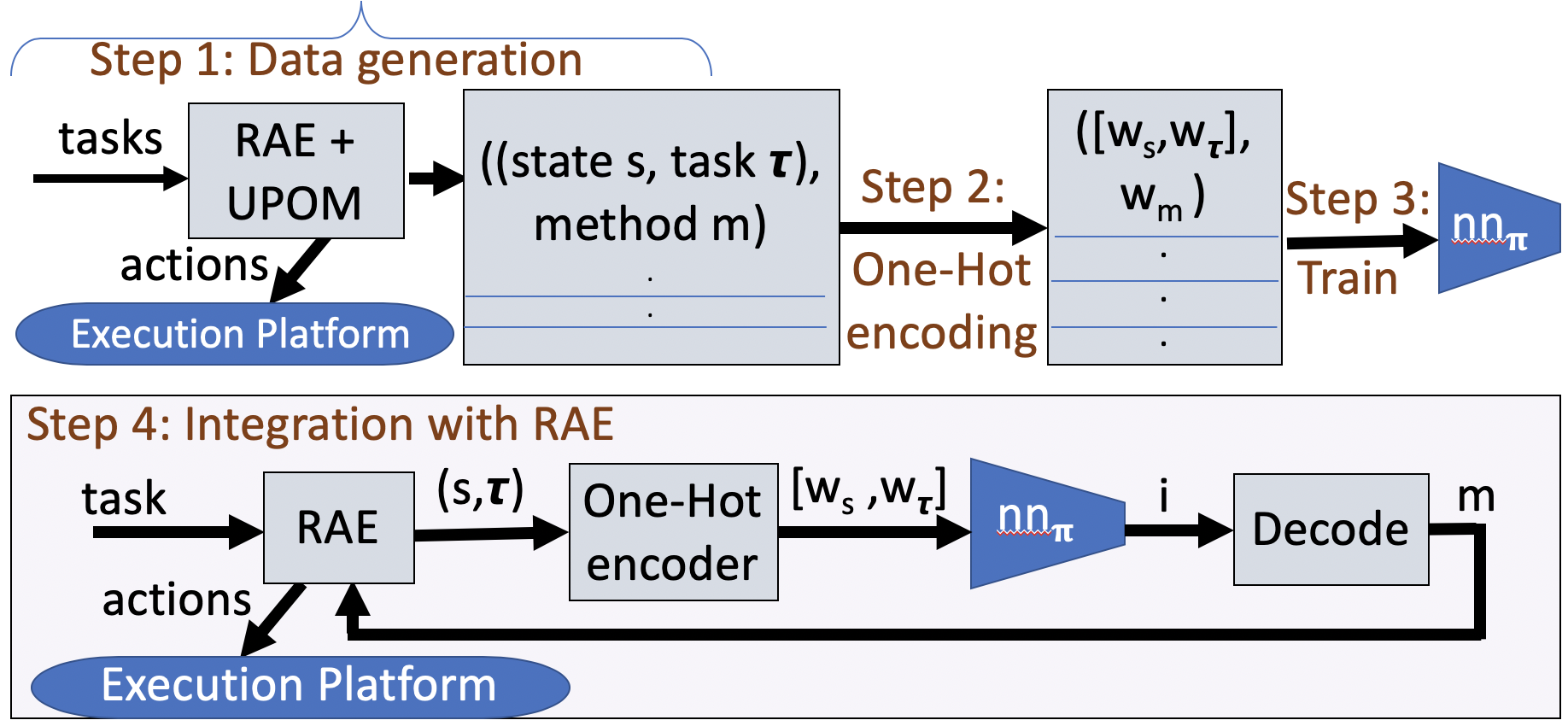}
	\caption{A schematic diagram for the \lm procedure.}
	\label{fig:lm}
\end{figure}

\paragraph{Step 1: Data generation}
\phantomsection \label{def:r}Training is performed on a set of data records of the form $r = ((s, \tau), m)$, where $s$ is a state, $\tau$ is a task to be refined and $m$ is a method for $\tau$. Data records are obtained by making  \RAE call the planner offline with randomly generated tasks.  Each call returns a method instance of the method $m$. We tested two approaches (the results of the tests are in  \autoref{sec:eval}):
\begin{LIST}
	\item $\lm$-1 adds $r = ((s, \tau), m)$ to the training set if \RAE succeeds with $m$ in accomplishing $\tau$ while acting in a dynamic environment.
	\item $\lm$-2 adds $r$ to the training set irrespective of whether $m$ succeeded during acting.
\end{LIST}

\paragraph{Step 2: Encoding}
The data records are encoded according to the usual requirements of neural net approaches. Given a record $r =((s, \tau),m)$, we encode $(s,\tau)$ into an input-feature vector and encode $m$ into an output label, with the refinement stack $\sigma$ omitted from the encoding for the sake of simplicity.\footnote{Technically, the choice of $m$ depends partly on $\sigma$. However, since $\sigma$ is a program execution stack, including it would greatly increase the input feature vector's complexity, and the neural network's size and complexity.}
Thus the encoding is
\begin{equation}
((s, \tau), m) \stackrel{\text{Encoding}}{\longmapsto} 
([w_s,w_\tau], w_m),
\label{eq:encoding}
\end{equation} 
with $w_s$, $w_\tau$ and $w_m$ being One-Hot representations of $s$, $\tau$, and $m$.
The encoding uses an $N$-dimensional One-Hot vector representation of each state variable, with $N$ being the maximum range of any state variable. \phantomsection \label{def:V}Thus if every $s \in S$ 
has $V$ state-variables, then $s$'s representation $w_s$ is $V\times N$ dimensional. 
Note that some information may be lost in this step due to discretization.

\paragraph{Step 3: Training}
\phantomsection \label{def:nnpi}Our multi-layer perceptron (MLP) $nn_\pi$ consists of two linear layers separated by a ReLU layer to account for non-linearity in our training data. 
To learn and classify $[w_s, w_\tau]$ by refinement methods,
we used a SGD (Stochastic Gradient Descent) optimizer and the Cross Entropy loss function.
The output of $nn_\pi$ is a vector of size $|\mathcal{M}|$ where $\mathcal{M}$ is the set of all refinement methods in a domain. 
Each dimension in the output represents the degree to which a specific method is optimal in accomplishing $\tau$.

\paragraph{Step 4: Integration in \RAE}
 \RAE uses the trained network $nn_\pi$ to choose a refinement method whenever a task or sub-task needs to be refined. Instead of calling the planner, \RAE encodes $(s, \tau)$ into $[w_s, w_\tau]$ using Equation~\ref{eq:encoding}. Then $m$ is chosen as
\begin{equation*}
m \leftarrow Decode(\argmax_i(nn_\pi([w_s, w_\tau])[i])),
\end{equation*}
where $Decode$ is a one-one mapping from an integer index to a refinement method.

\subsection{Learning to choose method instances (\lmi)}

Here, we extend the previous approach to learn a mapping from context to fully instantiated methods. The \lmi procedure learns over all the values of uninstantiated parameters  using a multi-layered perceptron (MLP).
We have a separate MLP for each uninstantiated parameter. 

\paragraph{Step 1: Data generation}
\phantomsection \label{def:vun}For each uninstantiated method parameter $v_{un}$, training is performed on a set of data records of the form $r = ((s, v_\tau), b)$, where $s$ is the current state, $v_\tau$ is a list of values of the task parameters, and $b$ is the value of the parameter $v_{un}$. Data records are obtained by making  \RAE call \PLAN offline with randomly generated tasks.  Each call returns a method instance $m$ and the value of its parameters. 

\paragraph{Step 2: Encoding}
Given a record $r =((s, v_\tau), b)$, we encode $(s,v_\tau)$ into an input-feature vector and encode $b$ into an output label.
Thus the encoding is
\begin{equation}
((s, v_\tau), b) \stackrel{\text{Encoding}}{\longmapsto} 
([w_s,w_{v_\tau}], w_b),
\label{eq:encoding_mi}
\end{equation} 
with $w_s$, $w_{v_\tau}$ and $w_b$ being One-Hot representations of $s$, $v_\tau$, and $b$.

\paragraph{Step 3: Training}
\phantomsection \label{def:nnvun}We train a multi-layered perceptron (MLP) for each uninstantiated task parameter $v_{un}$. Each such MLP  $nn_{v_{un}}$ consists of two linear layers separated by a ReLU layer to account for non-linearity in our training data. 
To learn and classify $[w_s, w_{v_\tau}]$ by the values of $v_{un}$,
we used a SGD (Stochastic Gradient Descent) optimizer and the Cross Entropy loss function.
The output of $nn_{v_{un}}$ is a vector of size  $|Range(v_{un})|$. Each dimension in the output represents the degree to which $v_{un}$ takes a specific value.

\paragraph{Step 4: Integration in \RAE}
After \RAE has chosen a refinement method $m$ for task $\tau$, we have \RAE use the trained network $nn_{v_{un}}$ to choose a value for each uninstantiated parameter $v_{un}$. \RAE encodes $(s, v_\tau)$ into $[w_s, w_{v_\tau}]$ using Equation~\ref{eq:encoding_mi}. Then, the value for $v_{un}$, $b$ is chosen as
\begin{equation*}
b \leftarrow Decode(\argmax_j(nn_{v_{un}}([w_s, w_{v_{un}}])[j])),
\end{equation*}
where $Decode$ is a one-one mapping from integer indices to $Range(v_{un})$. 

\smallskip
With \lmi, we choose the value of each uninstantiated parameter independently of the others, while remaining among currently applicable values (i.e., valid method instances). Certainly, parameters are not independent, and are not processed as such by RAE nor UPOM. This choice is simply \textit{relaxation assumption} in \lmi, which is common in the design of heuristics. Despite this relaxation, we found the guidance of \lm and \lmi to be quite effective when compared to reactive \RAE (see \cite[Figures 6 and 7]{patra2020integrating}). Furthermore, recall that \lmi is needed only when methods have more parameters than the task they address, and it is used only as first choice in progressive deepening, when there is no time for planning. 

\subsection{Learning a heuristic function (\lh)} 
\label{sec:lh}
The \lh procedure tries to learn an estimate of the utility $u$ 
of accomplishing a task $\tau$ with a method instance $m$ in state $s$.
One difficulty with this is that $u$ is a real number.
In principle, an MLP could learn the $u$ values using either regression or classification.
We chose to use classification.\footnote{
We chose classification because it outperformed regression in our preliminary experiments. 
It is possible that regression could be made to perform better, but that is beyond the scope of this paper.
}
We divided the range of utility values into $K$ intervals. By studying the range and distribution of utility values, we chose $K$ and the range of each interval such that the intervals contained approximately equal numbers of data records.
\lh learns to predict $interval(u)$, i.e., the interval in which $u$ lies.
The steps of \lh are the following (see \autoref{fig:lh}):

\begin{figure}[!h]
	\centering
	\includegraphics[width=0.7\columnwidth]{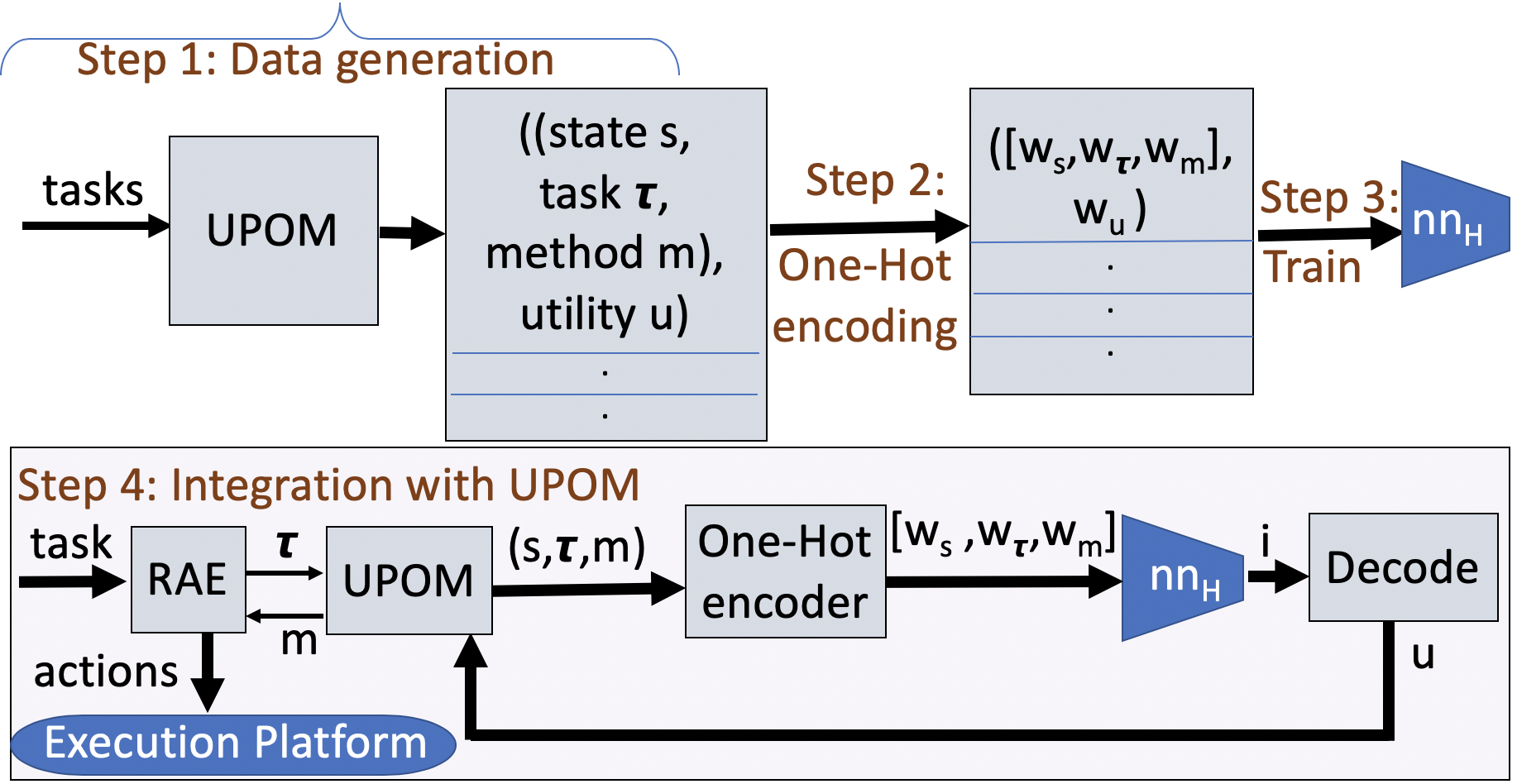}
	\caption{A schematic diagram for the \lh procedure.}
	\label{fig:lh}
\end{figure}

\paragraph{Step 1: Data generation}
We generate data records in a similar way as in the \lm procedure, with the difference that each record $r$ is of the form $((s, \tau, m), u)$ where $u$ is the estimated utility value calculated by \PLAN. 

\paragraph{Step 2: Encoding}
In a record $r =((s, \tau, m), u)$, we encode
$(s, \tau, m)$ into an input-feature vector using $N$-dimensional One-Hot vector representation, omitting $\sigma$ for the same reasons as before. 
If $interval(u)$ is as described above, then the encoding is
\begin{align}
((s, \tau, m), interval(u))
&\stackrel{\text{Encoding}}{\longmapsto} ([w_s,w_\tau, w_m], w_u)
\label{eq:encodinglh}
\end{align} 
with $w_s$, $w_\tau$, $w_m$ and $w_u$ being One-Hot representations of $s$, $\tau$, $m$ and $interval(u)$.

\paragraph{Step 3: Training}
\lh's MLP $nn_H$ is the same as \lm's, except for the output layer. $nn_H$ has a vector of size $K$ as output where $K$ is the number of intervals into which the utility values are split. Each dimension in the output of $nn_H$ represents the degree to which the estimated utility lies in that interval.

\paragraph{Step 4: Integration in \RAE}
\RAE  calls the planner with a limited rollout length $d$, giving \PLAN the following heuristic function to estimate a rollout's remaining utility:
\begin{equation}
\nonumber
h(\tau, m, s) \leftarrow Decode(\argmax_{i}(nn_H([w_s, w_\tau, w_m])[i])),
\end{equation}
where $[w_s,w_\tau, w_m]$ 
is the encoding of
$(\tau, m, s)$ using Equation \ref{eq:encodinglh}, and $Decode$ is a one-one mapping from a utility interval to its mid-point.
Before the progressive deepening loop over calls to \PLAN, \sv{Select} initializes $\tilde{m}$ in line~\ref{slct:1} according to this heuristic $h$.

\subsection{Incremental online learning}

The previous learning procedures rely on synthetic training data obtained from a collection of states and tasks. However, even if one is careful, the training data may not reflect an actor's specific working conditions. This is a well known important issue in machine learning. It can be addressed by continual incremental online learning. Here is a procedure to do so in our framework.
\begin{LIST}

\item  Initialization: either (i) without a heuristic by running \RAE{+}\PLAN online with $d_{max}=\infty$ , or (ii) with an initial heuristic obtained from offline learning on simulated data. 

\item Online acting, planning and incremental learning:
\begin{LIST}
	\item \phantomsection \label{def:Z} Augment the training set by recording successful methods (with the values of uninstantiated parameters) and $U$ values; train the models using \lm and \lh with $Z$ records, and then switch  \RAE to use either \lm alone when no search time is available, or \PLAN with current heuristic $h$ and finite $d_{max}$ when planning time available. 
	\item Repeat the above steps every $X$ runs (or on idle periods) using the most recent $Z$ training records (for $Z$ about a few thousands) to improve the learning on both \lh and \lm.
	
\end{LIST}
\end{LIST}

The issues for deploying this procedure in a practical application are discussed in \autoref{sec:discussion}.

\newcommand{\poweroften}[2]{$#1\times10^{#2}$}
\section{Experimental Evaluation}
\label{sec:implementation}\label{sec:eval}

\subsection{Domains}
We have implemented and tested our framework on five domains which illustrate  service and exploration robotics scenarios with aerial and ground robots. All the agents are under a centralized control. In these domains, acting is performed in a simulated environment. Consequently, we did not need to abstract the acting state $\xi$ into a planning state $s$; both are identical in our tests.

The \SR domain extends the search and rescue setting of Example~\ref{ex:ee1} with several UAVs surveying a partially mapped area and finding injured people in need of help. UGVs gather supplies, such as medicines, and go to rescue the localized persons. Exogenous events are weather conditions and debris in  paths. The complete set of tasks, commands, and refinement methods for the \SR domain is described in \ref{app:rescue}.

In \EE, several chargeable UGVs and UAVs explore a partially known terrain and gather information by surveying, screening, monitoring, e.g., for ecological studies. They need to go back to the base regularly to deposit data or to collect a specific equipment. Exogenous events are appearance of  animals in motion.

In \CR domain, several robots are collecting objects of interest. The robots are rechargeable and may carry the charger with them.
They can't know where objects are, unless they do a sensing action at the object's location. They must search for an object before collecting it. A task reaches a dead end if a robot is far away from the charger and runs out of charge. While collecting objects, robots may have to attend to some emergency events happening in certain locations.

The \SD domain has several robots trying to move objects from one room to another in an environment with a mixture of spring doors (which close unless they're held open) and ordinary doors. A robot can't simultaneously carry an object and hold a spring door open, so it must ask for help from another robot. A free robot can be the helper. The type of each door isn't known to the robots in advance. 

The \OF domain has several robots in a shipping warehouse that must co-operatively package incoming orders, i.e., lists of items of different types and weights to deliver to customers. Items for a single order have be placed in a machine, which packs them together; packages have to be placed in the shipping doc. To process multiple orders concurrently, items can be moved to a pallet before transfer to a machine.  Robots have limited capacities. 

\SR, \EE, \SD and \CR have sensing actions.
\SR , \EE, \CR and \OF can have dead-ends. The features of these domains are summarized in \autoref{fig:dom_prop},  \autoref{fig:dom_numbers}  and  \autoref{fig:searchSpaceSize}. Recall from Section~\ref{sec:operational} that $\mathcal{M}$  is the set of all refinement methods, $\overline{\mathcal{M}}$ is the set of all refinement method instances, and $\overline{\mathcal{T}}$ and $\overline{\mathcal{A}}$ are the sets of instances of tasks and actions. 
In  \autoref{fig:dom_numbers}, the upper bound on the size of the state space is $|S| < \prod_{x \in X}{|Range(x)|}$, since state variables are generally not independent.

\begin{table}[!ht]
	\centering
	\small
	\begin{tabular}{|@{~}c@{~~}|@{~~}c@{~~}c@{~~}c@{~~}c@{~~}c@{~}|}
		\hline
			&   Dynamic & Dead 	& Sensing	& Robot 		& Concurrent\\
		Domain  & events	& ends	&   	& collaboration & tasks	\\
		\hline
		\\[-2ex]
\SR & \checkmark & \checkmark & \checkmark & \checkmark & \checkmark \\			
\EE &  \checkmark & \checkmark & \checkmark & \checkmark & \checkmark\\
\CR & \checkmark & \checkmark & \checkmark & -- & \checkmark\\
\SD &  \checkmark & -- & \checkmark & \checkmark & \checkmark\\
\OF &  \checkmark & \checkmark  & -- & \checkmark & \checkmark\\
		\hline
	\end{tabular}\vspace{-1ex}
	
	\caption{Features of the test domains.}
	\label{fig:dom_prop}
\end{table}

\begin{table}[!ht]
	\centering
	\small
	\begin{tabular}{|c|c|c|c|c|c|c|c|}
		\hline
		\\[-2ex]
		Domain & Upper bound on $|\mathcal{S}|$  & $|\mathcal{T}|$ & $|\overline{\mathcal{T}}|$ & $|\mathcal{M}|$  & $|\overline{\mathcal{M}}|$ & $|\mathcal{A}|$	& $|\overline{\mathcal{A}}|$\\
		\hline
		\\[-2ex]
\SR & \poweroften{9.6}{16} & 8 & 2508 & 16 & 8766 & 14 & \poweroften{2.7}{6} \\			
\EE & \poweroften{1.93}{21} & 9 & 358 & 17 & 756 &  14 & 843 \\
\CR &  \poweroften{2.4}{17} & 6 & 270 & 10 & 282 & 9  & 463 \\
\SD & \poweroften{1.5}{11} & 6 & 192 & 9 & 651 & 10 & 490 \\
\OF &$\infty$& 6 & 64 & 6 & 318 & 9 & 4442 \\
		\hline
	\end{tabular}\vspace{-1ex}
	
	\caption{Sizes of the test domains. 
	$\mathcal{M}$  is the set of all refinement methods, $\overline{\mathcal{M}}$ is the set of all refinement method instances, and $\overline{\mathcal{T}}$ and $\overline{\mathcal{A}}$ are the sets of instances of tasks and actions. In the \OF domain, the number of states is infinite because
	some of the state variables may have real values, but the other numbers are finite because the
	task, methods, and instances do not have real-valued arguments.
	}
	\label{fig:dom_numbers}
\end{table}

\begin{table}[!ht]
	\centering
	\small
	\begin{tabular}{|@{~}c@{~~}|@{~~}c|c|c|c|c|c|}
		\hline
		Domain &  \multicolumn{2}{c|}{Rollout length} &  \multicolumn{4}{c|}{ Branching factor}   	\\\cline{1-7}
		 &  & &   \multicolumn{2}{c|}{Task nodes ($\vee$)} & \multicolumn{2}{c|}{Action nodes ($\therefore$)}    	\\\cline{4-7}
	& Avg & Max & Avg & Max & Avg & Max  \\

		\hline
\SR &  19 & 27 &  2.4 & 4 & 1.1 & 2  \\			
\EE &  20 &  112 &  1.5 & 3  & 1.2 & 2\\
\CR & 26 & 56 & 1.3 & 3  & 1.4 & 4\\
\SD &  30 & 78 &  2 & 6 & 1.2 & 2\\
\OF & 42  &  52 & 1.1  & 64 & 1.1  & 2\\
		\hline
	\end{tabular}\vspace{-1ex}
	
	\caption{Estimates of the search space parameters of the test domains.}
	\label{fig:searchSpaceSize}
\end{table}

\subsection{Planning parameters}
\label{sec:experiments}

Here we analyze the effect of the two planning parameters, $n_{ro}$ and $d_{max}$, on the two utility functions we considered, the efficiency, and the success ratio, as well as on the retry ratio of \RAE. We tested $n_{ro} \in [0, 1000 ] $ and $d_{max} \in [0,30]$. The case $n_{ro}=0$ rollout corresponds to purely reactive  \APE, without  planning. We only report for $n_{ro} \in [0, 250]$ since no significant additional effect was observed beyond $n_{ro}> 250$. We tested each domain on 50 randomly generated problems. A problem consists of one or two root tasks that arrive at  random time points in \APE's input stream, together with other randomly generated exogenous events. For each problem we recorded 50 runs to account for the nondeterministic effects of actions.
We measured the following:
\begin{LIST}
\item  the efficiency of \RAE for a task, i.e.,  the reciprocal of the sum of the costs of the actions executed by \RAE for accomplishing that task;
\item the success ratio of \RAE for a run, i.e., the number of successful tasks over the total of tasks for that run; and
\item the retry ratio of \RAE for a run, i.e., the number of call to \sv{Retry} over the total of  tasks for that run.
\end{LIST}

Since we are more concerned with the relative values than the absolute values of the efficiency, the success ratio, and the retry ratio, we rescaled in the following plots the $Y$ axis with respect to the base case for $n_{ro}=0$. 
Note that the measured efficiency takes into account the execution context with concurrent tasks and exogenous events; hence it is different for the corresponding utility function optimized in \PLAN (i.e., the expected efficiency of \autoref{eq:um*}); similarly for the success ratio.
We used a 2.8 GHz Intel Ivy Bridge processor.
The cut-off time for a run was set to 30 minutes.

\begin{figure}[!ht]
\centering 
	\includegraphics[width=0.9\columnwidth]{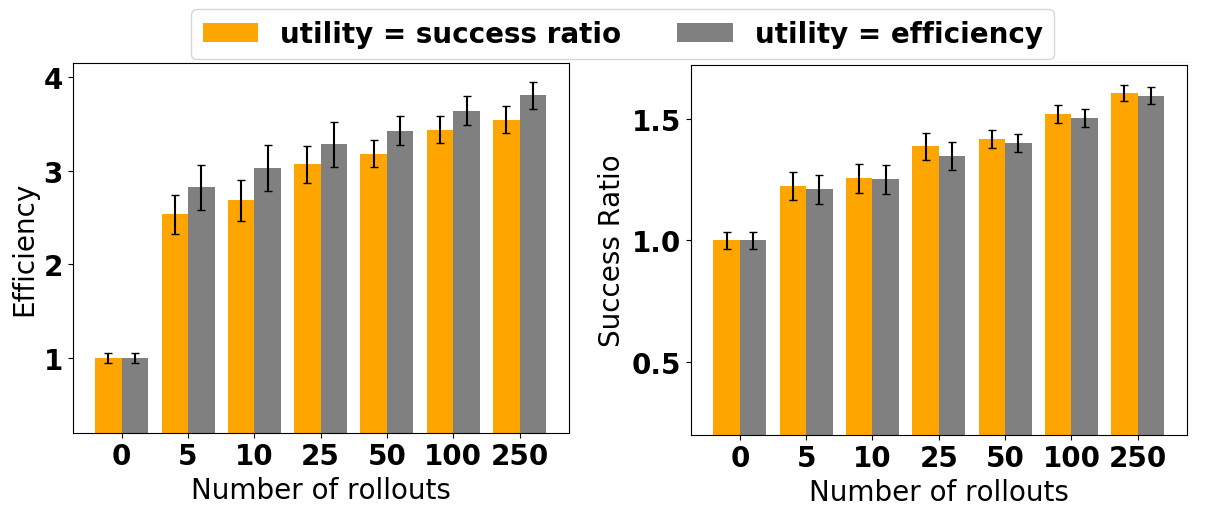}
	\caption{Efficiency and success ratio for two different utility functions (orange is expected success ratio and gray is expected efficiency) averaged over all five domains, with $d_{max} = \infty$. The $Y$ axis is rescaled with respect to the base case of $U$ for $n_{ro}=0$.}
	\label{fig:nu_sr_UCT_max_depth}
\end{figure}

\paragraph{Comparison of the two utility functions}
We studied two utility functions that are not totally independent but assess different criteria. The success ratio is useful as a measure of robustness. Suppose method instance $m_1$ is always successful but has a large cost, whereas $m_2$ sometimes fails but costs very little when it works: $m_1$ has a higher success ratio, but $m_2$ has higher expected efficiency.

\autoref{fig:nu_sr_UCT_max_depth} shows the measured efficiency and success ratio of \RAE for the two utility functions, averaged over all domains. Each  data point is the average of $12,500$ runs, with the error bars showing 95\% confidence interval; we plot relative values with respect the base case of $U$ for $n_{ro}=0$. As expected, the measured efficiency is higher when the optimized utility function  of \PLAN is the expected efficiency. Similarly for the success ratio. However, optimizing one criteria has also a good effect on the other one, since the two are not independent.
We also observe that 5 rollouts have already a significant effect on the efficiency, with slight improvements as \PLAN does more rollouts. In contrast, the success-ratio increases smoothly from no planning to planning with 250 rollouts. This can be due to the difference between the two criteria: a task that succeeds in its first attempt and a task that succeeds after several retries of \RAE have both a success-ratio of 1, but the efficiency in the latter case is lower. This point is analyzed next.

\paragraph{Retry ratio}
\autoref{fig:rr_de_UCT_max_depth} shows the {\em retry ratio}, i.e., the number of calls to \Retry, divided by the total number of tasks.  Recall that calling \Retry is how \RAE faces the failure of chosen methods; \RAE may succeed but after numerous retrials, which is not desirable. Although the retry ratio criteria is not independent from the combined two utility functions, this ratio depicts very clearly how effective the guidance of \RAE is. We observe that the retry ratio drops sharply from purely reactive \RAE to calling \PLAN with 5 rollouts. From then onwards, until 250 rollouts, the retry ratio continues to decrease gradually. The behavior is similar in all domains, so we have combined the results together to show the rescaled average values in a single plot.

\begin{figure}[!ht]
	\centering 
	\includegraphics[width=0.5\columnwidth]{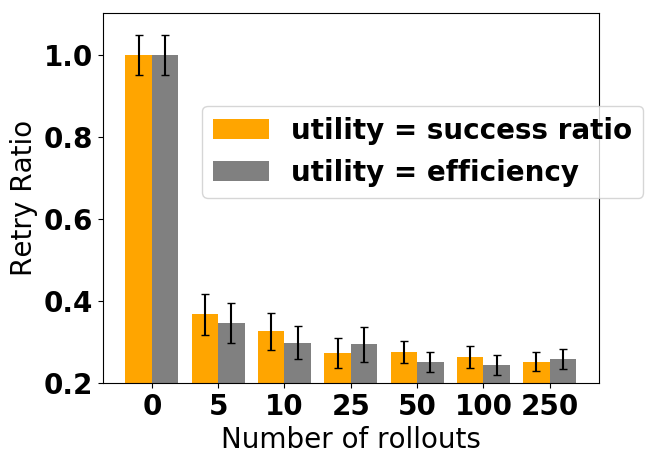}
	\caption{Retry ratio (\# of retries / total \# of incoming tasks)  averaged over all five domains, for \UPOM with $d_{max} = \infty$; $Y$ axis  rescaled with respect to the base case for $n_{ro}=0$.}
	\label{fig:rr_de_UCT_max_depth}
\end{figure}

\paragraph{Efficiency across domains}
In \autoref{fig:nu_de_UCT_max_depth} we detail for each domain the measured efficiency of \RAE when the utility of \PLAN was set to expected efficiency, for varying $n_{ro}$ and $d_{max} = \infty$. Each data point is the average of $2500$ runs. We observe that the efficiency generally improves with  the number of rollouts. However, there is not much improvement with increase in $n_{ro}$ in the \CR domain, and in the \OF domain, the efficiency drops slightly when $n_{ro} = 250$. We conjectured that this can be due to concurrent interfering tasks. Hence, we measured for  \CR and \OF domains the efficiency for test cases with only one root task; the results in \autoref{fig:nu_de_UCT_max_depth_one_task} confirmed this conjecture.

\begin{figure}[!ht]
\centering 
	\includegraphics[width=0.33\columnwidth]{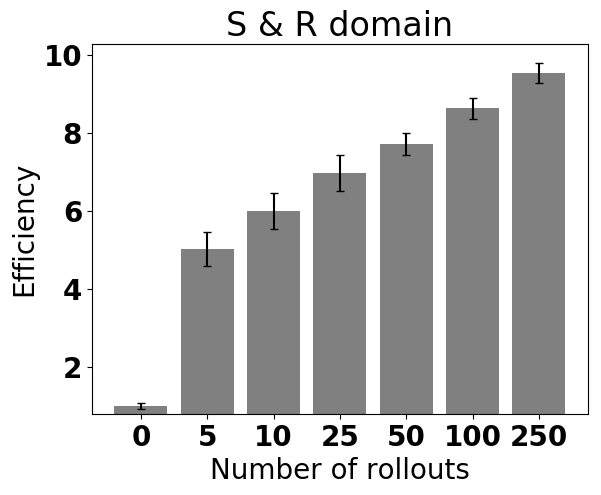}%
\hspace{\fill}\includegraphics[width=0.33\columnwidth]{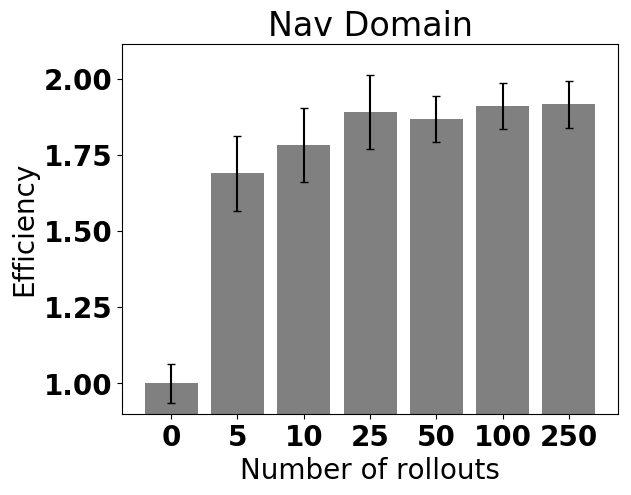}%
\hspace{\fill}\includegraphics[width=0.33\columnwidth]{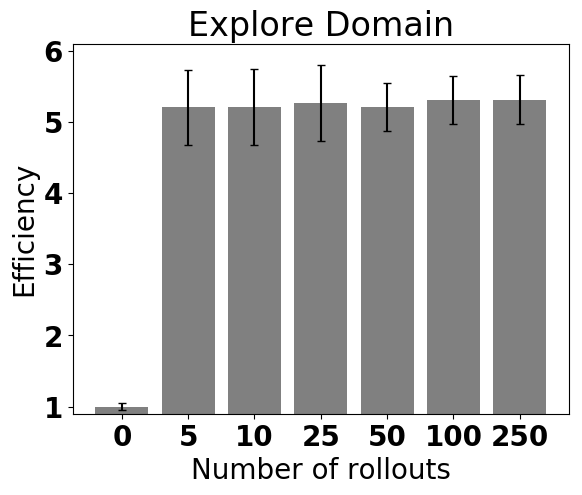}
	
		\includegraphics[width=0.33\columnwidth]{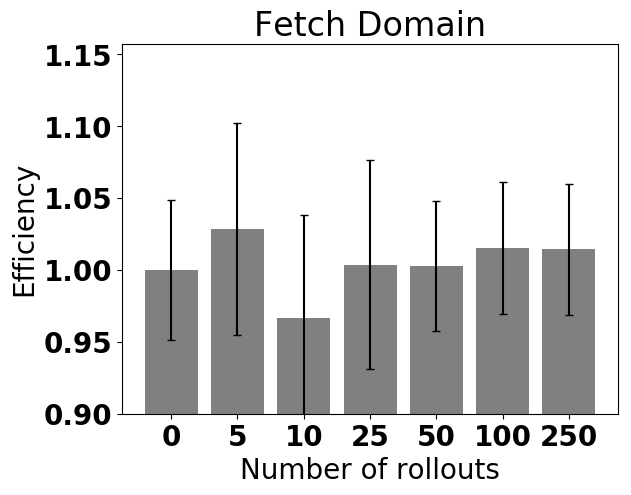}%
\hspace{\fill}\includegraphics[width=0.33\columnwidth]{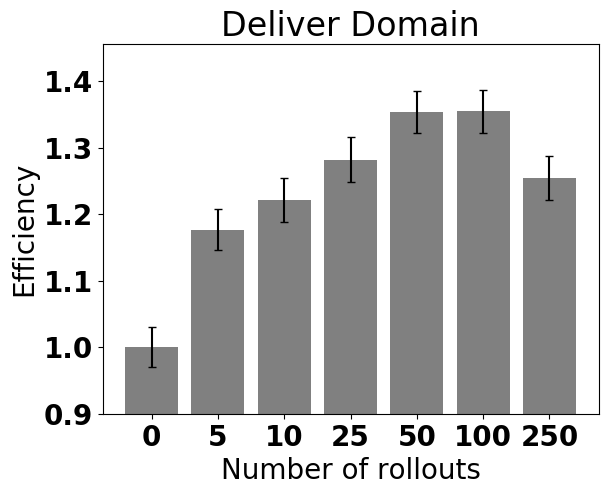}%
\hspace{\fill}\includegraphics[width=0.33\columnwidth]{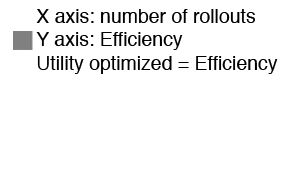}
	\caption{Measured efficiency of \RAE for  $n_{ro} \in [0, 250]$ and $d_{max} = \infty$; $Y$ axis  rescaled with respect to the base case for $n_{ro}=0$).}
	\label{fig:nu_de_UCT_max_depth}
\end{figure}

\begin{figure}[!ht]
\centering 
	\includegraphics[width=0.33\columnwidth]{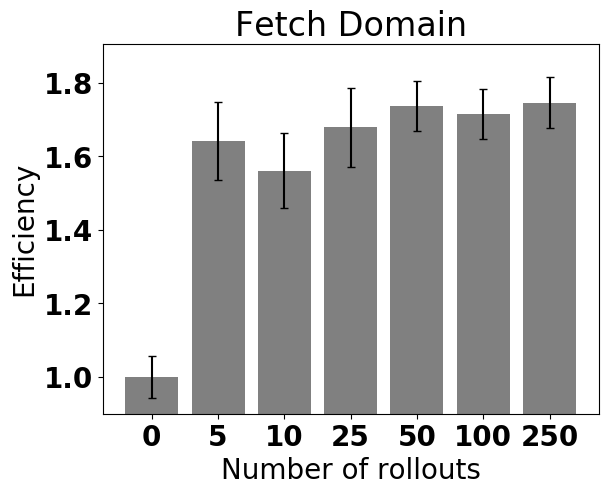}
	\includegraphics[width=0.33\columnwidth]{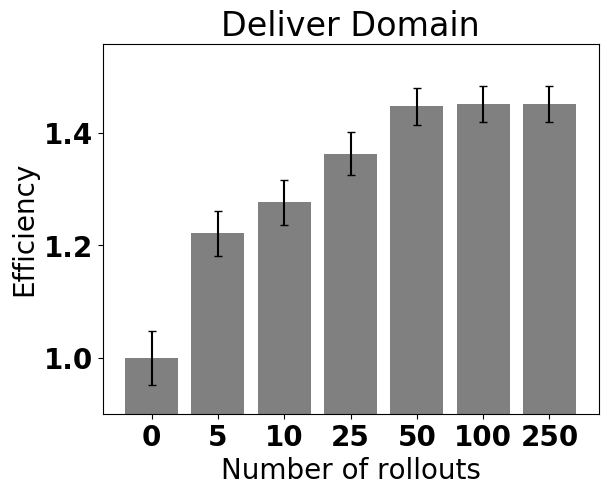}

	\caption{Measured efficiency averaged over only test cases with one root task, in \CR and \OF domains with $d_{max} = \infty$; $Y$ axis  rescaled with respect to the base case for $n_{ro}=0$.}
	\label{fig:nu_de_UCT_max_depth_one_task}
\end{figure}

\paragraph{Success ratio across domains} 
\autoref{fig:sr_de_UCT_max_depth} shows for each domain the measured success ratio of \RAE when the utility of \PLAN was set to expected success ratio, for varying $n_{ro}$ and $d_{max} = \infty$.
The success-ratio generally increases with increase in the number of rollouts. Again, a slight drop is observed in the \OF domain. \autoref{fig:sr_one_task} shows that for test cases with only one root task the success-ratio improves in the \CR domain, and remains constant in the \OF domain. 
The success ratio remains 1 in the \OF domain because all test cases with one root task succeed eventually, with or without retries. In the domains with dead ends, the improvement in success ratio is more substantial than domains without dead ends because planning is more critical for cases where one bad choice of refinement method instance can lead to permanent failure.

\begin{figure}[t]
	\centering % use \centering rather than \begin{center} because the latter inserts vertical space

	\includegraphics[width=0.33\columnwidth]{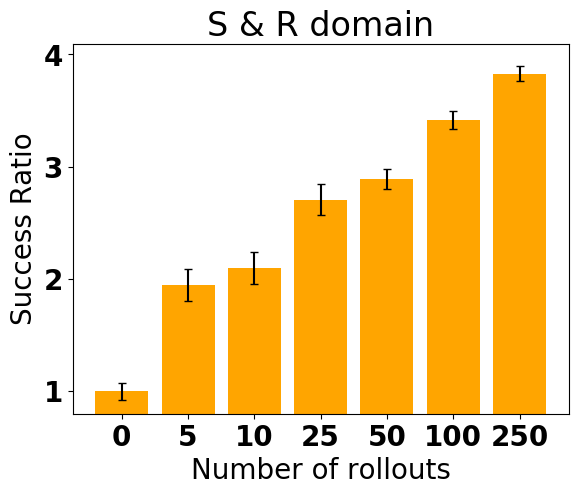}%
\hspace{\fill}\includegraphics[width=0.33\columnwidth]{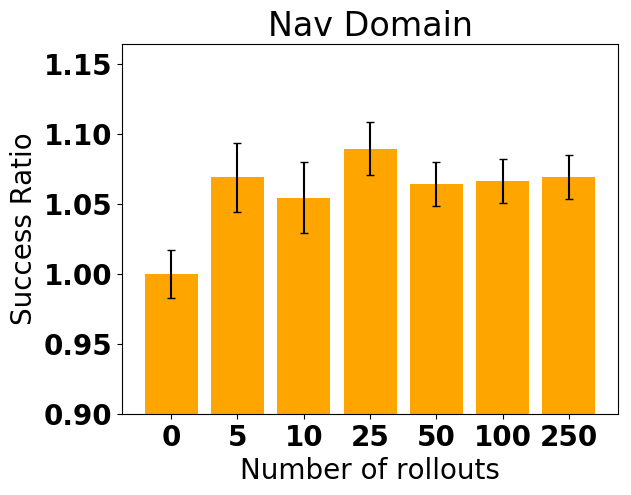}%
\hspace{\fill}\includegraphics[width=0.33\columnwidth]{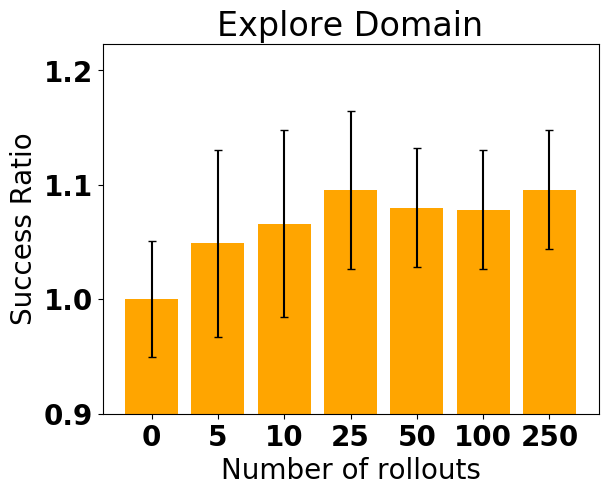}
	
		\includegraphics[width=0.33\columnwidth]{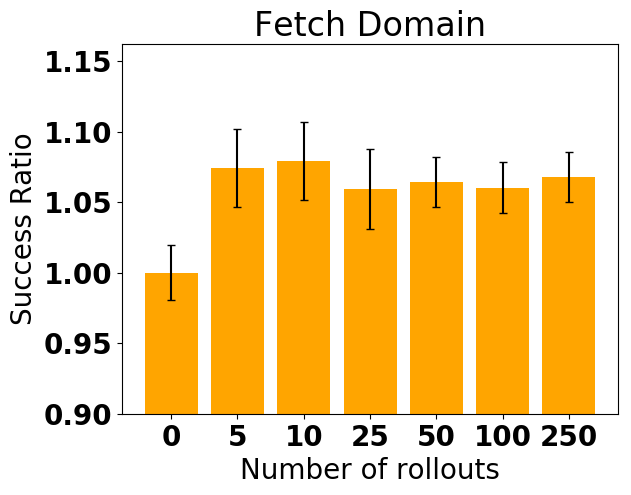}%
\hspace{\fill}\includegraphics[width=0.33\columnwidth]{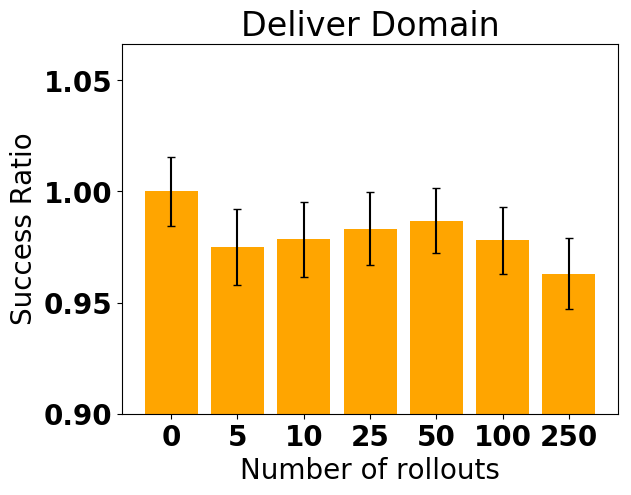}
\hspace{\fill}\includegraphics[width=0.33\columnwidth]{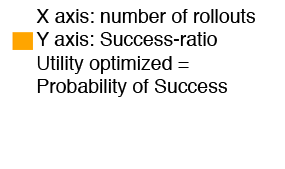}
	\caption{Measured success ratio (\# of successful incoming tasks/ total \# of incoming tasks) for  $n_{ro} \in [0, 250]$ and  $d_{max} = \infty$; $Y$ axis  rescaled with respect to the base case for $n_{ro}=0$.}
	\label{fig:sr_de_UCT_max_depth}
\end{figure}

\begin{figure}[t]
\centering % use \centering rather than \begin{center} because the latter inserts vertical space	
	\includegraphics[width=0.33\columnwidth]{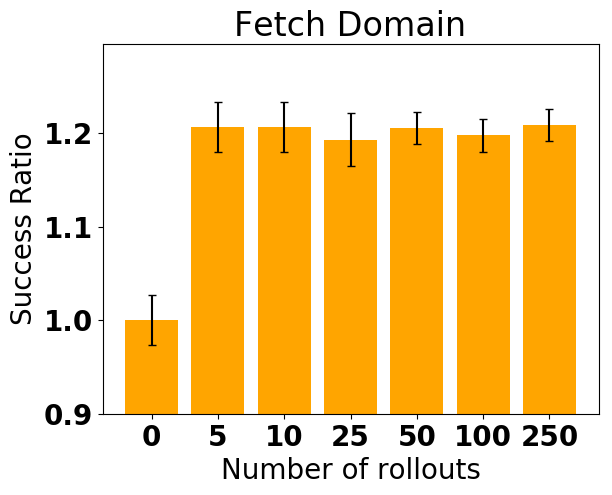}
	\includegraphics[width=0.33\columnwidth]{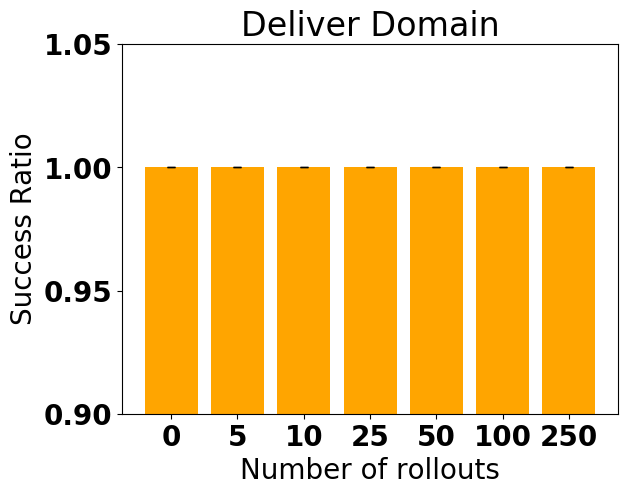}
	
	\caption{Measured success ratio averaged over only test cases with one root task, in \CR and \OF domains with $d_{max} = \infty$; $Y$ axis  rescaled with respect to the base case for $n_{ro}=0$.}
	\label{fig:sr_one_task}
\end{figure}

\paragraph{Depth and Heuristics}
We ran \UPOM at different values of $d_{max} \in [0, 30]$, without progressive deepening in \sv{Select}. At the depth limit, \PLAN estimates the remaining efficiency using one of the following heuristic functions:
\begin{LIST}
	\item \phantomsection \label{def:heur}$h_0$ always returns $\infty$;
	\item $h_D$ is a hand written domain specific heuristic;
	\item $h_\lh$ is the heuristic function learned by the \lh procedure (\autoref{sec:lh}).
\end{LIST}

The results, in \autoref{fig:nu_de_UCT_lim_depth}, show that the efficiency generally increases with depth across all domains. In the \SD domain, the $h_\lh$ performs better than $h_0$ and $h_D$ with 95\% confidence at depths 2 and 3. In the \EE domain, $h_\lh$ performs better than $h_0$ and $h_D$ at depth 1 with 95\% confidence. The same is true for \CR at depth 2. In the \OF domain, the learned heuristic performs better than the others with 95\% confidence for all depths $>=$ 1. The performance difference between the three different heuristics are due to the properties of the domain, how the refinement methods are designed and how much of it is learnable by the \lh procedure.

\begin{figure}[!ht]
\centering % use \centering rather than \begin{center} because the latter inserts vertical space

	\includegraphics[width=0.33\columnwidth]{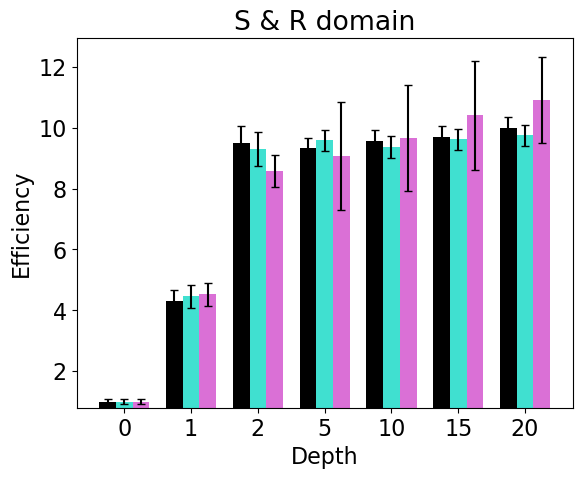}%
\hspace{\fill}\includegraphics[width=0.33\columnwidth]{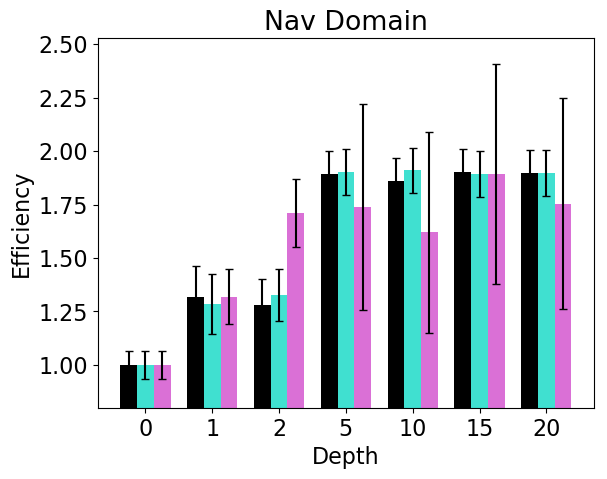}%
\hspace{\fill}\includegraphics[width=0.33\columnwidth]{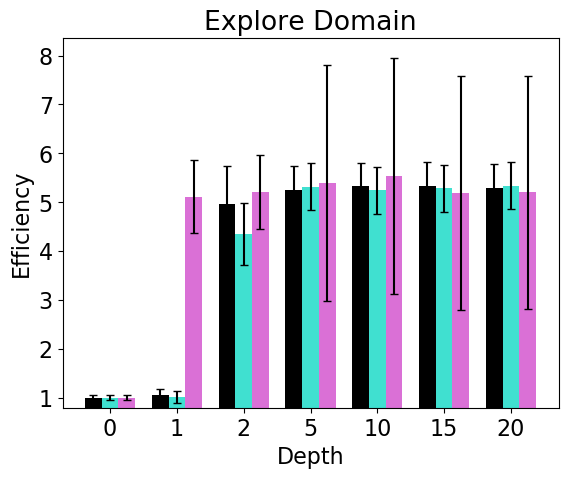}
	
		\includegraphics[width=0.33\columnwidth]{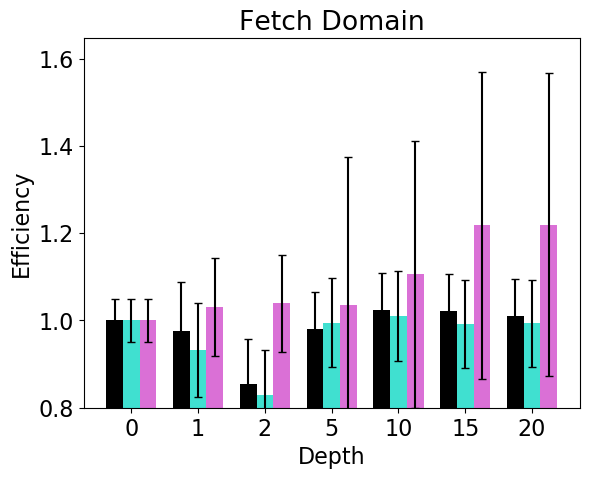}%
\hspace{\fill}\includegraphics[width=0.33\columnwidth]{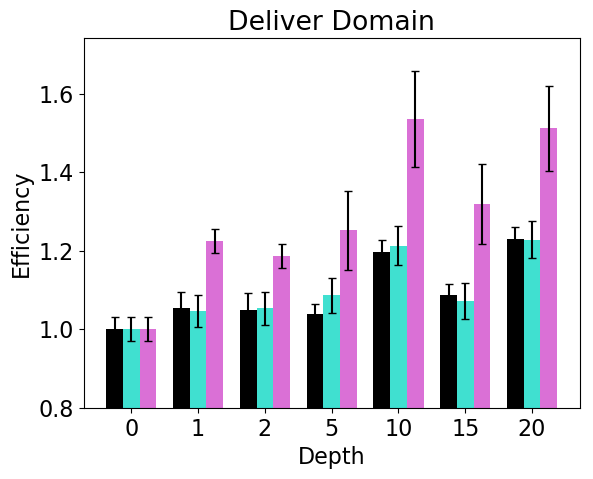}%
\hspace{\fill}\includegraphics[width=0.3\columnwidth]{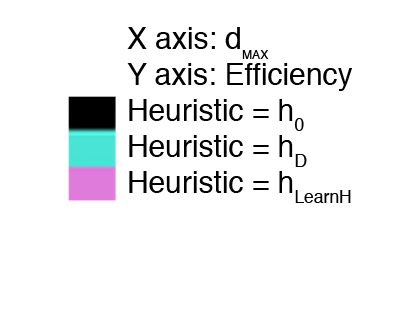}
	
	\caption{Measured efficiency with limited depth and  three different heuristic functions. The utility function optimized  is expected efficiency; $Y$ axis  rescaled with respect to the base case for $n_{ro}=0$.}
	\label{fig:nu_de_UCT_lim_depth}
\end{figure}

\paragraph{Measured \textit{vs} expected efficiency}
Each time \RAE calls \UPOM, \UPOM uses its rollouts to make a prediction of expected efficiency.
This predicted efficiency may differ from the measured efficiency that \RAE achieves by time that it finishes. We will use the term {\em relative error} to denote the absolute value of this difference divided by the error in \UPOM's initial prediction with a constant number of rollouts \footnote{We choose to plot the error relative to 5 \UPOM rollouts.}  (i.e., \UPOM's prediction before \RAE has performed any actions).
\autoref{fig:error_eff} shows the average relative error of \UPOM's predictions, averaged over all $12,500$ runs of our test problems,
as a function of \RAE's {\em progress}, i.e.,
how many actions \RAE has performed since it began.
Note that \UPOM's relative error generally decreases as \RAE's progress increases, and that it is quite small after just one or two actions.

\begin{figure}[!ht]
	\centering % use \centering rather than \begin{center} because the latter inserts vertical space
	
	\includegraphics[width=0.6\columnwidth]{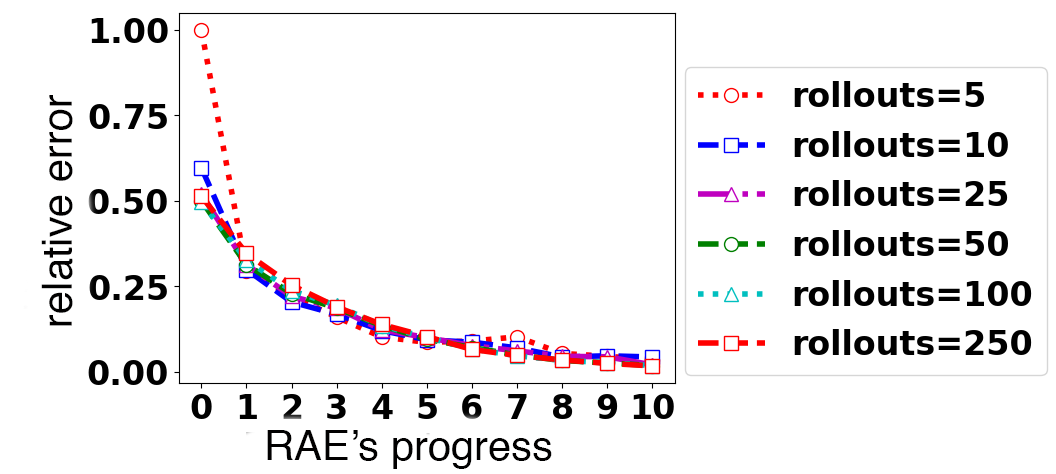}
	
	\caption{Average relative error of \UPOM's predictions, for various numbers of rollouts by \UPOM, shown as a function of \RAE's progress (i.e., how many actions \RAE has performed so far). Each data point is an average over all $12,500$ runs of our test problems.
}
	
	\label{fig:error_eff}
\end{figure}

\subsection{Assessment of  \PLAN}

We are not aware of any comparable planner for operational models, but of  \RPLAN \cite{patra2019acting}, a Monte Carlo Tree Search procedure we developed earlier. We discuss here \RAE with \PLAN \textit{vs} \RPLAN.\footnote{We didn't compare \PLAN with any non-hierarchical planning algorithms because it would be very difficult to perform a fair comparison, as discussed in \cite{kambhampati2003are}.
}
We configured \PLAN to optimize the expected efficiency as its utility function, the same as \RPLAN. 
In order not to favor the UCT strategy of \PLAN with respect to the tree branching strategy of \RPLAN, we set $n_{ro}=1000$, with $d_{max} = \infty$ in each rollout.

\autoref{fig:totalTime} shows the computation time for a single run of a problem (one or two root tasks), averaged across all domains and problems, i.e., over $12500$ runs. \RAE with \PLAN runs more than twice as fast as \RAE with \RPLAN. Note that the computation  time of \RAE alone is negligible, since it is designed to be a fast reactive system, without search.  However, in physical experiments, the total time includes sensing and actuation time, hence the planning overhead would not appear as significant as it is here.

\begin{figure}[!h]
	\centering
	\includegraphics[width=0.6\columnwidth]{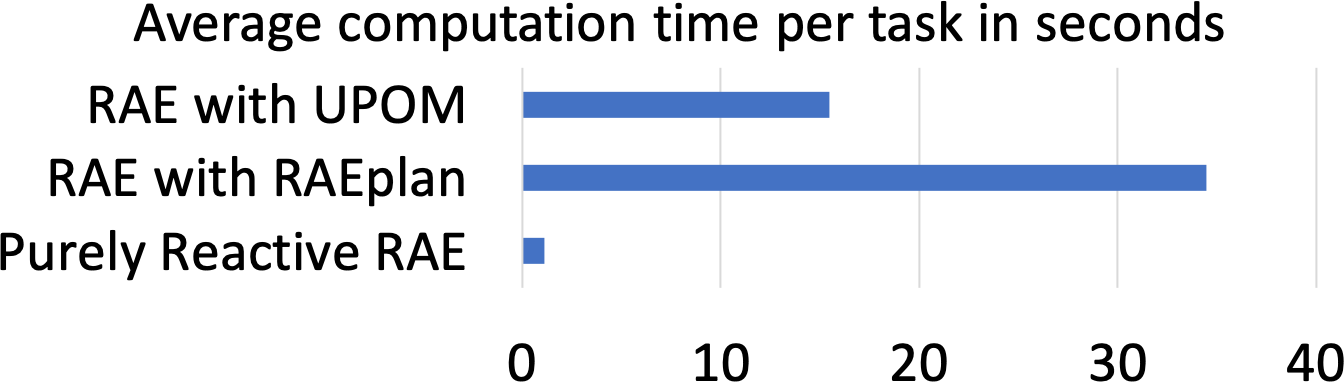}
	\caption{Average computation time in seconds for a single run of a problem, for \RAE with and without the planners.}
	\label{fig:totalTime}
\end{figure}

\paragraph{Efficiency} 
\autoref{fig:nu} gives the measured efficiency for the five domains, with the 95\% confidence intervals.
It shows in all domains that \RAE with \PLAN is more efficient than purely reactive \RAE and \RAE with \RPLAN.

\begin{figure}[!ht]
	\centering
	\includegraphics[width=0.99\columnwidth]{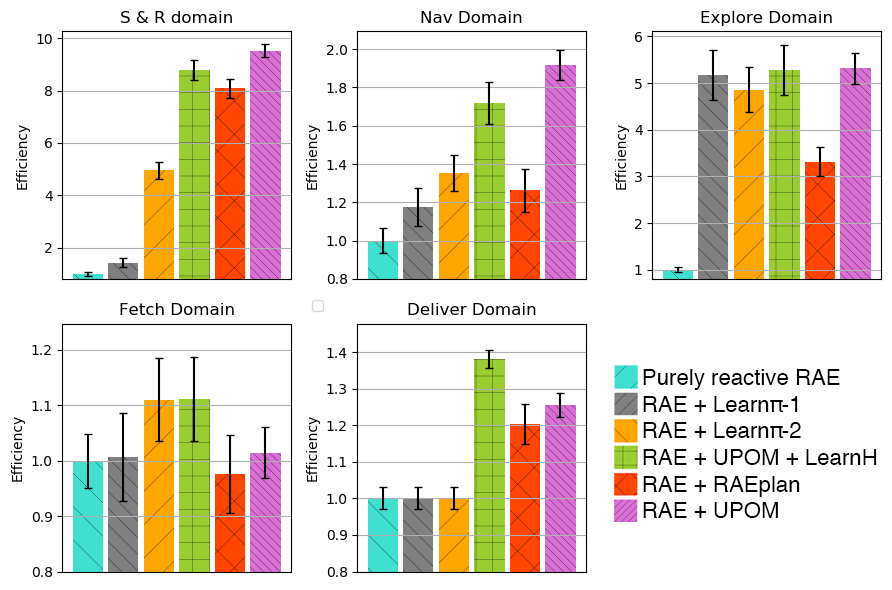}
	\caption{Measured efficiency for each domain with purely reactive \RAE, \RAE with \RPLAN, \RAE with the policies learned by \lm without planning, \RAE with \PLAN, the heuristic learned by \lh and $d_{max}=5$, and \RAE with \PLAN and $d_{max}=\infty$; $Y$ axis  rescaled with respect to the base case for $n_{ro}=0$.} 
	\label{fig:nu}
\end{figure}

\paragraph{Success ratio} 
\autoref{fig:sr} shows \RAE's success ratio both with and without the planners.
We observe that planning with \PLAN outperforms purely reactive \RAE in \SR and \CR with 95\% confidence, and \EE and \SD with 85\% confidence. Also,  \PLAN outperforms \RPLAN in \CR and \SD domains with a 95\% confidence, and \EE domain with 85\% confidence. In the \SR domain, the success ratio is similar for \RPLAN and \PLAN.

Asymptotically, \PLAN and \RPLAN should have near-equivalent efficiency and success ratio metrics. They differ because neither are able to traverse the entire search space due to computational constraints. Our experiments on simulated environments suggest that \PLAN is more effective than \RPLAN when called online with real-time constraints.

\begin{figure}[t]
	\centering
	\includegraphics[width=0.99\columnwidth]{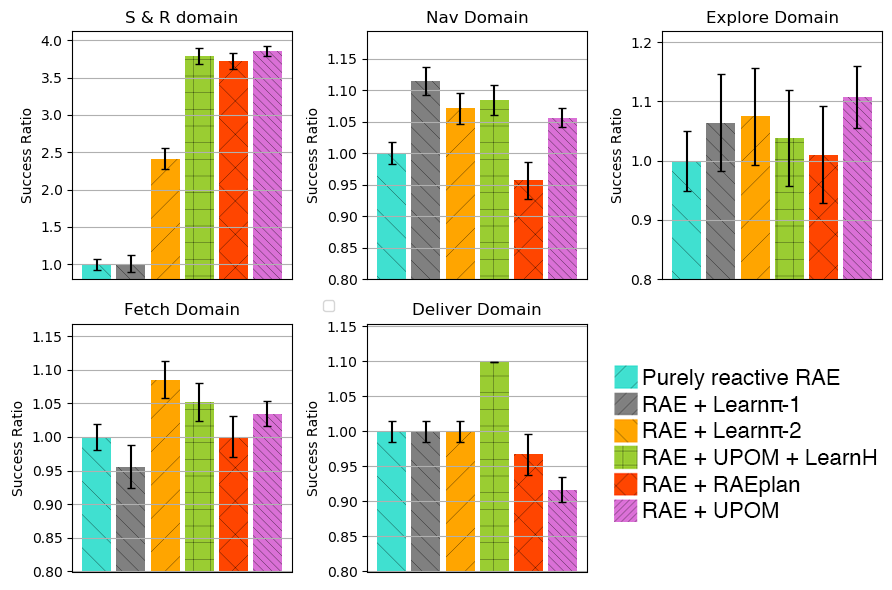}
	\caption{Measured success ratio for each domain with purely reactive \RAE, \RAE with \RPLAN, \RAE with the policies learned by \lm without planning, \RAE with \PLAN, the heuristic learned by \lh and $d_{max}=5$, and \RAE with \PLAN and $d_{max}=\infty$; $Y$ axis  rescaled with respect to the base case for $n_{ro}=0$.}
	\label{fig:sr}
\end{figure}

\subsection{Assessment of learning procedures}

For training purposes, we synthesized data records for each domain by randomly generating root tasks and then running \RAE with \PLAN. The number of randomly generated tasks in \SR, \SD, \EE, \CR,  and \OF domains are 96, 132, 189, 123, and 100 respectively.  We save the data records according to the \lm-1, \lm-2, \lmi and \lh procedures, and encode them using the One-Hot schema. We divide the training set randomly into two parts: 80\% for training and 20\% for validation to avoid overfitting on the training data. 

The training and validation losses decrease and the accuracy increases with increase in the number of training epochs
(see \autoref{fig:training}).

\begin{figure}[h]
	\centering
	\includegraphics[width=0.7\columnwidth]{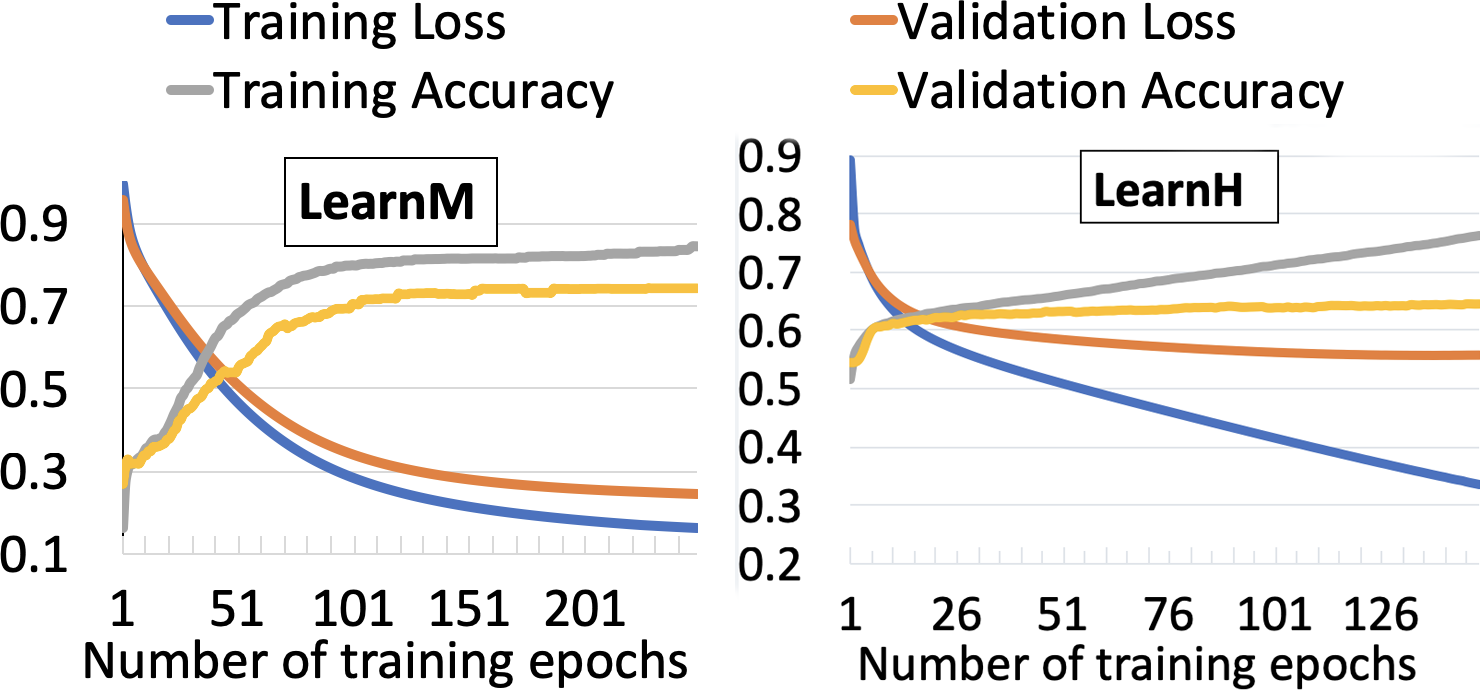}
	\caption{Training and validation results for \lm and \lh, averaged over all domains.}
	\label{fig:training}
\end{figure}

The accuracy of \lm is measured by checking whether the refinement method instance returned by \PLAN matches the template predicted by the MLP $nn_\pi$, whereas the accuracy of \lh is measured by checking whether the efficiency estimated by \PLAN lies in the interval predicted by $nn_{H}$. We chose the learning rate to be in the range $[10^{-3}, 10^{-1}]$. Learning rate is a scaling factor that controls how weights are updated in each training epoch via backpropagation.

\begin{table*}[!h]
	\begin{center}
		\resizebox{1\textwidth}{!}{
			\begin{tabular}{| c | c | c | c | c | c  | c | c  |  c | c | l}
				\cline{1-10}
				\text{Domain} & \multicolumn{3}{ c |}{Training Set Size}  & \multicolumn{2}{ c |}{\#(input features)}  &  \multicolumn{2}{ c |}{Training epochs}  & \multicolumn{2}{ c |}{\#(outputs)}  \\ 
				\cline{2-10}
				& \text{LM-1} & \text{LM-2} & \text{LH} &  \text{LM-1 and -2} & \text{LH} & \text{LM-1 and -2} & \text{LH} & LM-1 and -2 &  LH \\
				\cline{1-10}

				\SR  & 250 &  634 &  3542 & 330  & 401 & 225 & 250 & 16 & 10 \\ 
				\SD  & 1686 &  5331 &  16251& 126 & 144 & 750  & 150 & 9 & 75 \\ %\hline
				\EE  & 2391 &  6883 &  10503 & 182 & 204 & 1000  & 250 & 17 & 200 \\ %\hline
				\CR  & 262 & 508 &  1084 & 97  &  104 & 430  & 250 & 10 &  100 \\ %\hline
				\OF  & - & - &  2001 & -  &  627 & -  & 250 & - &  10 \\
				\cline{1-10}
				
			\end{tabular}
		}
	\end{center}
	\caption{The size of the training set, number of input features and outputs, and the number of training epochs for three different learning procedures: \lm-1, \lm-2, and \lh. We note LM-1 = \lm-1, LM-2 = \lm-2, and LH = \lh.}
	\label{fig:learningInfo}
\end{table*}

\autoref{fig:learningInfo} summarizes the training set size, the number of input features and outputs after data records are encoded using the One-Hot schema, number of training epochs for the three different learning procedures. In the \lh learning procedure, we define the number of output intervals $K$ from the training data such that each interval has an approximately equal number of data records. The final validation accuracies for \lm are 65\%, 91\%, 66\% and 78\% in the domains \CR, \EE, \SR and \SD respectively. The final validation accuracies for \lh are similar but slightly lower. The accuracy values may possibly improve with more training data and encoding the refinement stacks as part of the input feature vectors.

To test the learning procedures we measured the efficiency and success ratio of \RAE with the policies learned by \lm-1 and \lm-2 without planning, and \RAE with \PLAN and the heuristic learned by \lh. We use the same test suite as in our experiments with \RAE using \RPLAN and \PLAN, and do 20 runs for each test problem. When using \PLAN with \lh, we set $d_{max}$ to 5 and $n_{ro}$ to 50, which has  about 88\% less computation time compared to using \PLAN  with infinite $d_{max}$ and $n_{ro} = 1000$. Since the learning happens offline, there is almost no computational overhead when \RAE uses the learned models for online acting.

\paragraph{Efficiency}
\autoref{fig:nu} shows that \RAE with \PLAN+ \lh is more efficient than both purely reactive \RAE and \RAE with \RPLAN in three domains (\EE, \SR and \SD)  with 95\% confidence, and in the \CR domain with 90\% confidence. The efficiency of \RAE with
\lm-1 and \lm-2 lies in between \RAE with \RPLAN and \RAE with \PLAN + \lh, except in the \SR domain, where they perform worse than \RAE with \RPLAN but better than purely reactive \RAE. This is possibly because the refinement stack plays a major role in the resulting efficiency  in the \SR domain. 

\paragraph{Success ratio}
In these last experiments, \PLAN optimizes for the efficiency, not the success ratio. It is however interesting to see how we perform for this criteria even when it is not the chosen utility function. 
In \autoref{fig:sr}, we observe that \RAE with \PLAN + \lh outperforms purely reactive \RAE and \RAE with \RPLAN in three domains (\CR, \SD and \SR) with 95\% confidence in terms of success ratio. In \EE, there is only slight improvement in success-ratio possibly because of high level of nondeterminism in the domain's design. 

In a majority of the domains, the best efficiency and success ratio is achieved by either \RAE with \UPOM, or \RAE with \UPOM + \lh. However, their computation times are quite different. Note that when \UPOM is run with \lh, the rollout lengths are quite shallow and the computation time is similar to purely reactive \RAE (Figure \ref{fig:totalTime}). This makes it highy scalable. In contrast, \RAE with \UPOM explores each rollout to its maximum possible depth. This increases the computation time, making it less suitable for online usage with strict time constraints.

In most cases, we observe that \RAE does better with \lm-2 than with \lm-1. Recall that the training set for \lm-2 is created with all methods returned by \PLAN regardless of whether they succeed while acting or not, whereas \lm-1 leaves out the methods that don't. This makes \lm-1's training set much smaller. In our simulated environments, the acting failures due to random exogenous events don't have a learnable pattern, and a  smaller training set makes \lm-1's performance worse.

\begin{table*}[!h]
	\begin{center}
		\resizebox{1\textwidth}{!}{
			\begin{tabular}{| c | c | c | c | c | c | l}
				\cline{0-5}
				\text{Domain} & Method & Parameter & Training Set Size & \#(input features)  & \#(outputs)  \\ 
				\cline{0-5}
				\SD  & MoveThroughDoorway\_M2 & $robot$ & 404 &   150 & 4 \\ %\hline
				\cline{2-6}
				  & Recover\_M1 & $robot$ & 337 &  128 & 4 \\ %\hline
				\cline{0-5}
				\OF  & Order\_M1 & $machine$ & 296 &  613 & 5  \\
				\cline{3-6}
				  &  & $objList$ & 297 &  613 & 2 \\
				  \cline{2-6}
				  & Order\_M2 & $machine$ & 95 &  613 & 5  \\
				  \cline{3-6}
				  & & $objList$ & 95 &  613 & 2 \\
				  \cline{3-6}
				  &  & $pallet$ & 95 &  613 & 4  \\
				  \cline{2-6}
				  & PickupAndLoad\_M1 & $robot$ & 244 &  637 & 7  \\
				  \cline{2-6}
				  & UnloadAndDeliver\_M1 & $robot$ & 219 &  625 & 7  \\
				  \cline{2-6}
				  & MoveToPallet\_M1 & $robot$ & 7 &  633 & 7  \\
				\cline{0-5}
			\end{tabular}
		}
	\end{center}
	\caption{The size of the training set, number of input features and outputs for learning method parameters in \lmi.}
	\label{table:mi}
\end{table*}

\begin{figure}[!h]
	\centering % use \centering rather than \begin{center} because the latter inserts vertical space
	\includegraphics[width=0.99\columnwidth]{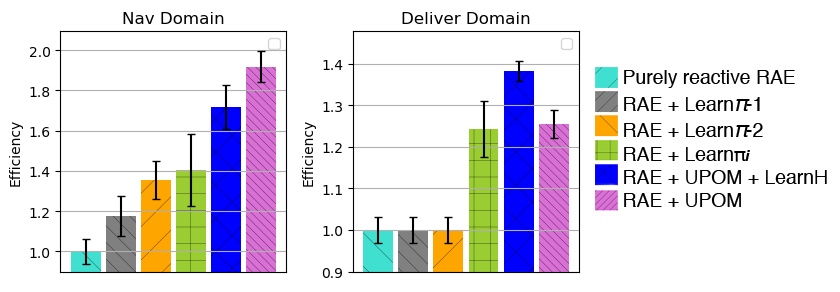}
	\caption{The cross-hatched blue bars show the performance of \RAE with \lmi (learning method instances) for the two domains, \SD and \OF, which have methods with parameters not in tasks ($Y$ axis rescaled with respect to the base case for $n_{ro}=0$).}
\label{fig:learnMI}
\end{figure}

\paragraph{Learning Method Instances} 

Two of our simulated domains, \SD and \OF, have refinement methods with parameters that are not inherited from the task at hand. For these domains, \lm-1 and \lm-2 give only partially instantiated methods, while \lmi is more discriminate. To test its benefit, we trained a MLP for each  parameter not specified in the task. The size of the training set, number of input features and number of outputs are summarized in \autoref{table:mi}.

Figure~\ref{fig:learnMI} compares the efficiency of \RAE with \lmi vs purely reactive \RAE and \RAE with \RPLAN, \lm-1, \lm-2, \lh, and \UPOM. In the \OF domain, \RAE with \lmi is better than purely reactive \RAE as well as  \RAE with \lm-1 or \lm-2 with 95\% confidence. In the \SD domain, \RAE with \lmi  also outperforms \lm-1 and purely reactive \RAE with 95\% confidence, but  not \lm-2.  The performance benefit is important
 in the \OF domain because the refinement methods have several parameters that are not arguments of the task that the method is used for, hence there are combinatorially many different ways to instantiate these parameters when creating method instances.

In summary, for all the domains, planning with \PLAN and learning clearly outperforms purely reactive \APE.

%\FloatBarrier
\section{Discussion}
\label{sec:discussion}

We now discuss several issues that readers may find helpful for using and deploying \RAE and \UPOM in practical applications. We also point out several limitations and topics for future work.

\subsection{Retrial in \RAE}
As mentioned earlier,  \textsf{Retry} is not a backtracking procedure. Since \RAE interacts with a dynamic world, \textsf{Retry} cannot go back to a previous state. It selects a method instance among those applicable in the \textit{current} world state, except for those that have been tried before and failed. This latter restriction may not always be necessary, since the same method instance that failed at some point may succeed later on. It can be complicated to analyze the conditions responsible for failures and ascertain whether they still hold. However, \RAE can be adapted to retrial of method instances if they are vulnerable to noisy sensing, or if the execution context is one in which they should be retried. For example, one may give the methods additional parameters that are not needed for the logic of the method instance but that characterize the context (e.g., the pose of a sensor that may have changed between trials), while bounding the number of retrials.

Retrial can be applied more easily for actions. In \RAE, a method instance fails when one of its actions fails. But if an action has nondeterministic outcomes, it may be worthwhile retrying the action as assessed by its expected utility. This may be implemented after a full analysis and the computation of an optimal MDP policy,\footnote{
\newcommand{\fail}{\textit{fail}}
This can be done with a sequence of dummy states $s_{\fail_i}$ such that the effects of action $a$ in $s$ include $s^\fail_1 \in \gamma(s,a),\, s^\fail_2 \in \gamma(s^\fail_1,a),\, \ldots$, $s^\fail_{i+1} \in \gamma(s^\fail_i,a)$, with two actions being applicable to each $s^\fail_i$: $a$ and \sv{stop-with-failure}.
}
or simply with an ad-hoc loop on the \sv{execution-status} of the actions that merit retrials. Furthermore, the body of a method is a procedure in which one can specify complex retrial loops. For example, a difficult \sv{grasp} action in robotics may need several sequences of $\langle\sv{move}, \sv{sense}, \sv{grasp}\rangle$ before succeeding or giving up.

In the \OF domain, we implemented a retrial procedure by having the actions set a status describing if they should be retried after failing. This status depends on the nature of the failure, with likely unrectifiable failures such as unsatisfied preconditions not being retried. These actions were called from a wrapper task that can repeat the command up to two additional times.

\subsection{Concurrency}
\RAE's \textit{Agenda} contains several refinement stacks, one for each top-level task, and the purpose of \RAE's main loop is to progress these stacks concurrently.
Each of the experimental domains in Section \ref{sec:implementation} used this facility and involved concurrent tasks. However, our current implementation provides no built-in way to manage possible conflicts and needed synchronizations; instead they must be managed by the refinement methods. 
We handled conflict in our \SR, \SD, and \OF domains 
by having state variables to specify the status of the resources. The task waiting for the resource executes no action until that resource is available again.
We use semaphores to avoid race conditions. 
Such refinement methods could be made easier to write by extending \RAE to provide synchronization constructs such as those used in TCA and TDL \cite{Simmons:1994vr, Simmons:1998wf}, and rely on the \sv{execution-status} of actions to handle waits. Since both \PLAN and learning rely on the simulation of the method instances, they could support such extensions as long as \sv{Sample} can simulate the duration of actions. More  research is needed on how to extend \RAE to permit formal verification of concurrency properties (liveness, deadlocks), e.g., as in the Petri-net based reactive system ASPiC \cite{lesire2018aspic}.

Note that it is possible to extend \RAE to allow the body of a method to specify concurrent subtasks (see \cite[Sect. 3.2.4]{ghallab2016automated}). However, our current version of \RAE does not include such a facility.

\subsection{Learning operational models}
The \lm and \lh procedures are designed to improve the decision making of \RAE, with or without planning. They are also of help to a domain author, who does not need to design a minimal set of methods associated to a preference ordering. However, assistance in acquiring operational models would be highly desirable. Let us raise few remarks about this important issue of future work. 

Actions and methods, the two main components of operational models, demand different learning techniques. Execution models of actions are often domain dependent. For example, in robotics several approaches have been proposed for learning primitive actions, e.g., \cite{Peters:2008tx, kober:2012vh,ToddHester:2012tn,Deisenroth:2013hv}, usually relying on \textit{Reinforcement Learning} (RL), possibly supervised and/or with inverse RL  (see survey \cite{Kober:2013fh}). Other techniques for learning actions as low-level skills may also be relevant, e.g., \cite{Colledanchise:2015um,Zhang:2018dh}. These and similar techniques would provide operational models of primitive actions needed by \RAE,  as well as a domain simulator needed by \UPOM for the function \sv{Sample} (see line \ref{uct:sample} of \autoref{alg:upom}).
However, since \UPOM may call \sv{Sample} many times during its Monte Carlo rollouts, a detailed domain simulator may have a high computational overhead. A learned domain independent but shallow model of primitive actions, e.g., \cite{pasula2007learning, lang2012exploration}, would provide an efficient \sv{Sample} function.

Regarding refinement methods, several techniques have been developed for learning HTN methods, e.g., \cite{hogg2008HTN, hogg2009learning, hogg2010learning, zhuo2009learning, Zhang:2018dh}. However, our refinement methods for operational models can be significantly more complex. Possible avenues of investigation for synthesizing these methods include program synthesis techniques \cite{walkinshaw2016inferring,gulwani2017program}, partial programming and RL \cite{Andre:2002vo,Marthi:2005tg,Simpkins:2008vh}, as well as learning from the demonstrations of a tutor \cite{Argall:2009hh}.

\subsection{Potential and limitations}

The work presented in this paper includes a reactive system (\RAE), extended with capabilities for planning (\UPOM) and learning (\lm, \lmi, \lh). It is intended to empower an autonomous system facing a diversity of tasks in nondeterministic dynamic environments with robust and efficient deliberation competences. It makes use of a hierarchical representation of tasks and refinement methods, and abstraction of the reactive system's states for use in the planning and learning algorithms.

Beyond the aspects stressed earlier, there are several possible ways to extend the work:
\begin{LIST}
\item In addition to planning and acting, a deliberative system needs also \textit{monitoring}. This function fits naturally in our framework through methods for handling alarms and other surveilled events. Many fault detection, identification and recovery systems, e.g., in space and critical applications \cite{hwang2010survey, Pettersson:2005jq}, can use approaches like \RAE. Furthermore, \RAE and \UPOM have been used in a prototype cybersecurity monitoring and recovery system (see Section \ref{sec:deploy}) in which monitoring functions invoke \RAE and \UPOM when they are needed to plan recovery from cyber-attacks. 
\item Although \textit{time} and temporal primitives have not been introduced in our relatively simple representation, these constructs are widely used in several reactive langages, such as TCA/TDL or Petri Net based systems. Offering similar facilities in \RAE appears quite feasible, but the extensions in \UPOM will require more work.
\item Similarly, we have not covered space and motion planning.  But these can be added in \RAE, e.g., as external functions conditioning particular actions. They can be part of the simulations performed in \UPOM. 
\end{LIST}
There also are several limitations, two of which need to be underlined:
\begin{LIST}
\item The approach developed in this paper is not suitable for addressing intensive combinatorial  search and optimization problems. For example, while the approach is adequate for a video game such as Starcraft; for board games such as chess or go, it is better to use game-tree search and learning techniques 
\cite{newborn2013deep,silver2018general}. Similarly, the organization of  numerous resources over a long horizon in a well-modeled predictable environment are better addressed with temporal planning and scheduling tools, e.g., \cite{estlin2007increased}. However, it is conceivable to connect such a tool for assigning tasks to actors that are driven by our approach (see \cite[Section 4.5]{ghallab2016automated}).
\item Our approach, as presented, does not integrate general inference-making mechanisms, e.g., in order to deduce from common sense or general knowledge that in some contexts, additional care is necessary for an activity to succeed. Right now, such a situation would require refinement methods that are adapted to that context; and in their absence, \UPOM would find that all of the available methods fail. Possible avenues of response to this limitation include online continual learning of methods, and extending \RAE to include an inference engine that uses general and domain-specific axioms \cite{nau2003shop2,thiebaux2005in}.
\end{LIST}

\subsection{Deployment in new applications}\label{sec:deploy}
To deploy the proposed approach in a new domain, it is necessary to have an execution platform or the equivalent collection of sensing and primitive actions, as well as a set of methods. If the tasks and events are well-defined and the techniques for handling them are known, human experts may be able to write the methods without too much difficulty. 

 \RAE executes methods and triggers the execution of primitive actions by the platform. \UPOM has to simulate both. Simulating refinement methods is not an issue, but more effort may be needed to simulate the execution platform and its environment. Clearly, the reliability of such a simulator affects the quality of \UPOM's steering. It is not  straightforward to develop a reliable simulator that reproduces the dynamics of a nondeterministic uncertain environment.  In some cases, available simulation tools can be useful, e.g., physics-based simulations \cite{boeing2007evaluation, faure2012sofa,bridson2015fluid}, robotic simulations \cite{michel2004cyberbotics, leon2010opengrasp,shah2018airsim}, automated manufacturing simulations \cite{jahangirian2010simulation,mourtzis2014simulation} and so forth. Some of these tools are often used for the specification and design of an execution platform, which may simplify their use for the development of a simulator. A fallback option, easily applicable in most cases, would be to define the  procedure \sv{Sample} by sampling the possible outcomes of every action from probability distributions, which are initialized by a human expert, then refined by learning and experiments. The results in  \autoref{sec:implementation} show that this shallow simulation  already provides substantial improvements in decision-making.
Note that it is possible to combine detailed simulations for critical actions, for which tools might be available, and shallow simulations for the remaining actions.

The deployment of \RAE and \UPOM in a prototype application for security monitoring and recovery from attacks on Software-Defined Networks (SDN) is described in \cite{patra2021using}. It defines an abstract representation of the SDN's state used by \RAE and \UPOM, hand-programmed primitive actions that could be executed on the SDN,  tasks and events regarding attacks on an SDN, and a set of
refinement methods giving possible recovery procedures for each of the attacks. Several refinement method are applicable to the same task or event; the human expert does not need to specify which applicable method is preferable in which contexts, since the purpose of \UPOM is to make such choices online.  The final system has been evaluated by SDN experts successfully.

The development effort required about three person-months of work by the human expert,
but the human expert's ongoing experience with \RAE and \UPOM revealed software features that we needed to add so that his development effort could proceed.
Given \RAE and \UPOM in their current form, we think such a development effort might take much less time.

\section{Conclusion}
\label{sec:conclusions}

We have presented a novel system for integrating acting and planning using hierarchical refinement operational models. The refinement acting engine, \RAE, can either run purely reactively or it can get advice from an online planner to choose efficient method instances for performing a task. The planning procedure, \PLAN, uses an anytime search strategy inspired by UCT, extended to operate in a more complicated search space. 
\PLAN provides near-optimal method instances with respect to utility functions that may be quite general, and it converges asymptotically. We have proposed two distinct utility functions that favor efficiency and robustness,  respectively.
\PLAN is integrated with \RAE using a receding-horizon, anytime progressive-deepening procedure. 

We have also presented three learning strategies:
\lm, to learn a mapping from a task in a given context to a good method, \lmi, to learn values of uninstantiated method parameters, and \lh, to learn a domain specific heuristic function for our hierarchical refinement framework. We have shown how incremental learning can be integrated online with acting and planning.

We have presented empirical results over five domains that have challenging features such as dynamicity,  dead-ends,  exogenous events, sensing and information gathering actions, collaborative and concurrent tasks. Rather than just evaluating the system's planning functionality, we have devised simulations and measurements that assess its overall acting performance, with and without planning and learning, taking into account exogenous events and failure cases. 

We have measured the actor's efficiency, success ratio and retry ratio, and discussed their relationships with respect to the planner's utility function, maximizing either the expected efficiency or the expected success ratio.
Our results show that \lm improves the performance of reactive \RAE with respect to the three performance measures, and they are improved even further when \RAE is used either with \PLAN and \lh or with \PLAN at unbounded search depth.
Thanks to learning, the computational overhead remains acceptable for an online procedure, since in this case
a small number of rollouts already bring a good benefit.

In summary, acting purely reactively in dynamic domains with dead ends can be costly and risky. Our proposed integration of acting, planning and learning can be of great benefit, reflected by a higher efficiency or robustness. 
An open source repository of all of the algorithms and test domains is available at \cite{upomrepo}.

\paragraph{Acknowledgements} We are very grateful to
F\'elix Ingrand, LAAS-CNRS, for very fruitful discussions regarding the initial specification of \RAE's algorithms, and
to Amit Kumar, U. Maryland, for advices regarding the multi-layer perceptron design and training. Many thanks to the anonymous reviewers for their very positive and insightful remarks.
This work has been supported in part by NRL grants N00173191G001 
and N0017320P0399, 
ONR grant N000142012257,
and AFOSR grant 1010GWA357.  
The information in this paper does not necessarily reflect the position or policy of the funders, and no official endorsement should be inferred.

\bibliographystyle{elsarticle-harv}

\bibliography{main}

\appendix

\newcommand{\state}{s\xspace}
\newcommand{\States}{S\xspace}

\newcommand{\success}{\textsf{success}\xspace}
\newcommand{\stacks}{\text{stacks}\xspace}
\newcommand{\reward}{R\xspace}
\newcommand{\cost}{C\xspace}
\newcommand{\trans}{P}

\section{Asymptotic Convergence of \UPOM}
\label{app:mapping}

\sloppy  % for line justification w/o overflows

\newcommand{\height}{\textrm{height}}

In this section we demonstrate the asymptotic convergence of \PLAN towards an optimal method, as $n_{ro} \to \infty$. The proof assumes no depth cut-off ($d_{max} = \infty$) and static domains, i.e., domains without exogenous events. We believe it would be straightforward to extend the proof to dynamic domains if there are known probability distributions over the occurrence of exogenous events. However, we have not attempted to extend the proof to that case.

The proof proceeds by mapping \PLAN's search strategy into UCT, which is demonstrated to converge on a finite horizon MDP with a probability of not finding the optimal action at the root node that goes to zero at a polynomial rate as the number of rollouts grows to infinity (Theorem 6 of \cite{kocsis2006bandit}). 

To simplify the mapping, we first consider \PLAN with an additive utility function,
and show how to map \PLAN's search space into an MDP. We then discuss how this can be extended to  the efficiency and success ratio utility functions
defined in \ref{sec:plan}, using the fact that the UCT algorithm is not restricted to the additive case; it still converges as long as the utility function is monotonic.

\subsection*{A.1. Search Space for Refinement Planning}

Let $\Sigma = (\Xi,\Tasks,\M,\A)$ be an acting domain, as
specified at the end of
\autoref{sec:operational}. Throughout this appendix, we will assume that $\Sigma$ is static.

Recall
from \autoref{sec:plan}
that the space searched by \UPOM is a simulated version of $\Sigma$.
To talk about this formally, recall that a {\em refinement planning domain} is a tuple $\Phi = (\States,\mathcal{T},\M,\A)$, where
$\States$ is the set of states (recall that these are abstractions of states in $\Xi$), 
and $\Tasks$,  $\M$, and $\A$ are the same as in $\Sigma$.
Recall from 
\autoref{sec:operational} that  $S$, $\Tasks$, $\M$, and $\A$ are all finite, and that every sequence of steps generated by the methods in $\M$ is finite.

\phantomsection \label{def:P}For $s\in S$ and $a\in \A$, we let 
$\gamma(s,a) \subseteq S$ be the set of all states that may be produced by simulating $a$'s execution in $s$.
For each $s' \in \gamma(s,a)$, we let $\trans(s,a,s')$ be the probability that state $s'$ will be produced if we simulate $a$'s execution in state $s$.

Recall from \autoref{sec:rae} that a {\em refinement stack} is a LIFO stack in which each element is a tuple $(\tau,m,i,\tried)$, where $\tau$ is a task, $m$ is a method, $i$ is an instruction pointer that points to the $i$'th line of $m$'s body (which is a computer program), and $\tried$ is the set of methods previously tried for $\tau$. We will call the tuple $(\tau,m,i,\tried)$ a {\em stack frame}, and we will let $m[i]$ denote the $i$'th line of the body of $m$.

\phantomsection \label{def:Pi}We now can define a {\em refinement planning problem} to be a tuple $\Pi = (\Phi,s_0,\sigma_0,U)$, where $s_0$ is the initial state, $\sigma_0$ is the initial refinement stack, and $U$ is a utility function.

\paragraph{Rollouts}
A {\em rollout} in $\Phi$ is a sequence of pairs
\begin{equation}
\rho = \langle (\sigma_0,s_0), (\sigma_1,s_1), \ldots, (\sigma_n,s_n)\rangle
\end{equation}
satisfying the following properties:
\begin{LIST}
\item
each $s_i$ is a state, and each $\sigma_i$ is a refinement stack;
\item
for each $i > 0$ there is a nonzero probability that $s_j$ and $\sigma_j$ are the next state and refinement stack after $s_{i-1}$ and $\sigma_{i-1}$;
\item
$(\sigma_n,s_n)$ is a termination point for $\UPOM$.
\end{LIST}
If the final refinement stack is $\sigma_n = \langle \rangle$, i.e., the empty stack,
then the rollout $\rho$ is successful. 
Otherwise $\rho$ fails.

In a top-level call to $\UPOM$, the initial refinement stack $\sigma_0$ would normally be
\begin{equation}
\label{eq:sigma0}
\sigma_0 = \langle (\tau_0, m_0, 1,\nullset)\rangle,
\end{equation} 
where $\tau_0$ is a task, and $m_0$ is a method that is relevant for $\tau_0$ and applicable in $s_0$.
In all subsequent refinement stacks produced by $\UPOM$.

\phantomsection \label{def:reach}We will say that a refinement stack $\sigma$ is {\em reachable} in $\Phi$ (i.e., reachable from a top-level call to $\UPOM$) if 
there exists a rollout
\[
\rho = \langle (\sigma_0,s_0), (\sigma_1,s_1), \ldots, (\sigma_n,s_n)\rangle
\]
such that $\sigma_0$ satisfies \autoref{eq:sigma0} and
$\sigma\in \{\sigma_0,\ldots,\sigma_n\}$. We let $\mathcal{R}(\Phi)$ be the set of all refinement stacks that are reachable in $\Phi$.
Since 
every sequence of steps generated by the methods in $\M$ is finite, it follows that
$\mathcal{R}(\Phi)$ is also finite.

\paragraph{Additive utility functions}
\phantomsection \label{def:reward}
The utility function $U$ is {\em additive} if there is either a reward function $\reward(s)$ or a cost function $\cost(s,a,s')$ (where $(s,a,s')$ is a transition from $s$ to $s'$ caused by action $a$)
such that $U$ is the sum of the rewards or costs associated with the state transitions in $\rho$.  These state transitions are the points in $\rho$ where $\UPOM$ simulates the execution of an action.

For each pair  $(\sigma_j,s_j)$ in $\rho$,
let  $(\tau_j, m_j, i_j,\tried_j)$ be the top element of $\sigma_j$.
If $m_j[i_j]$ is an action, then the next element of $\rho$ is a pair $(\sigma_{j+1},s_{j+1})$ in which $s_{j+1}$ is the 
state produced by executing the action $m_j[i_j]$.
In $\Phi$ this corresponds to the state transition $(s_j, m_j[i_j], s_{j+1})$.
Thus the set of state transitions in $\rho$ is
\begin{align}
\nonumber
t_\rho = \{ (s_j,m_j[i_j],s_{j+1}) \mid\mbox{}& \text{$(\sigma_j,s_j)$ and
 $(\sigma_{j+1},s_{j+1})$ are members of $\rho$,} \\
& \text{$(\tau_j, m_j, i_j,\tried_j) = \sv{top}(\sigma_j)$,
 and
$m_j[i_j]$ is an action} \}.
\end{align}

Thus if $U$ is additive, then
\begin{equation}
U(\rho) =
\begin{cases}
\sum_{(s,a,s') \in t_\rho} \reward(s'),
& \text{if $U$ is the sum of rewards},\\
\sum_{(s,a,s')\in t_\rho} \cost(s,a,s'),
& \text{if $U$ is the sum of costs}.
\end{cases}
\end{equation}

\subsection*{A.2. Defining the MDP} \label{sec:mdp}

We want to define an MDP $\Psi$ such that choosing among methods in $\Phi$ corresponds to choosing among actions in $\Psi$. The easiest way to do this is to let 
all of $\Phi$'s actions and methods be actions in $\Psi$.
Based loosely on the notation in \cite{mausam2012planning},
we will write $\Psi$ as
\begin{equation}
\label{eq:mdp}
\Psi = (\States^\Psi,\A^\Psi, s^\Psi_0, S^\Psi_g, \gamma^\Psi, \trans^\Psi, U^\Psi)
\end{equation}
where
\begin{align*}
S^\Psi &= \stacks(\Phi) \times \States \text{ is the set of states},\\
\A^\Psi &= \M \cup \A \text{ is the set of actions},\\
s^\Psi_0 &= (\sigma_0,s_0) \text{ is the initial state},\\
S^\Psi_g &= \{(\langle \rangle,s) \mid s \in S\} \text{ is the set of goal states},
\end{align*}
and the state-transition function $\gamma^\Psi$, state-transition probability function $\trans^\Psi$, and utility function $U^\Psi$ are defined as follows.

\paragraph{State transitions}

To define $\gamma^\Psi$ and $\trans^\Psi$, we must first define which actions are applicable in each state.
Let $(\sigma,s) \in S^\Psi$, and
$(\tau, m, i,t) = \sv{top}(\sigma)$.
Then the set of actions that are applicable to $(\sigma,s)$ in $\Psi$ is
\begin{equation}
\label{eq:applicable}
\sv{Applicable}^\Psi((\sigma,s)) = 
\begin{cases}
\sv{Instances}(\mathcal{M},m[i],s), &\text{if $m[i]$ is a task},\\
\{m[i]\}, &\text{if $m[i]$ is an action}.
\end{cases}
\end{equation}

Thus if $a\in \sv{Applicable}^\Psi((\sigma,s))$, then there are two cases for what 
$\gamma^\Psi(s,a)$ and
$\trans^\Psi(s,a,s')$ might be:

\begin{LIST}
\item
Case 1:
$m[i]$ is a task in $\M$, and $a \in \sv{Instances}(\mathcal{M},m[i],s)$.
In this case, the next refinement stack will be produced by pushing
a new stack frame $\phi = (m[i], a, 1,\nullset)$ onto $\sigma$.
The state $s$ will remain unchanged.
Thus the next state in $\Psi$ will be $(\phi+\sigma, s)$,
where `+' denotes concatenation. 
Thus
\begin{align*}
\gamma((\sigma,s),a) &= \{(\phi+\sigma, s)\};\\
\trans^\Psi[(\sigma,s), a, (\phi+\sigma, s)] &= 1;\\
\trans^\Psi[(\sigma,s), a, (\sigma',s')] &= 0, \quad
\text{if } (\sigma',s') \neq (\phi+\sigma, s).
\end{align*}
\item
Case 2:
$m[i]$ is an action in $\A$, and $a=m[i]$.
Then $a$'s possible outcomes in $\Psi$ correspond one-to-one to its possible outcomes in $\Phi$. More specifically, if $\gamma$ is the state-transition function for $\Phi$ (see
\autoref{sec:operational}), then
\[
\gamma^\Psi((\sigma,s),a) = \{( \Next(\sigma,s'),s') \mid s' \in 
\gamma(s,a)\}
\]
and
\[
\trans^\Psi((\sigma,s), a, (\sigma',s')))  = 
\begin{cases}
\trans(s, a, s'),& \text{if } (\sigma',s') \in \gamma^\Psi((\sigma,s),a),\\
0, & \text{otherwise.}
\end{cases}
\]
\end{LIST}

\paragraph{Rollouts and utility} \label{def:rollout}

A {\em rollout} of $\Pi^\Psi$ is any sequence of states and actions of $\Psi$,
\[
\rho^\Psi = \langle (\sigma_0,s_0), a_1,  (\sigma_1,s_1), a_2, \ldots, 
(\sigma_{n-1},s_{n-1}), a_n, (\sigma_n,s_n)\rangle,
\]
such that for $i=1,\ldots,n$,
$a_i \in \sv{Applicable}(\sigma_{i-1},s_{i-1})$
and
\[\trans^\Psi((\sigma_{i-1},s_{i-1},(\sigma_i,s_i)), a_i) > 0.
\]
The rollout is {\em successful} if $(\sigma_n,s_n) \in S^\Phi_g$, and unsuccessful otherwise.\medskip

We can define $U^\Psi$ directly from $U$. 
If $\rho^\Psi$ is the rollout given above, then
the corresponding rollout in $\Phi$ is $\rho = \langle (\sigma_0,s_0),  (\sigma_1,s_1), \ldots, 
(\sigma_{n-1},s_{n-1}),  (\sigma_n,s_n)\rangle$, and
\[
U^\Psi(\rho^\Psi)
= U(\rho).
\]
If $U$ is additive, then so is $U^\Psi$. In this case, $\Psi$ satisfies the definition of an MDP with initial state (see \cite{mausam2012planning}).

\subsection*{A.3. Mapping $\UPOM$'s Search to an Equivalent UCT Search}

Let
\begin{equation}
\label{eq:pi}
\Pi = (\Phi,s_0,\sigma_0,U)
\end{equation}
be a refinement planning problem, where
\begin{equation}
\label{eq:phi}
\Phi = (\States,\mathcal{T},\M,\A).
\end{equation}
Suppose $\UPOM(s_0,\sigma_0,\infty)$ generates the rollout
\begin{equation}
\label{eq:rollout}
\rho = \langle (\sigma_0,s_0), (\sigma_1,s_1), \ldots, (\sigma_n,s_n)\rangle,
\end{equation}
where $\sigma_j = (\tau_j, m_j, i_j, \tried_j)$, 
for $j=1,\ldots,n$.
$\UPOM$ generates $\rho$ by choosing $m_1$ and then recursively calling
$\UPOM(s_j,\sigma_j,\infty)$.
Consequently,
$\UPOM$'s probability of generating $\rho$ is
\begin{equation}
\label{eq:rolloutp}
p = p_1 \times \ldots \times p_n,
\end{equation}
 where each $p_j$ is the probability that $\UPOM(s_j,\sigma_j,\infty)$ will choose $m_j$ before making its recursive call.
The value of $p_j$ will depend on $\UPOM$'s \emph{metadata} for $\Pi$, e.g., the number of times each method for a task $\tau$ has been tried in each state $s$, and the average utility obtained over those tries.

We want to show that $\UPOM$'s search of $\Pi$ corresponds to an equivalent UCT search of $\Psi$. Below,
\autoref{th:mapping} accomplishes this in the case where the utility function $U$ is additive. After the proof of the theorem, we discuss how to generalize the theorem to cases where $U$ is not additive.

\begin{theorem}
\label{th:mapping}
Let $\Pi$, $\Phi$, $\rho$ and $p$ be as in Equations \ref{eq:pi}--\ref{eq:rolloutp}, let $U$ be additive, let $\UPOM$'s metadata for $\Pi$ be as described above, and let
$\Psi=(\States^\Psi,\A^\Psi, \gamma^\Psi, \trans^\Psi, U^\Psi)$ be the MDP corresponding to $\Pi$. If UCT searches $\Psi$ using the same metadata that $\UPOM$ used, then the probability that UCT generates the rollout
\[
\rho^\Psi = \langle (\sigma_0,s_0), m_1,  (\sigma_1,s_1), m_2, \ldots, 
(\sigma_{n-1},s_{n-1}), m_n, (\sigma_n,s_n)\rangle
\]
is the same probability $p = p_1 \times \ldots \times p_n$ as in \autoref{eq:rolloutp}.
\end{theorem}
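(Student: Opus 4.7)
The plan is to prove the theorem by induction on the length $n$ of the rollout, showing at each step $j$ that the probability UCT takes the transition from $(\sigma_{j-1}, s_{j-1})$ to $(\sigma_j, s_j)$ in $\Psi$ equals the factor $p_j$ appearing in UPOM's generation of $\rho$. The base case $j=0$ is immediate, since both algorithms start from $(\sigma_0, s_0)$ with probability one. Before doing the induction, I would record two structural facts that fall directly out of the MDP construction in Section~A.2: first, that $\sv{Applicable}^\Psi((\sigma, s))$ as defined in \autoref{eq:applicable} coincides exactly with $\sv{Applicable}(s, m[i])$ when $m[i]$ is a task and with the singleton $\{m[i]\}$ when $m[i]$ is an action; and second, that the transition kernel $\trans^\Psi$ matches $\trans$ on action transitions and is deterministic on method-selection transitions.

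The inductive step splits into two cases according to the form of $m_j[i_j]$. In the task-refinement case, the top frame of $\sigma_{j-1}$ points to a task $\tau$, and $\sigma_j$ is obtained by pushing a new frame for a chosen $m_j \in \sv{Applicable}(s_{j-1}, \tau)$. Both UPOM (\autoref{alg:upom}, lines~\ref{uct:mc1}--\ref{uct:ucb}) and UCT in $\Psi$ apply the same selection rule relative to their bandit statistics: pick uniformly at random among methods with $N=0$, and otherwise use the UCB1 formula with the shared constant $C$ over the applicable set. Because the applicable sets agree by the first structural fact, and the hypothesis of the theorem supplies identical metadata $N_{\sigma_{j-1}, s_{j-1}}$ and $Q_{\sigma_{j-1}, s_{j-1}}$ to both procedures, each assigns the same probability $p_j$ to selecting $m_j$. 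In the action-execution case, $m_j[i_j] = a$, UPOM invokes $\sv{Sample}(s_{j-1}, a)$ and draws $s_j$ with probability $\trans(s_{j-1}, a, s_j)$, while in $\Psi$ the only applicable action at $(\sigma_{j-1}, s_{j-1})$ is $a$ itself, after which UCT transitions to $(\Next(\sigma_{j-1}, s_j), s_j)$ with probability $\trans^\Psi((\sigma_{j-1},s_{j-1}), a, \cdot) = \trans(s_{j-1}, a, s_j) = p_j$. Multiplying across the $n$ steps and using independence of the per-step sampling gives the joint probability $p_1 \times \cdots \times p_n$.

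The main obstacle will be reconciling the two different granularities of rollout. UPOM's rollout as defined in \autoref{sec:plan} includes intermediate bookkeeping steps (assignments and the pops performed by $\Next$ at the end of a method), whereas $\Psi$'s state space only exposes task nodes and action nodes as decision points. To handle this cleanly I would introduce a normalization that collapses every maximal run of deterministic non-decision steps into the single decision point that immediately follows; this is well-defined because each such run terminates in finitely many steps thanks to the finiteness assumption on method bodies stated in \autoref{sec:operational}. Under this normalization, both $\rho$ and $\rho^\Psi$ traverse the same sequence of decision points, each of which contributes exactly one factor $p_j$ by the case analysis above, so the probabilities match term by term.

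Finally, for the extension beyond additive $U$ to the efficiency and success-ratio utilities of \autoref{sec:utilities}, I would observe that the rollout-generation argument above does not depend on how $U$ aggregates values along a trajectory; it only depends on the per-step selection and transition probabilities, which are preserved by the mapping. The result of \cite{kocsis2006bandit} that UPOM inherits is Theorem~6, and its proof requires only that the $Q$ estimates remain in a bounded monotone semigroup, a condition met by $\oplus$ in both \autoref{eq:eplus} and \autoref{eq:splus}. Thus once the generation-probability equivalence of \autoref{th:mapping} is in hand, asymptotic optimality of UPOM on $\Pi$ follows by instantiating the UCT convergence guarantee on $\Psi$ with the appropriate $U^\Psi$.
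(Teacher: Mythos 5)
Your proposal is correct and follows essentially the same route as the paper's own proof: induction on the rollout length with the same two-case analysis at each step (UCB-style method selection with shared metadata versus the forced single choice at an action node, sampled according to $P$). Your explicit normalization collapsing assignment steps and the pops performed by \Next into the adjacent decision point is a detail the paper's proof sketch leaves implicit, and it is a sound way to reconcile the two rollout granularities.
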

\begin{proof}[Sketch of proof.]
The proof is by induction on $n$, the length of $\rho$.
The base case is when $n=0$, i.e., $\rho = \langle (\sigma_0,s_0) \rangle$. If $n=0$ then it must be that $\sv{Applicable}(s_0) = \nullset$. Thus
$\sv{Applicable}^\Psi((\sigma_0,s_0)) = \nullset$, so in this case the theorem is vacuously true.

For the induction step, suppose $n>0$, and
consider $\UPOM$'s recursive call to $\UPOM(s_1,\sigma_1,\infty)$.
In this case, the refinement planning problem is $\Pi_1=(\Phi,s_1,\sigma_1,U)$, and we let $\Psi_1$ be the corresponding MDP.

Given the same metadata as above, $\UPOM(s_1,\sigma_1,\infty)$ will generate the rollout
$\rho_1 = \langle (\sigma_1,s_1), \ldots, (\sigma_n,s_n)\rangle$
with probability $p_2 \times \ldots \times p_n$.
The induction assumption is that
with that same probability,
a UCT search of $\Psi_1$ will generate the rollout
\begin{align*}
\rho^{\Psi}_1 &= \langle (\sigma_1,s_1), m_2, \ldots, 
(\sigma_{n-1},s_{n-1}), m_n, (\sigma_n,s_n)\rangle.
\end{align*}

Before applying the induction assumption, we first need to show that if $p_1$ is the probability that $\UPOM(s_0,\sigma_0,U)$ chooses $m_1$ before making its recursive call, then a UCT search of $\Psi_1$ will choose $m_1$ with the same probability $p_1$. 
There are two cases:
\begin{LIST}
\item Case 1:
$m_1$ is a method in $\Phi$.
As shown in \autoref{alg:upom},
$\UPOM(s_0,\sigma_0,U)$ 
chooses $m_1$ using the same UCB-style computation that a UCT search in $\Psi$ would use at $(\sigma_0,s_0)$. Thus, omitting the details about how to compute $p_1$ from the metadata, it follows that if $\UPOM(s_0,\sigma_0,U)$ chooses $m_1$ with probability $p_1$, then so does the UCT search.
\item Case 2:
$m_1$ is an action in $\Phi$. Then \UPOM's computation
(in lines \autoref{uct:eff} through the end of \autoref{alg:upom}) is {\em not} a UCT-style computation, but this does not matter, because there is only one possible choice, namely $m_1$. In this case, \UPOM's probability of choosing $m_1$ is $p_1 = 1$, and the same is true for the UCT search.
\end{LIST}
In both cases, it follows from the induction assumption that in $\Pi$, $\UPOM$'s probability of generating $\rho$ is $p_1 \times p_2 \times \ldots \times p_n$, and in $\Pi^\Psi$, UCT's probability of generating $\rho^\Psi$ is also $p_1 \times p_2 \times \ldots \times p_n$.
This concludes the sketch of the proof.
\end{proof}

\paragraph{Generalizing beyond MDPs}
If the utility function $U$ is not additive, \autoref{eq:mdp} produces a probabilistic planning problem that looks  similar to an MDP, the only difference being that the utility function $U^\Psi$ is not additive. 
Furthermore, \autoref{th:mapping} still holds even when $U$ is not additive, if we modify the proof to remove the claim that $\Psi$ is an MDP.

We note that the UCT algorithm \cite{kocsis2006bandit} is not restricted to the case where $U^\Psi$ is additive; it will still converge as long as $U^\Psi$ is monotonic. 
If $U$ is monotonic, then so is $U^\Psi$. In this case it follows that UCT---and thus $\UPOM$---will converge to an optimal solution. In particular, $\UPOM$ will converge to an optimal solution when using the {\em efficiency} and {\em success ratio} utility functions described in 
\autoref{sec:utilities}.

\section{Operational model for the \SR domain}
\label{app:rescue}

\begin{lstlisting}[language=Python,numbers=none]
declare_commands([
    moveEuclidean, # UGV moves from a location to another following a Euclidean path
    moveCurved, # UGV moves from a location to another following a curved path
    moveManhattan, # UGV moves from a location to another following a Manhattan path
    fly, # UAV flies from one location to another
    giveSupportToPerson, # UGV helps one person
    clearLocation, # UGV removes debri from a location
    inspectLocation, # UAV surveys a location, searching for injured people
    inspectPerson, # UGV checks whether a person is injured or not
    transfer, # one UGV transfers medical supplies to another
    replenishSupplies, # UGV replenishes medical supplies at the base
    captureImage, # UAV captures an image using on of its cameras
    changeAltitude, # UAV changes its flying altitude
    deadEnd, # the agent reaches a dead end from which it can't recover from
    fail # the command always fails
    ])

declare_task('moveTo', 'r', 'l') # robot r moves to location l
declare_task('rescue', 'r', 'p') # robot r rescues person p
declare_task('helpPerson', 'r', 'p') # robot r helps a person p 
declare_task('getSupplies', 'r') # UGV r refills its medical supplies from the base
declare_task('survey', 'r', 'l') # UAV r surveys the location l
declare_task('getRobot') # assigns a free robot to help a person
declare_task('adjustAltitude', 'r') # UAV r adjusts its flying altitude depending on the weather conditions



declare_methods('moveTo',  # Four possible methods for the \
    MoveTo_Method4, # task moveTo(r, l)
    MoveTo_Method3, 
    MoveTo_Method2, 
    MoveTo_Method1,
    )

declare_methods('rescue', # Two methods for the task rescue(r, p) 
    Rescue_Method1,
    Rescue_Method2,
    )

declare_methods('helpPerson', # Two methods for the \
    HelpPerson_Method2, # task helpPerson(r, p)
    HelpPerson_Method1, 
    )

declare_methods('getSupplies', # Two methods for the \
    GetSupplies_Method2, #  task getSupplies(r)
    GetSupplies_Method1,
    )

declare_methods('survey', # Two methods for the \
    Survey_Method1, # task survey(r, l)
    Survey_Method2
    )

declare_methods('getRobot', # Two methods for the \
    GetRobot_Method1, # task getRobot
    GetRobot_Method2,
    )

declare_methods('adjustAltitude', # Two methods for the \
    AdjustAltitude_Method1,  # task adjustAltitude(r)
    AdjustAltitude_Method2,
    )
    
# Full descriptions of the commands

def moveEuclidean(r, l1, l2, dist):
	''' UGV r moves from a location l1 to location l2  following a Euclidean path. '''
    (x1, y1) = l1
    (x2, y2) = l2
    xlow = min(x1, x2)
    xhigh = max(x1, x2)
    ylow = min(y1, y2)
    yhigh = max(y1, y2)
    # r checks whether there are any obstacles in the path
    for o in rv.OBSTACLES:    
        (ox, oy) = o
        if ox >= xlow and ox <= xhigh and oy >= ylow and oy <= yhigh:
            if ox == x1 or x2 == x1:
                Simulate("%s cannot move in Euclidean path because of obstacle\n" %r)
                return FAILURE
            elif abs((oy - y1)/(ox - x1) - (y2 - y1)/(x2 - x1)) <= 0.0001:
                Simulate("%s cannot move in Euclidean path because of obstacle\n" %r)
                return FAILURE

    state.loc.AcquireLock(r)
    if l1 == l2:
        Simulate("Robot %s is already at location %s\n" %(r, l2))
        res = SUCCESS
    elif state.loc[r] == l1:
        start = globalTimer.GetTime()
        while(globalTimer.IsCommandExecutionOver('moveEuclidean', start, r, l1, l2, dist) == False):
           pass
        res = Sense('moveEuclidean')
        if res == SUCCESS:
            Simulate("Robot %s has moved from %s to %s\n" %(r, str(l1), str(l2)))
            state.loc[r] = l2
        else:
            Simulate("Robot %s failed to move due to some internal failure.\n" %r)
    else:
        Simulate("Robot %s is not in location %d.\n" %(r, l1))
        res = FAILURE
    state.loc.ReleaseLock(r)
    return res

def moveCurved(r, l1, l2, dist):
	''' UGV r moves from a location l1 to l2 following a curved path'''
    (x1, y1) = l1
    (x2, y2) = l2
    centrex = (x1 + x2)/2
    centrey = (y1 + y2)/2
     # r checks whether there are any obstacles in the path
    for o in rv.OBSTACLES:
        (ox, oy) = o
        r2 = (x2 - centrex)*(x2 - centrex) + (y2 - centrey)*(y2 - centrey)
        ro = (ox - centrex)*(ox - centrex) + (oy - centrey)*(oy - centrey)  
        if abs(r2 - ro) <= 0.0001:
            Simulate("%s cannot move in curved path because of obstacle\n" %r)
            return FAILURE
    
    state.loc.AcquireLock(r)
    if l1 == l2:
        Simulate("Robot %s is already at location %s\n" %(r, l2))
        res = SUCCESS
    elif state.loc[r] == l1:
        start = globalTimer.GetTime()
        while(globalTimer.IsCommandExecutionOver('moveCurved', start, r, l1, l2, dist) == False):
           pass
        res = Sense('moveCurved')
        if res == SUCCESS:
            Simulate("Robot %s has moved from %s to %s\n" %(r, str(l1), str(l2)))
            state.loc[r] = l2
        else:
            Simulate("Robot %s failed to move due to some internal failure.\n" %r)
    else:
        Simulate("Robot %s is not in location %d.\n" %(r, l1))
        res = FAILURE
    state.loc.ReleaseLock(r)
    return res

def moveManhattan(r, l1, l2, dist):
	''' UGV r moves from a location l1 to l2 following a Manhattan path '''
    (x1, y1) = l1
    (x2, y2) = l2
    xlow = min(x1, x2)
    xhigh = max(x1, x2)
    ylow = min(y1, y2)
    yhigh = max(y1, y2)
    # r checks whether there are any obstacles in the path
    for o in rv.OBSTACLES:
        (ox, oy) = o
        if abs(oy - y1) <= 0.0001 and ox >= xlow and ox <= xhigh:
            Simulate("%s cannot move in Manhattan path because of obstacle\n" %r)
            return FAILURE

        if abs(ox - x2) <= 0.0001 and oy >= ylow and oy <= yhigh:
            Simulate("%s cannot move in Manhattan path because of obstacle\n" %r)
            return FAILURE

    state.loc.AcquireLock(r)
    if l1 == l2:
        Simulate("Robot %s is already at location %s\n" %(r, l2))
        res = SUCCESS
    elif state.loc[r] == l1:
        start = globalTimer.GetTime()
        while(globalTimer.IsCommandExecutionOver('moveManhattan', start, r, l1, l2, dist) == False):
           pass
        res = Sense('moveManhattan')
        if res == SUCCESS:
            Simulate("Robot %s has moved from %s to %s\n" %(r, str(l1), str(l2)))
            state.loc[r] = l2
        else:
            Simulate("Robot %s failed to move due to some internal failure.\n" %r)
    else:
        Simulate("Robot %s is not in location %d.\n" %(r, l1))
        res = FAILURE
    state.loc.ReleaseLock(r)
    return res

def fly(r, l1, l2):
	''' UAV r flies from one location l1 to another location l2'''
    state.loc.AcquireLock(r)
    if l1 == l2:
        Simulate("Robot %s is already at location %s\n" %(r, l2))
        res = SUCCESS
    elif state.loc[r] == l1:
        start = globalTimer.GetTime()
        while(globalTimer.IsCommandExecutionOver('fly', start) == False):
           pass
        res = Sense('fly')
        if res == SUCCESS:
            Simulate("Robot %s has flied from %s to %s\n" %(r, str(l1), str(l2)))
            state.loc[r] = l2
        else:
            Simulate("Robot %s failed to fly due to some internal failure.\n" %r)
    else:
        Simulate("Robot %s is not in location %d.\n" %(r, l1))
        res = FAILURE
    state.loc.ReleaseLock(r)
    return res

def inspectPerson(r, p):
	''' UGV r helps one person p'''
    Simulate("Robot %s is inspecting person %s \n" %(r, p))
    state.status[p] = env.realStatus[p]
    return SUCCESS

def giveSupportToPerson(r, p):
    if state.status[p] != 'dead':
        Simulate("Robot %s has saved person %s \n" %(r, p))
        state.status[p] = 'OK'
        env.realStatus[p] = 'OK'
        res = SUCCESS
    else:
        Simulate("Person %s is already dead \n" %(p))
        res = FAILURE
    return res

def inspectLocation(r, l):
 	''' UAV r surveys a location, searching for injured people'''
    Simulate("Robot %s is inspecting location %s \n" %(r, str(l)))
    state.status[l] = env.realStatus[l]
    return SUCCESS

def clearLocation(r, l):
	''' UGV r removes debri from a location l'''
    Simulate("Robot %s has cleared location %s \n" %(r, str(l)))
    state.status[l] = 'clear'
    env.realStatus[l] = 'clear'
    return SUCCESS

def replenishSupplies(r):
	''' UGV r replenishes medical supplies at the base'''
    state.hasMedicine.AcquireLock(r)
    if state.loc[r] == (1,1):
        state.hasMedicine[r] = 5
        Simulate("Robot %s has replenished supplies at the base.\n" %r)
        res = SUCCESS
    else:
        Simulate("Robot %s is not at the base.\n" %r)
        res = FAILURE

    state.hasMedicine.ReleaseLock(r)
    return res

def transfer(r1, r2):
	''' One UGV r1 transfers medical supplies to another UGV r2'''
    state.hasMedicine.AcquireLock(r1)
    state.hasMedicine.AcquireLock(r2)
    if state.loc[r1] == state.loc[r2]:
        if state.hasMedicine[r1] > 0:
            state.hasMedicine[r2] += 1
            state.hasMedicine[r1] -= 1
            Simulate("Robot %s has transferred medicine to %s.\n" %(r1, r2))
            res = SUCCESS
        else:
            Simulate("Robot %s does not have medicines.\n" %r1)
            res = FAILURE
    else:
        Simulate("Robots %s and %s are in different locations.\n" %(r1, r2))
        res = FAILURE
    state.hasMedicine.ReleaseLock(r2)
    state.hasMedicine.ReleaseLock(r1)
    return res

def captureImage(r, camera, l):
	''' UAV r captures an image using on of its cameras at location l'''
    img = Sense('captureImage', r, camera, l)

    state.currentImage.AcquireLock(r)
    state.currentImage[r] = img
    Simulate("UAV %s has captured image in location %s using %s\n" %(r, l, camera))
    state.currentImage.ReleaseLock(r)
    return SUCCESS

def changeAltitude(r, newAltitude):
	''' UAV r changes its flying altitude to newAltitude '''
    state.altitude.AcquireLock(r)
    if state.altitude[r] != newAltitude:
        res = Sense('changeAltitude')
        if res == SUCCESS:
            state.altitude[r] = newAltitude
            Simulate("UAV %s has changed altitude to %s\n" %(r, newAltitude))
        else:
            Simulate("UAV %s was not able to change altitude to %s\n" %(r, newAltitude))
    else:
        res = SUCCESS
        Simulate("UAV %s is already in %s altitude.\n" %(r, newAltitude))
    state.altitude.ReleaseLock(r)
    return res

def SR_GETDISTANCE_Euclidean(l0, l1):
'''Calculates the euclidean distance between two 2D points, l0 and l1'''
    (x0, y0) = l0
    (x1, y1) = l1
    return math.sqrt((x1 - x0)*(x1 - x0) + (y1 - y0)*(y1-y0))

def MoveTo_Method1(r, l): 
	# A wheeled UGV robot r takes the straight path to reach l1
    x = state.loc[r]
    if x == l:
        Simulate("Robot %s is already in location %s\n." %(r, l))
    elif state.robotType[r] == 'wheeled':
        dist = SR_GETDISTANCE_Euclidean(x, l)
        Simulate("Euclidean distance = %d " %dist)
        do_command(moveEuclidean, r, x, l, dist)
    else:
        do_command(fail)

def SR_GETDISTANCE_Manhattan(l0, l1):
 ''' Calculates the Manhattan distance between two 2D points, l0 and l1. '''
    (x1, y1) = l0
    (x2, y2) = l1
    return abs(x2 - x1) + abs(y2 - y1)

def MoveTo_Method2(r, l): 
	''' UGV r  takes a Manhattan path to location l '''
    x = state.loc[r]
    if x == l:
        Simulate("Robot %s is already in location %s\n." %(r, l))
    elif state.robotType[r] == 'wheeled':
        dist = SR_GETDISTANCE_Manhattan(x, l)
        Simulate("Manhattan distance = %d " %dist)
        do_command(moveManhattan, r, x, l, dist) 
    else:
        do_command(fail)

def SR_GETDISTANCE_Curved(l0, l1):
 ''' Calculates the curved distance between two 2D points, l0 and l1. '''
    diameter = SR_GETDISTANCE_Euclidean(l0, l1)
    return math.pi * diameter / 2

def MoveTo_Method3(r, l): 
	# UGV r  takes a curved path to reach location l
    x = state.loc[r]
    if x == l:
        Simulate("Robot %s is already in location %s\n." %(r, l))
    elif state.robotType[r] == 'wheeled':
        dist = SR_GETDISTANCE_Curved(x, l)
        Simulate("Curved distance = %d " %dist)
        do_command(moveCurved, r, x, l, dist) 
    else:
        do_command(fail)

def MoveTo_Method4(r, l):
	''' UAV r moves to location l '''
    x = state.loc[r]
    if x == l:
        Simulate("Robot %s is already in location %s\n." %(r, l))
    elif state.robotType[r] == 'uav':
        do_command(fly, r, x, l)
    else:
        do_command(fail)

def Rescue_Method1(r, p):
	''' A UGV r helps a person after procuring medical supplies. '''
    if state.robotType[r] != 'uav':
        if state.hasMedicine[r] == 0:
            do_task('getSupplies', r)
        do_task('helpPerson', r, p)
    else:
        do_command(fail)

def Rescue_Method2(r, p):
	''' A UAV r delegates the rescuing task to a free ground robot. '''
    if state.robotType[r] == 'uav':
        do_task('getRobot')
    r2 = state.newRobot[1]
    if r2 != None:
        if state.hasMedicine[r2] == 0:
            do_task('getSupplies', r2)
        do_task('helpPerson', r2, p)
        state.status[r2] = 'free'
    else:
        Simulate("No robot is free to help person %s\n" %p)
        do_command(fail)

def HelpPerson_Method1(r, p):
    # Robot r helps an injured person p
    do_task('moveTo', r, state.loc[p])
    do_command(inspectPerson, r, p)
    if state.status[p] == 'injured':
        do_command(giveSupportToPerson, r, p)
    else:
        do_command(fail)

def HelpPerson_Method2(r, p):
    # Robot r helps a person p trapped inside some debri but not injured
    do_task('moveTo', r, state.loc[p])
    do_command(inspectLocation, r, state.loc[r])
    if state.status[state.loc[r]] == 'hasDebri':
        do_command(clearLocation, r, state.loc[r]) 
    else:
        CheckResult(state.loc[p])
        do_command(fail)
        
def GetSupplies_Method1(r):
    # UGV r gets medical supplies from nearby robots
    r2 = None
    nearestDist = float("inf")
    for r1 in rv.WHEELEDROBOTS:
        if state.hasMedicine[r1] > 0:
            dist = SR_GETDISTANCE_Euclidean(state.loc[r], state.loc[r1])
            if dist < nearestDist:
                nearestDist = dist
                r2 = r1
    if r2 != None:
        do_task('moveTo', r, state.loc[r2])
        do_command(transfer, r2, r)

    else:
        do_command(fail)

def GetSupplies_Method2(r):
    # UGV r gets medical supplies from the base
    do_task('moveTo', r, (1,1))
    do_command(replenishSupplies, r)

def CheckResult(l):
	''' Function to check whether a person is saved or not after performing the rescue operations.'''
    p = env.realPerson[l]
    if p != None:
        if env.realStatus[p] == 'injured' or env.realStatus[p] == 'dead' or env.realStatus[l] == 'hasDebri':
            Simulate("Person in location %s failed to be saved.\n" %str(l))
            do_command(deadEnd, p)
            do_command(fail)

def Survey_Method1(r, l):
	''' UAV r surveys location l with the help of the front camera.'''
    if state.robotType[r] != 'uav':
        do_command(fail)

    do_task('adjustAltitude', r)

    do_command(captureImage, r, 'frontCamera', l)
    
    img = state.currentImage[r]
    position = img['loc']
    person = img['person']
    
    if person != None:
        do_task('rescue', r, person)

    CheckResult(l)

def Survey_Method2(r, l):
	''' UAV r surveys location l with the help of the bottom camera.'''
    if state.robotType[r] != 'uav':
        do_command(fail)
    
    do_task('adjustAltitude', r)
    do_command(captureImage, r, 'bottomCamera', l)
    
    img = state.currentImage[r]
    position = img['loc']
    person = img['person']
    if person != None:
        do_task('rescue', r, person)

    CheckResultl(l)

def GetRobot_Method1():
	''' Finds a free robot to do some rescue task '''
    dist = float("inf")
    robot = None
    for r in rv.WHEELEDROBOTS:
        if state.status[r] == 'free':
            if SR_GETDISTANCE_Euclidean(state.loc[r], (1,1)) < dist:
                robot = r
                dist = SR_GETDISTANCE_Euclidean(state.loc[r], (1,1))
    if robot == None:
        do_command(fail)
    else:
        state.status[robot] = 'busy'
        state.newRobot[1] = robot   

def GetRobot_Method2():
	''' Assigns the first robot from the list of UGVs to do a resue task '''
    state.newRobot[1] = rv.WHEELEDROBOTS[0]
    state.status[rv.WHEELEDROBOTS[0]] = 'busy'

def AdjustAltitude_Method1(r):
	''' Changes the altitude of an UAV r from high to low.'''
    if state.altitude[r] == 'high':
        do_command(changeAltitude, r, 'low')

def AdjustAltitude_Method2(r):
	''' Changes the altitude of an UAV r from low to high.'''
    if state.altitude[r] == 'low':
        do_command(changeAltitude, r, 'high')
        
# ONE PROBLEM INSTANCE INCLUDES THE FOLLOWING
# Rigid variables  
rv.WHEELEDROBOTS = ['w1', 'w2']
rv.DRONES = ['a1', 'a2']
rv.OBSTACLES = { (100, 100)}
    
# initial values state variables 
	state.loc = {'w1': (15,15), 'w2': (29,29), 'p1': (28,30), 'p2': (10,30), 'a1': (9,19), 'a2': (4,5)}
	state.hasMedicine = {'a1': 0, 'a2': 0, 'w1': 0, 'w2': 0}
	state.robotType = {'w1': 'wheeled', 'a1': 'uav', 'a2': 'uav', 'w2': 'wheeled'}
	state.status = {'w1': 'free', 'w2': 'free', 'a1': UNK, 'a2': UNK, 'p1': UNK, 'p2': UNK, (28,30): UNK, (15, 15): UNK, (10, 30): UNK}
	state.altitude = {'a1': 'high', 'a2': 'low'}
	state.currentImage = {'a1': None, 'a2': None}
	state.newRobot = {1: None}

	# properties of the environment 
	env.realStatus = {'w1': 'OK', 'p1': 'OK', 'p2': 'injured', 'w2': 'OK', 'a1': 'OK', 'a2': 'OK', (28, 30): 'hasDebri', (15, 15): 'clear', (10, 30): 'hasDebri'}
	env.realPerson = {(28,30): 'p1', (15, 15): None, (10, 30): 'p2'}
	env.weather = {(28,30): "foggy", (15, 15): "rainy", (10, 30): "dustStorm"}

# tasks to accomplish 
tasks = {
    8: [['survey', 'a1', (15,15)], ['survey', 'a2', (28,30)]],
    20: [['survey', 'a1', (10,30)]]
}
        
\end{lstlisting}

\newpage
\section{Table of Notation}
\label{app:notation}

\begin{center}
\begin{tabular}{  l l @{} c} 
		&		& Page\\
 Notation & Meaning & defined\\
 \hline
 $\Sigma = (\Xi, \mathcal{T, M, A})$ & an acting domain & \pageref{def:Sigma}\\
 $s$, $S$ &  predicted state, set of states for the planner & \pageref{def:xi}\\ 
 $X$ & set of state variables & \pageref{def:X}\\
 $\xi$, $\Xi$ &  actual state, set of world states for the actor & \pageref{def:xi}\\
 $\tau$, $\mathcal{T}$ & task or event, set of tasks and events & \pageref{def:tau}\\
 $m$, $\mathcal{M}$ & method/method instance, set of methods for $\mathcal{T}$ & \pageref{def:m}\\
 $\overline{\mathcal{M}}$ & set of method instances of $\mathcal{M}$ & \pageref{def:m}\\ 
 $m[i]$ & the $i$th step of $m$ & \pageref{def:mi}\\
 $\sv{Applicable}(\xi, \tau)$ & set of method instances applicable to $\tau$ in state $\xi$ & \pageref{def:applicable}\\
 $a$, $\mathcal{A}$ & action, set of actions & \pageref{def:a}\\ 
 $\gamma(\xi,a)$ & possible states after performing $a$ in $\xi$ & \pageref{def:gamma}\\
 $\sigma$ & refinement stack with tuples of the form $(\tau, m, i, tried)$ & \pageref{sec:rae}\\
 $v_e$, $v_s$ & value functions for efficiency and success ratio  & \pageref{eq:ve}\\ 
 $v_{e1} \oplus v_{e2}$ & cumulative efficiency value of two successive actions & \pageref{eq:eplus}\\
 $v_{s1} \oplus v_{s2}$ & cumulative success ratio of two successive actions & \pageref{eq:splus}\\
 $\one$ & the identify element for $\oplus$, i.e. $x \oplus \one = x$ & \pageref{def:one}\\
 $U(m, s, \sigma)$ & the utility of  $m$ for $\tau$ and $\sigma$ & \pageref{eq:um}\\
 $U^*(m, s, \sigma)$ & the maximal expected utility of $m$ for $\tau$ & \pageref{eq:um*}\\
 $\Us$, $\Uf$ & the utility of a success, the utility of a failure & \pageref{def:usuf}\\
 $m^*_{\tau,s}$ & the optimal method instance for $\tau$ in $s$ for utility $U^*$ & \pageref{eq:m2*}\\
 $d$, $d_{max}$, $n_{ro}$ & depth, max depth, number of rollouts & \pageref{def:depth}\\
 $h(\tau, m, s)$ & heuristic estimate to solve $\tau$ with $m$ in $s$ & \pageref{alg:oracle}\\
 $h_0$, $h_D$ & always returns $\infty$, hand written heuristic & \pageref{def:heur}\\
 $h_\lh$ & learned heuristic & \pageref{def:heur}\\
 $Q_{\sigma, s}(m)$ & approximation of $U^*(m, s, \sigma)$ & \pageref{def:Q}\\
 $C$ &  tradeoff between exploration and exploitation & \pageref{def:C}\\
 $\mu$, $K$ & suggested control parameters for $n_{ro}$ & \pageref{def:mu}\\
 $g(s,\tau, m)$ & default state after accomplishing $\tau$ with $m$ in $s$ & \pageref{def:g}\\
 $r$ & a data record of the form $((s, \tau),m)$ & \pageref{def:r}\\
 $w_s$, $w_\tau$, $w_m$, $w_u$ & One-Hot representations of $s$, $\tau$, $m$, and $interval(u)$ & \pageref{eq:encoding}\\
 $v_{un}$ & uninstantiated method parameter & \pageref{def:vun}\\ 
 $v_\tau$ & list of values of task parameters & \pageref{def:vun}\\
 $b$ & value of the parameter $v_{ui}$ & \pageref{def:vun}\\ 
 $V$ & number of state variables & \pageref{def:V}\\
 $nn_\pi$ & MLP for \lm & \pageref{def:nnpi}\\ 
 $nn_{v_{un}}$ & MLP for each $v_{un}$ & \pageref{def:nnvun}\\
 $Z$ & Number of training records & \pageref{def:Z}\\
 $\Phi = (S, \mathcal{T, M, A})$ & a refinement planning domain & \pageref{sec:plan}\\
 $P(s,a,s')$ & probability that $a$'s execution in $s$ returns $s'$ & \pageref{def:P}\\
 $\Pi = (\Phi,s_0,\sigma_0,U)$ & a refinement planning problem & \pageref{def:Pi}\\
 $\rho$ & a rollout in $\Phi$ & \pageref{def:rollout}\\
 $\mathcal{R}(\Phi)$ & the set of all refinement stacks that are reachable in $\Phi$ & \pageref{def:reach}\\
 $\reward(s)$, $\cost(s,a,s')$ & reward function, cost function & \pageref{def:reward}\\
 $\Psi$ & an MDP used in the appendix& \pageref{sec:mdp}\\

 \hline 
\end{tabular}
\end{center}

\end{document}